\documentclass{article}

\usepackage{arxiv}

\usepackage{amsmath, amsthm, amssymb, graphicx, hyperref, etoolbox, longtable, colonequals}


\makeatletter
\let\@nodottedtocline\@dottedtocline
\patchcmd{\@nodottedtocline}{\hbox{.}}{\hbox{}}{}{}
\patchcmd{\@nodottedtocline}{\normalcolor #5}{\normalcolor}{}{}
\newcommand*\l@sectionsubtitle{\@nodottedtocline{1}{0em}{1.5em}}
\makeatother


\usepackage{imakeidx}
\usepackage[all]{xy}

\makeindex[title=Index,columns=3]
\indexsetup{noclearpage}

\usepackage{amsmath}

\DeclareFontFamily{U}{dmjhira}{}
\DeclareFontShape{U}{dmjhira}{m}{n}{ <-> dmjhira }{}

\DeclareRobustCommand{\yo}{\text{\usefont{U}{dmjhira}{m}{n}\symbol{"48}}}

\usepackage{tikz}
\usetikzlibrary{calc,arrows.meta}


\theoremstyle{definition}


\newcommand\cofib\rightarrowtail

\newcommand\mdel[1]{}

\renewcommand\geq\geqslant
\renewcommand\leq\leqslant

\usepackage[utf8]{inputenc} 
\usepackage[T1]{fontenc}    
\usepackage{hyperref}       
\usepackage{url}            
\usepackage{booktabs}       
\usepackage{amsfonts}       
\usepackage{nicefrac}       
\usepackage{microtype}      
\usepackage{lipsum}		
\usepackage{graphicx}
\usepackage{natbib}
\usepackage{doi}
\usepackage{tikz-cd}
\usepackage{mathtools}

\usepackage{amsmath}
\usepackage{amssymb}
\usepackage{amsthm}

\newtheorem{theorem}{Theorem}
\newtheorem{definition}{Definition}
\newtheorem{lemma}{Lemma}
\newtheorem{example}{Example}

\usepackage[boxruled]{algorithm2e}
\usepackage{algorithmic}

\usepackage{multirow}
\usepackage{booktabs}
\usepackage{balance}
\usepackage{subfig}


\newcommand{\CI}{\mathrel{\perp\mspace{-10mu}\perp}}



\title{GAIA: Categorical Foundations of Generative AI\thanks{This is a preliminary draft of a forthcoming book. This draft may contain errors or omissions, and will be periodically updated. } }


\author{ Sridhar Mahadevan \\
	Adobe Research and University of Massachusetts, Amherst\\
	\texttt{smahadev@adobe.com, mahadeva@umass.edu}
}



\hypersetup{
pdftitle={A template for the arxiv style},
pdfsubject={q-bio.NC, q-bio.QM},
pdfauthor={David S.~Hippocampus, Elias D.~Striatum},
pdfkeywords={First keyword, Second keyword, More},
}

\begin{document}
\maketitle

\begin{abstract}
In this paper, we explore the categorical foundations of generative AI. Specifically, we investigate a  Generative AI Architecture (GAIA) that lies beyond backpropagation, the longstanding algorithmic workhorse of deep learning. Backpropagation is at its core a compositional framework for (un)supervised learning: it can be conceptualized as a sequence of modules, where each module updates its parameters based on information it receives from downstream modules, and in turn, transmits information back to upstream modules to guide their updates. GAIA is based on a fundamentally different {\em hierarchical model}. Modules in GAIA are organized into a simplicial complex. Each $n$-simplicial complex acts like a manager of a business unit: it receives updates from its superiors and transmits information back to its $n+1$ subsimplicial complexes that are its subordinates. To ensure this simplicial generative AI organization behaves coherently, GAIA builds on the  mathematics of the higher-order category theory of simplicial sets and objects. Computations in GAIA, from query answering to foundation model building, are posed in terms of lifting diagrams over simplicial objects.  The problem of machine learning in GAIA is modeled as ``horn" extensions of simplicial sets: each sub-simplicial complex tries to update its parameters in such a way that a lifting diagram is solved. Traditional approaches used in generative AI using backpropagation can be used to solve ``inner" horn extension problems, but addressing ``outer horn" extensions requires a more elaborate framework.

At the top level, GAIA uses the simplicial category of ordinal numbers with objects defined as $[n], n \geq 0$ and arrows defined as weakly order-preserving mappings $f: [n] \rightarrow [m]$, where $f(i) \leq f(j), i \leq j$. This top-level structure can be viewed as a combinatorial ``factory" for constructing, manipulating, and destructing complex objects that can be built out of modular components defined over categories. The second layer of GAIA defines the building blocks of generative AI models as universal coalgebras over categories that can be defined using current generative AI approaches, including Transformers that define a category of permutation-equivariant functions on vector spaces, structured state-space models that define a category over linear dynamical systems, or image diffusion models that define a probabilistic coalgebra over ordinary differential equations. The third layer in GAIA is a category of elements over a (relational) database that defines the data over which foundation models are built. GAIA formulates the machine learning problem of building foundation models as extending functors over categories, rather than interpolating functions on sets or spaces, which yields  canonical solutions called left and right Kan extensions.  GAIA uses the metric Yoneda Lemma to construct universal representers of objects in non-symmetric generalized metric spaces. GAIA uses a categorical integral calculus of (co)ends to define two families of  generative AI systems. GAIA models based on coends correspond to topological generative AI systems, whereas GAIA systems based on ends correspond to probabilistic generative AI systems. 
\end{abstract}

\keywords{Generative AI \and Foundation Models \and Higher-Order Category Theory \and Machine Learning}

\bigskip 

\newpage 

\tableofcontents

\newpage

\section{Overview of the Paper} 

\begin{figure}[h]
\centering
\includegraphics[scale=.25]{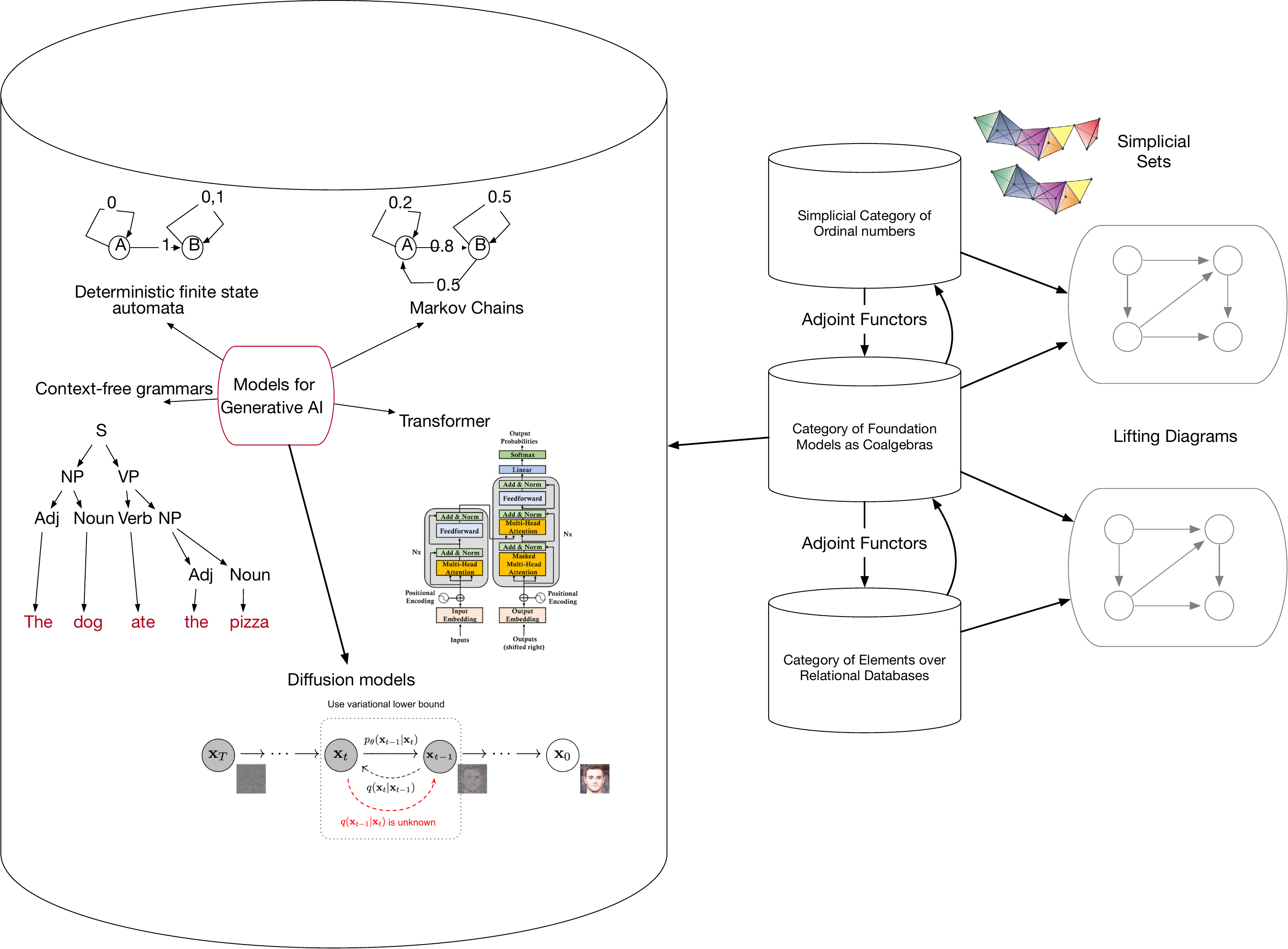}
\caption{We propose a hierarchical Generative AI Architecture (GAIA) using higher-order category theory. }  
\label{gaia} 
\end{figure}

Generative AI has become a dominant paradigm for building intelligent systems in the last few years, ranging from large language models developed with the widely used Transformer model \cite{DBLP:conf/nips/VaswaniSPUJGKP17}, or more recently with the structured state space sequence models \cite{DBLP:conf/iclr/GuGR22,yin2023onestep}, and with the growing use of image diffusion algorithms \cite{DBLP:conf/nips/SongE19,yin2023onestep}. We can broadly define the problem of generative AI as the construction, maintenance, and deployment of foundation models \cite{fm}, a storehouse of human knowledge that provides the basic infrastructure for AI across some set of applications. A fundamental question, therefore, to investigate is to study the mathematical basis for foundation models. We propose a mathematical framework for a Generative AI Architecture (GAIA) (see Figure~\ref{gaia}) based on the hypothesis that {\em category theory} \cite{maclane:71,riehl2017category,Lurie:higher-topos-theory} provides a universal mathematical language for foundation models. In particular, GAIA is based on {\em simplicial learning}, which is intended to generalize {\em compositional} learning frameworks based on well-established machine learning algorithms, such as backpropagation \cite{deeplearningreview-2009}. Category theory has been called a ``Rosetta Stone" \cite{Baez_2010}, as it provides a universal language for defining interactions among objects in all of mathematics, physics, computer science, and mathematical logic. Category theory has recently seen increasing use in machine learning, including dimensionality reduction \cite{umap} and clustering \cite{Carlsson2010}. One unique aspect of defining machine learning as extending functors in a category, in contrast to the well-established previous approach of extending functions over sets or spaces \cite{DBLP:journals/jmlr/WagstaffFEOP22},  is that there are two canonical solutions that emerge -- the left and right Kan extensions \cite{maclane:71} -- whereas there is no corresponding canonical solution to the problem of learning functions over sets.

\begin{figure}
    \centering
    \includegraphics[scale=0.35]{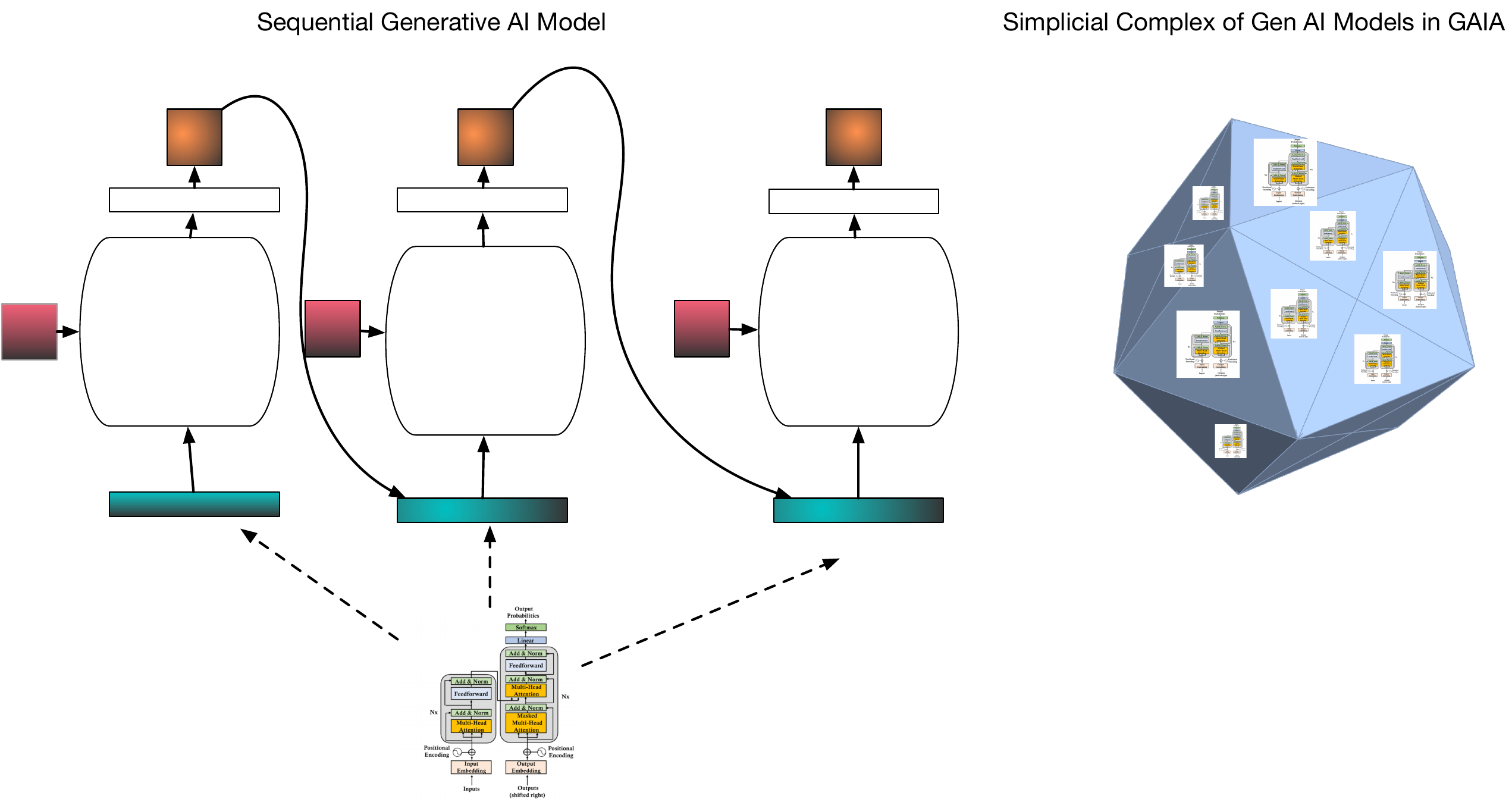}
    \caption{Traditional Generative AI models, such as Transformers, are based on a compositional sequential model. GAIA is based on a simplicial model, where each ``face" of the $n$-simplicial complex defines a generative model.}
    \label{transformersimplex}
\end{figure}

\begin{figure}
    \centering
    \includegraphics[scale=0.25]{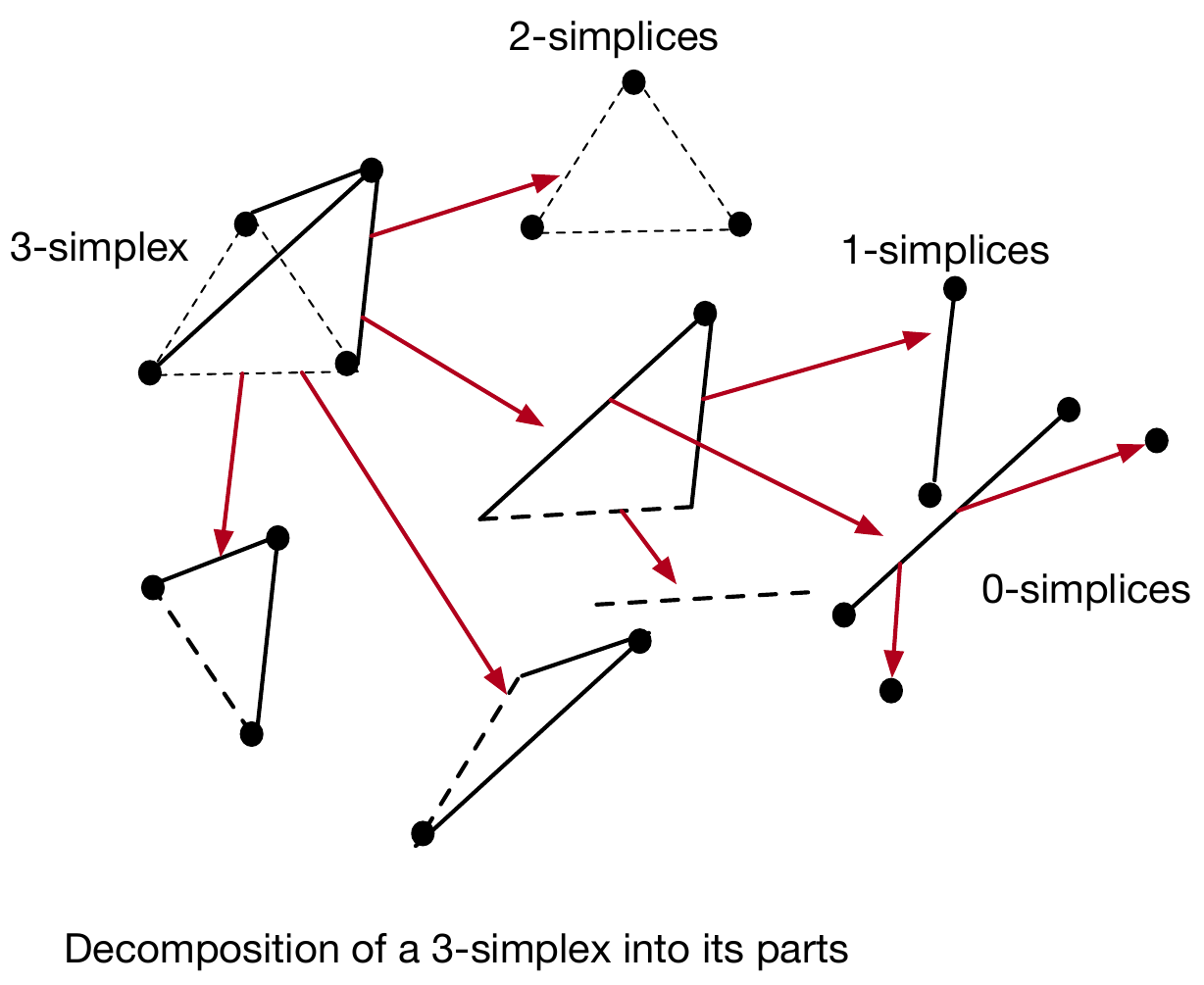}
    \caption{GAIA is based on a {\em hierarchical} framework, where each $n$-simplicial complex acts as a business unit in a company: each $n$-simplex updates its parameters based on data it receives from its superiors, and it transmits guidelines for its $n+1$ sub-simplicial complexes to help them with their updates. The mathematics for this hierarchical framework is based on higher-order category theory of simplicial sets and objects.}
    \label{3simplex}
\end{figure}

Current generative AI systems are built on the longstanding algorithmic workhorse of backpropagation \cite{deeplearningreview-2009}. Backpropagation is fundamentally a compositional sequential framework, where each module updates its parameters based on information it gets from its downstream modules, and in turn, transmits information back to upstream modules. \cite{DBLP:conf/lics/FongST19} propose a categorical framework for backpropagation, which we review in Section~\ref{backprop}.   In this paper, we will generalize this category-theoretic formalization of neural networks  in several ways. As Figure~\ref{transformersimplex} illustrates, unlike traditional generative AI models, such as those developed with sequence models like Transformers or structured state space sequence models, is not sequential, but rather {\em simplicial}. Each ``face" of the $n$-simplex defines a ``local" Transformer model (or indeed, any type of machine learning model), which are then ``stitched" together into the whole structure using the mathematics of higher-order category theory of simplicial objects and sets \cite{may1992simplicial}.

Figure~\ref{backprop-functor} illustrates a crucial conceptual perspective that forms the basis for the design of GAIA. A generative model is, first and foremost, an {\em algebraic structure} of some kind. It may be a sequence, a directed graph, or as in our case, a simplicial complex. This specification is akin to specifying the ``skeleton" of the generative AI model. To give the skeleton some ``flesh and blood", it is necessary to map it into a parameter space (e.g., Euclidean space), through a suitable functor. The actual process of building a foundation model occurs through implementing a learning method, such as backpropagation,  which \cite{DBLP:conf/lics/FongST19} model as a  functor that maps from the space of parameters into the category of learners. A crucial difference in our approach is that we model backpropagation not just as a functor, but rather as an endofunctor on the category of parameters, as it must eventually result in a new set of parameters (see Figure~\ref{backpropendofunctor}. This important difference in our approach makes it possible to apply the rich theory of universal coalgebras over endofunctors \cite{jacobs:book,rutten2000universal} to analyze generative AI methods. We describe in detail in Section~\ref{backprop} one specific instantiation of this general perspective proposed by \cite{DBLP:conf/lics/FongST19}. In their case, the algebraic structure is a symmetric monoidal category that defines the skeleton. Their parameter space is Euclidean space, and their category of learners is defined by compositional learning using backpropagation. In GAIA, we make a more sophisticated framework, where the algebraic structure is a simplicial set or category, the parameter space may be a {\em sheaf} in a {\em topos} \cite{maclane:sheaves}, and the category of learners is defined as horn extensions in a simplicial set. 

\begin{figure}[h] 
\includegraphics[scale=0.5]{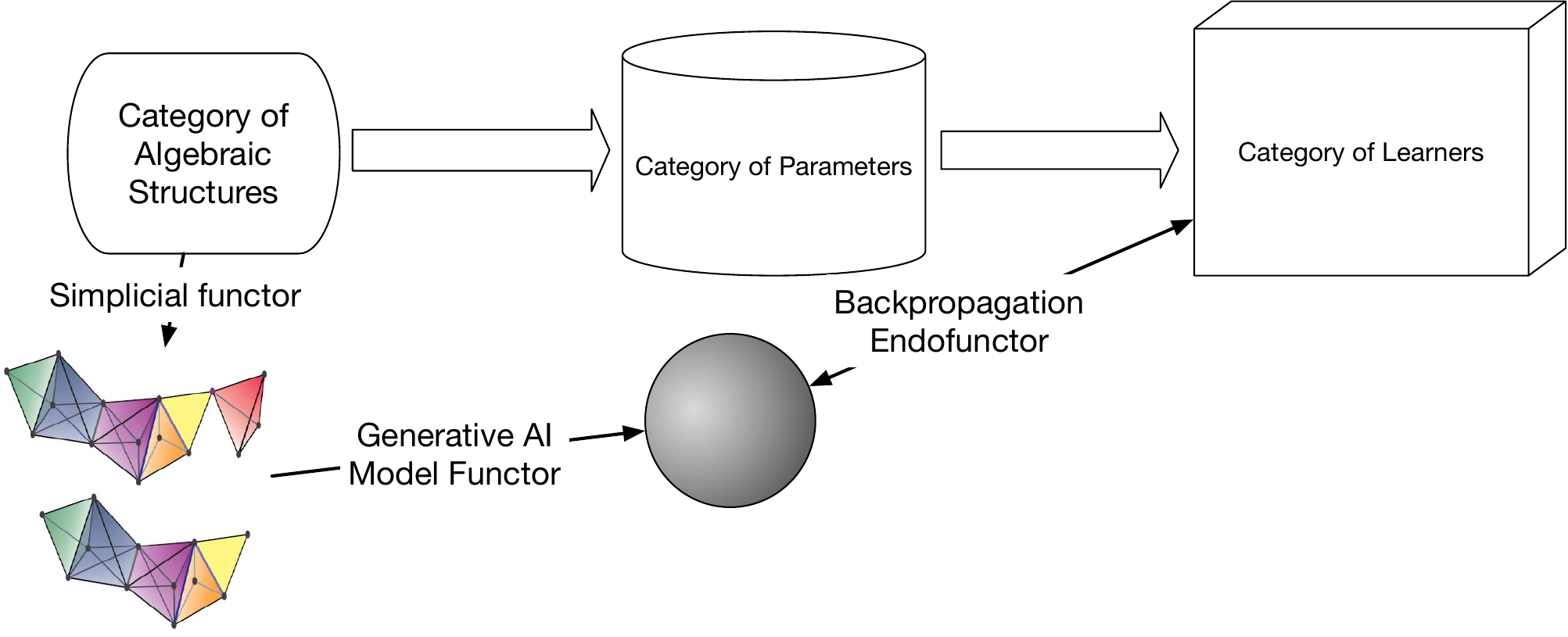}
\caption{Crucial to the GAIA framework is understanding the separation between the algebraic structure of a generative AI model, and the parameter space over which the model is defined, and how specific machine learning algorithms such as backpropagation can be viewed as functors. \cite{DBLP:conf/lics/FongST19} defined backpropagation as a functor as shown from the category {\tt Param} of parameters to the category {\tt Learn} of machine learners. Crucially, GAIA models backpropagation as an {\em endofunctor} from the category {\tt Param} back to itself, because every morphism in {\tt Learn} must result in an update of the parameters of the network, thus resulting in a new object in {\tt Param}. Thus, in this paper, we ``close the loop", opening the rich theory of universal coalgebras defined by endofunctors \cite{jacobs:book,rutten2000universal} to analyze generative AI methods, such as backpropagation.   \label{backprop-functor}}
 \end{figure} 

\begin{figure}
    \centering
       \includegraphics[scale=0.3]{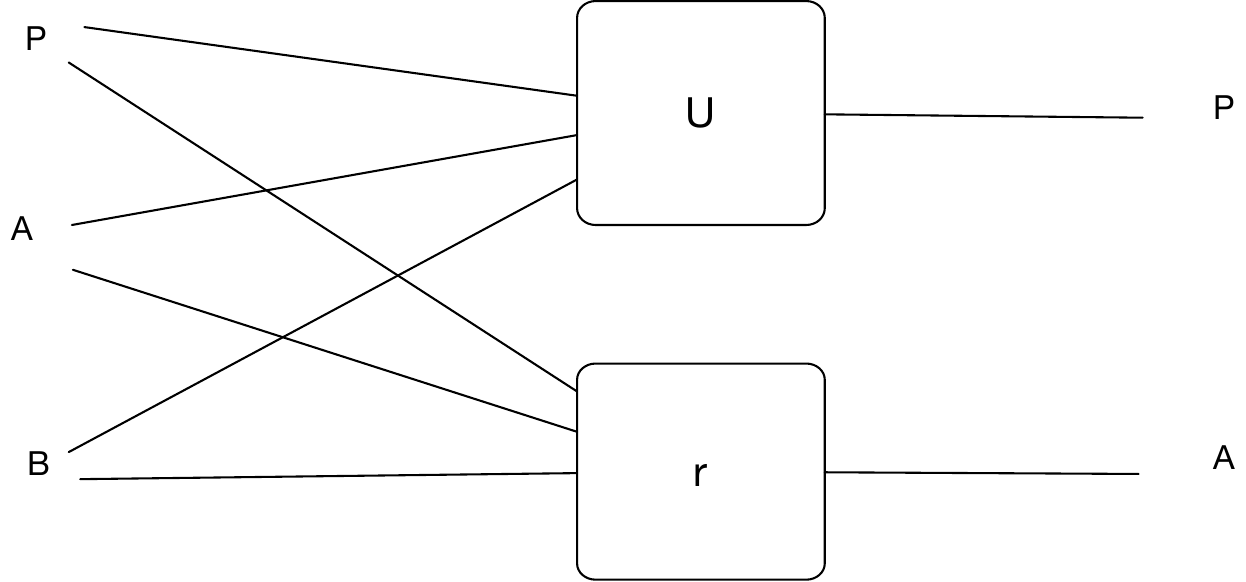}
    \includegraphics[scale=0.3]{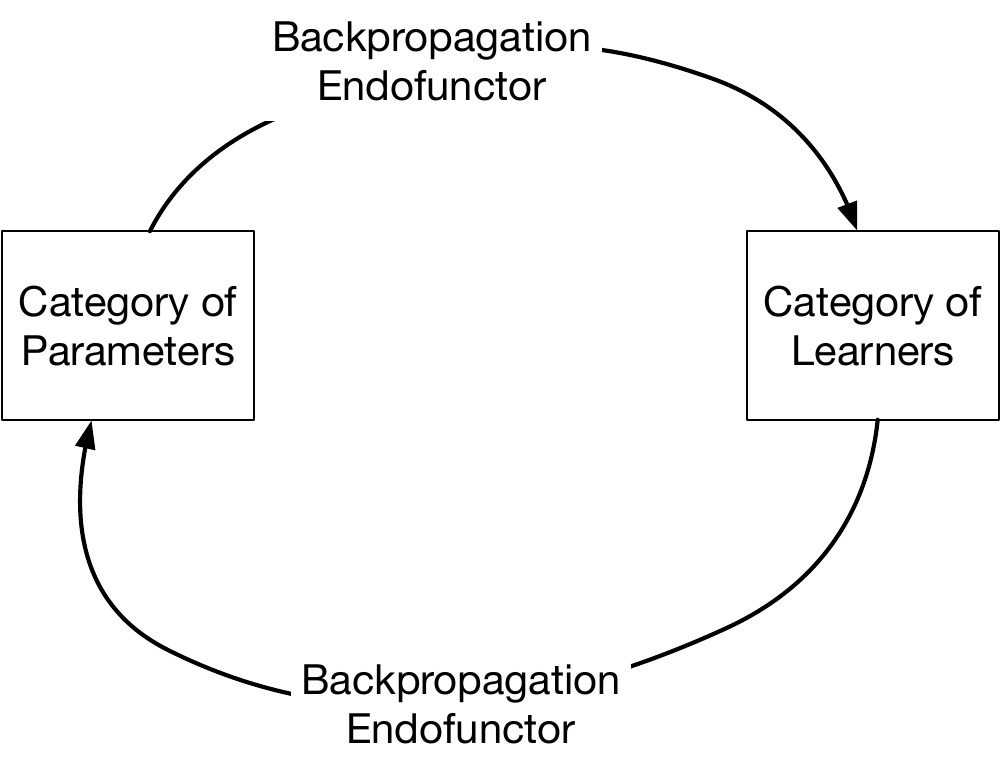}
    \caption{Left: In the categorical framework for deep learning proposed by \cite{DBLP:conf/lics/FongST19}, a learner is a morphism in the category {\tt Learn} that acts sequentially on its input $A$ to produce an output $B$, updating its parameters $P$, and sending back a request $A$ to an upstream module that represents ``backpropagation".  In GAIA, we view backpropagation as a {\em coalgebra} \cite{rutten2000universal}, defined by an endofunctor on the category of parameters, so that each step of backpropagation is modeled as a dynamical system that maps some parameter object into a new parameter object. }
    \label{backpropendofunctor}
\end{figure}

GAIA is based on a hierarchical organization, much like the business units in a company. An $n$-simplex defines a collection of $n+1$ sub-simplicial sets (or object), and each $n$-simplex computes some function based on a set of parameters, which it updates based on information it receives from its superiors. It then transmits guidelines to its $n+1$ subordinate sub-simplicial sets on how to update their parameters.  We use the simplicial category $\Delta$ at the top layer of GAIA to define not just sequences of morphisms, each representing a layer of a generative AI network, but simplicial complexes of them. One way to understand the connection between categories and simplicial sets is through the {\em nerve} functor that maps sequences of morphisms -- for example, each representing a Transformer block or a diffusion image generation step -- into a simplicial set. The $n$-simplices are defined by sequences of composable morphisms of length $n$. It can be shown that the nerve functor is a full and faithful functor that fully captures the category structure as a simplicial set \cite{Lurie:higher-topos-theory}. However, the left adjoint of the nerve functor is a ``lossy" inverse, in that it only preserves structure for $n$-simplices, where $n \leq 2$.  GAIA defines generative AI over $n$-simplicial complexes that allow more complex interactions among them than that which can be modeled by compositional learning frameworks, such as backpropagation.  Simplicial sets are defined as a graded set $S_n, n \geq 0$, where $S_0$ represents ``objects", $S_1$ represent morphisms (as in \cite{DBLP:conf/lics/FongST19}), $S_2$ define triangles of composable morphisms that have to be filled in different ways, and constitute ``inner" and ``outer" horn extension problems \cite{Lurie:higher-topos-theory}. In summary, GAIA generalizes the category-theoretic framework for deep learning in \cite{DBLP:conf/lics/FongST19} to a higher-order category of simplicial sets and objects.  Figure~\ref{simplicialgenAI} illustrates the simplicial generative AI vision underlying GAIA. 

\begin{figure}
    \centering
    \includegraphics[scale=0.35]{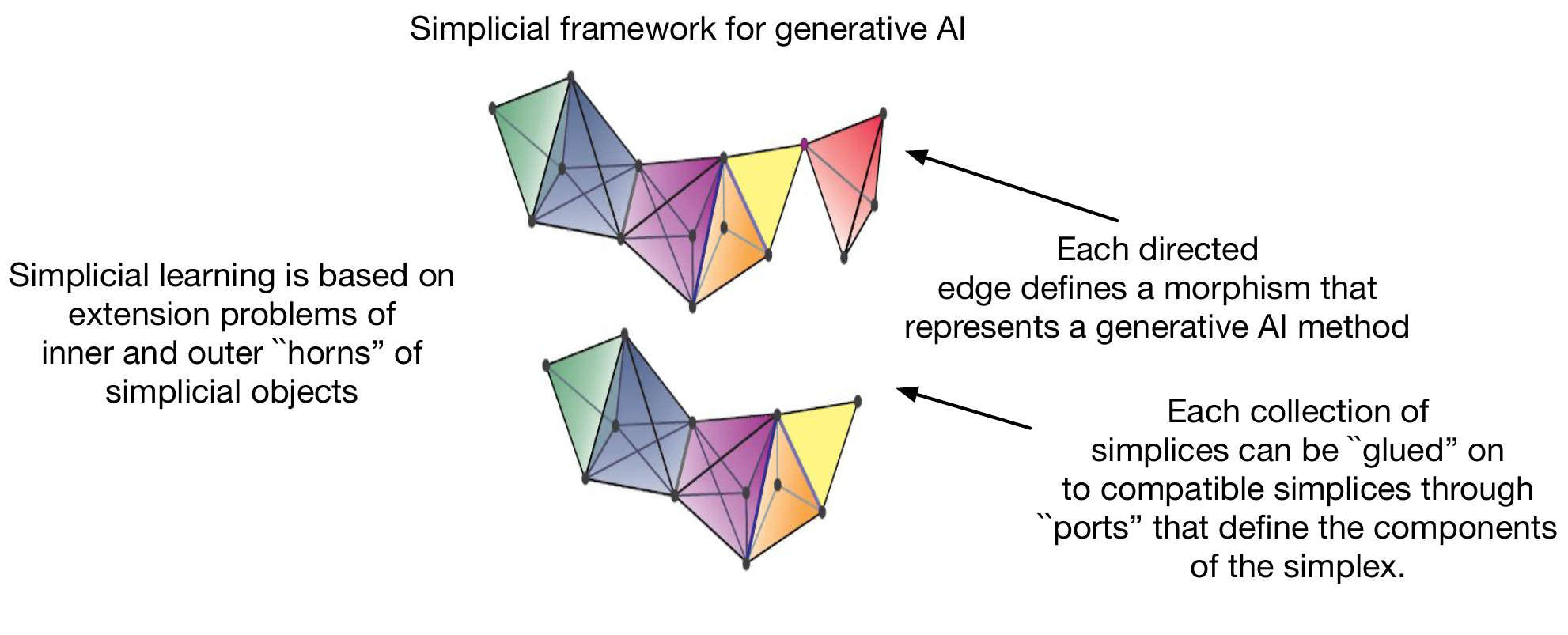}
    \caption{GAIA is based on a simplicial framework, where each generative AI method is modeled as a morphism that maps between two objects. In the simplest case of compositional learning, a $1$-simplex is defined as an ``edge", where its beginning and ending ``vertices" represent data that flows into and out of a generative AI model, such as a Transformer, or a structured state space sequence model, or a diffusion process. Backpropagation can be used to solve compositional learning problems over such sequences of ``edge" building blocks. GAIA generalizes this paradigm to define ``higher-order" simplicial objects where the interfaces between simplices can be more elaborate. Each $n$-simplex is comprised of a family of $n-1$ subsimplicial objects, each of which can be ``glued" together to form the $n$-simplex.}
    \label{simplicialgenAI}
\end{figure}

Backpropagation can solve a wide range of generative AI problems defined as supervised learning problems, where the task is to infer an unknown function $f: A \rightarrow C$ from samples $(a, f(a))$, where $f$ is constructed as a sequential composition of building block unknown functions that represent intermediate targets such as $f \simeq g \circ h$, and $h: A \rightarrow B$, and $g: B \rightarrow C$. Such problems are modeled in GAIA as ``inner horn" extensions of simplicial objects. GAIA is able to formulate ``outer horn" extension problems, such as inferring unknown functions $f: B \rightarrow C$ from samples of  unknown functions $g: A \rightarrow B$ and $h: A \rightarrow C$, or infer unknown functions $f: A \rightarrow B$ given samples of  unknown functions $g: A \rightarrow C$ and $h: B \rightarrow C$, which lie outside the scope of sequential compositional methods like backpropagation. One example of a setting where both inner and outer horn extensions are solvable are in Kan complexes. 

We can define the class of ``horn extensions" of simplicial complexes, where each morphism might represent a generative AI morphism (such as in the category {\tt Learn} considered by \cite{DBLP:conf/lics/FongST19}), which is essentially all the ways of composing $1$-dimensional simplices  to form a $2$-dimensional simplicial object. Each simplicial subset of an $n$-simplex induces a  a {\em horn} $\Lambda^n_k$, where  $ 0 \leq k \leq n$. Intuitively, a horn $\Lambda^n_k$ is a subset of a simplicial object that results from removing the interior of the $n$-simplex and the face opposite the $k$th vertex. Consider the three horns defined below. The dashed arrow  $\dashrightarrow$ indicates edges of the $2$-simplex $\Delta^2$ not contained in the horns. 
\begin{center}
 \begin{tikzcd}[column sep=small]
& \{0\}  \arrow[dl] \arrow[dr] & \\
  \{1 \} \arrow[rr, dashed] &                         & \{ 2 \} 
\end{tikzcd} \hskip 0.5 in 
 \begin{tikzcd}[column sep=small]
& \{0\}  \arrow[dl] \arrow[dr, dashed] & \\
  \{1 \} \arrow{rr} &                         & \{ 2 \} 
\end{tikzcd} \hskip 0.5in 
 \begin{tikzcd}[column sep=small]
& \{0\}  \arrow[dl, dashed] \arrow[dr] & \\
  \{1 \} \arrow{rr} &                         & \{ 2 \} 
\end{tikzcd}
\end{center}

The inner horn $\Lambda^2_1$ is the middle diagram above, and admits an easy solution to the ``horn filling'' problem of composing the simplicial subsets. In defining a compositional category for neural networks and supervised learning, \cite{DBLP:conf/lics/FongST19} only consider ``inner horn" extension problems defined as how to compose two morphism in the category {\tt Learn}. In other words, if $f: A \rightarrow B$ and $g: B \rightarrow C$ are two functions to be learned from a database of samples, their framework works out the updates to the composition $g \circ f: A \rightarrow C$. 

The two outer horns on either end pose a more difficult challenge. For example, filling the outer horn $\Lambda^2_0$ when the morphism between $\{0 \}$ and $\{1 \}$ is $f$ and that between $\{0 \}$ and $\{2 \}$ is the identity ${\bf 1}$ is tantamount to finding the left inverse of $f$ up to homotopy. Dually, in this case, filling the outer horn $\Lambda^2_2$ is tantamount to finding the right inverse of $f$ up to homotopy. A considerable elaboration of the theoretical machinery in category theory is required to describe the various solutions proposed, which led to different ways of defining higher-order category theory \cite{weakkan,quasicats,Lurie:higher-topos-theory}. 

As examples, consider a large language model (LLM) that is trained to output programs based on textual inputs. For example, given data corresponding to possible textual descriptions of programs and actual code, a GitHub Copilot can generate programs from textual inputs. An outer horn problem for this case would be to infer an unknown function between two generated sample programs from their textual prompts as inputs. Similarly, for a generative AI program that produces images by diffusion from textual inputs, the outer horn problem might correspond to learning an unknown function between two generated images. 

If we assume that the simplicial complex is a Kan complex \cite{kan}, all horn extensions can be solved, which intuitively can be understood as implying that the outer horn extension problems can be turned into inner horn extensions. So, for example, we can solve the outer horn problem defined by the first diagram on the left above by assuming that the morphism $f: [0] \rightarrow [1]$ has an inverse $f^{-1}: [1] \rightarrow [0]$, and hence turn the outer horn into an inner horn problem. Similarly, for the outer horn problem on the right hand side of the above diagram, we can assume that morphism $f: [0] \rightarrow [2]$ has an inverse $f^{-1}: [2] \rightarrow [0]$ that converts it back into an inner horn extension problem.

Note that neural networks that are trained through backpropagation are inherently {\em directional}: there is a well-defined notion of an input and an output over which forward and backwards propagation occurs. In essence, what outer horn extension problems imply is that if there exists a solution to a problem of inferring an unknown function $f: A \rightarrow B$ from samples $(a, f(a)) \in A \times B$, does that imply a solution to the problem of inferring an inverse function $f^{-1}: B \rightarrow A$? Lastly, it must be noted that horn extensions can be more complex than the simple $2$-simplex case described above. In general, in a lifting diagram, defined below, we are asking if a solution exists to an arbitrary lifting problem in a certain category? 

We define the update process through lifting diagrams from algebraic topology \cite{lifting} as a unifying framework, from answering queries to building foundation models. A lifting diagram defines constraints between different paths that lead from one category to another.  They have been used to formulate queries in relational databases \cite{SPIVAK_2013}. In our previous work, we used lifting diagrams to define queries for causal inference  \cite{DBLP:journals/entropy/Mahadevan23}. Lifting problems define ways of decomposing structures into simpler pieces, and putting them back together again, and thus play a central role in GAIA (see Figure~\ref{topologylift}. 
\begin{figure}
    \centering
    \includegraphics[scale=0.3]{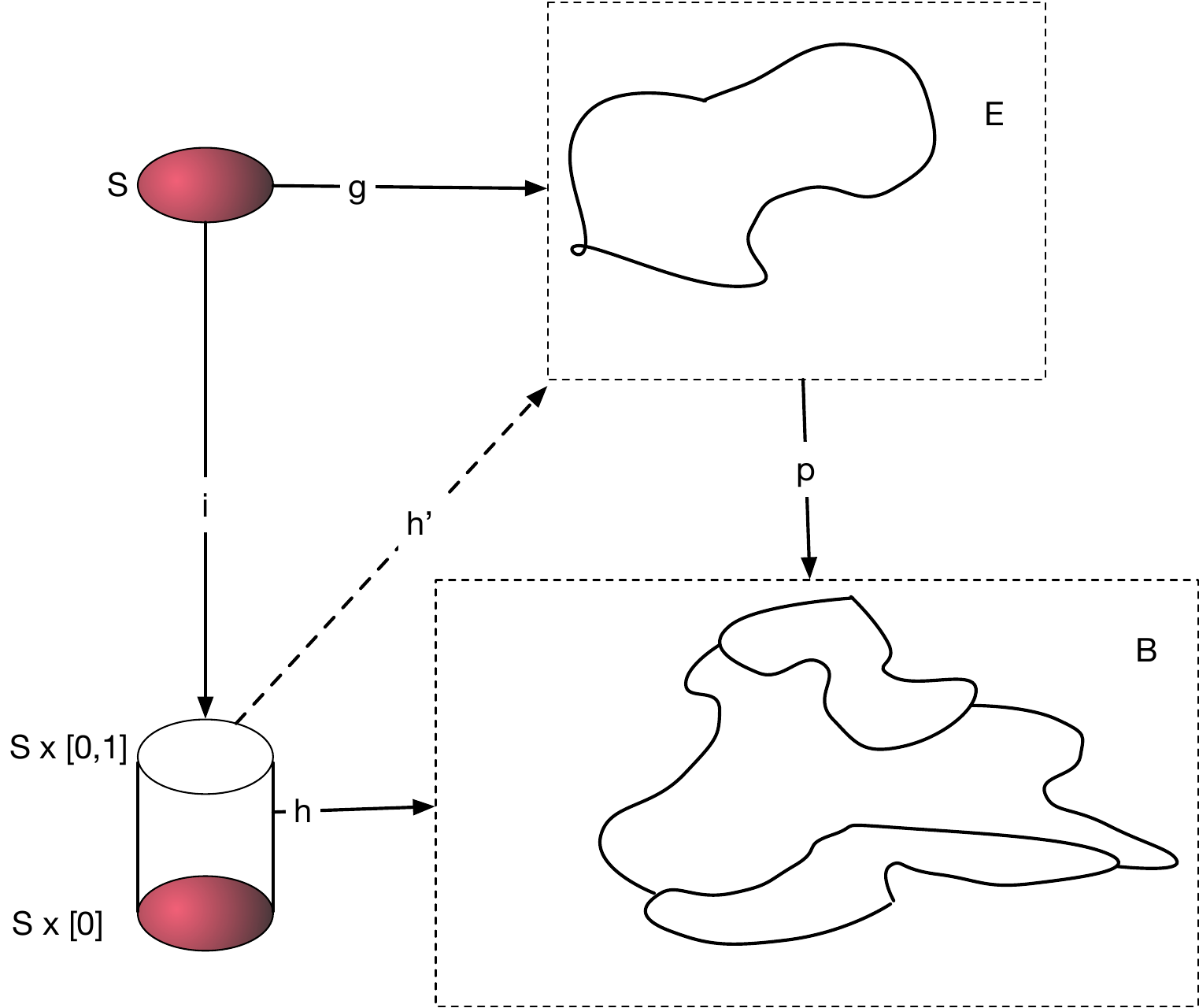}
    \caption{Lifting diagrams were originally studied in algebraic topology \cite{lifting}, and provide a concise way to define diverse computational problems in GAIA. A map $p: E \rightarrow B$ is called a {\em fibration} if and only if for any maps $h$ and $g$ that make this diagram ``commute", there exists a diagonal map $h'$ that makes the whole diagram commute. Fibrations have been used to formalize SQL queries in relational databases \cite{SPIVAK_2013} and causal inference \cite{DBLP:journals/entropy/Mahadevan23}, and are central to homotopy theory in higher-order category theory. Many universal approximability results in deep learning \cite{yarotsky, DBLP:journals/jmlr/WagstaffFEOP22,DBLP:conf/iclr/YunBRRK20} can be phrased in terms of lifting diagrams.}
    \label{topologylift}
\end{figure}
 
 \begin{definition}
 Let ${\cal C}$ be a category. A {\bf {lifting problem}} in ${\cal C}$ is a commutative diagram $\sigma$ in ${\cal C}$. 
 \begin{center}
 \begin{tikzcd}
  A \arrow{d}{f} \arrow{r}{\mu}
    & X \arrow[]{d}{p} \\
  B  \arrow[]{r}[]{\nu}
&Y \end{tikzcd}
 \end{center} 
 \end{definition}

 To understand the meaning of such a diagram for generative AI, let us consider the setting where every edge in the above commutative diagram represents an instance of a Transformer module that maps a object $x \in \mathbb{R}^{d \times n}$ into another using a permutation-equivariant mapping. \cite{DBLP:conf/iclr/YunBRRK20} show that Transformers compute permutation-equivariant functions and are nonetheless universal approximators in the space of all continuous functions on $\mathbb{R}^{d \times n}$ due to their reliance on absolute positional encoding \cite{DBLP:conf/nips/VaswaniSPUJGKP17} to overcome the limitations imposed by permutation equivariance.  Permutation-equivariant functions are defined as $f: \mathbb{R}^{d \times n} \rightarrow \mathbb{R}^{d \times n}$ such that $f(XP) = f(X) P$ for any $X \in \mathbb{R}^{d \times n}$ where $P$ is a permutation matrix. It is straightforward to define a category of Transformer models, where the objects are vectors $X \in \mathbb{R}^{d \times n}$ and the arrows are permutation-invariant mappings. Similarly, diffusion models used in image generation \cite{DBLP:conf/nips/SongE19} can be viewed as a category of stochastic dynamical systems that can be viewed as probabilistic coalgebras \cite{SOKOLOVA20115095}. \cite{rutten2000universal} and \cite{jacobs:book} show that a wide class of dynamical systems used in computer science can be expressed as categories of universal coalgebras. With this context, asking for solutions to lifting problems is posing a question on the representational adequacy of a framework for generative AI. 
 
 \begin{definition}
 Let ${\cal C}$ be a category. A {\bf {solution to a lifting problem}} in ${\cal C}$ is a morphism $h: B \rightarrow X$ in ${\cal C}$ satisfying $p \circ h = \nu$ and $h \circ f = \mu$ as indicated in the diagram below. 
 \begin{center}
 \begin{tikzcd}
  A \arrow{d}{f} \arrow{r}{\mu}
    & X \arrow[]{d}{p} \\
  B \arrow[ur,dashed, "h"] \arrow[]{r}[]{\nu}
&Y \end{tikzcd}
 \end{center} 
 \end{definition}

Although lifting diagrams have been proposed for deep learning architectures recently \cite{papillon2023architectures}, it is important to stress that the notion of simplicial complex used in this paper, as well as other notions used in computer graphics  is fundamentally different from our paper. In our case, simplicial complexes are directional, since each edge is defined by a directional arrow, not an undirected arrow. The edges correspond to morphisms, and indeed, a basic question that we will ask is under what circumstances can directional morphisms be inverted. This question is fundamental to solving extension problems, and in really nice settings like Kan complexes, all morphisms can be inverted since all inner and outer horn extension problems have solutions. 

\begin{figure}
    \centering
    \includegraphics[scale=0.4]{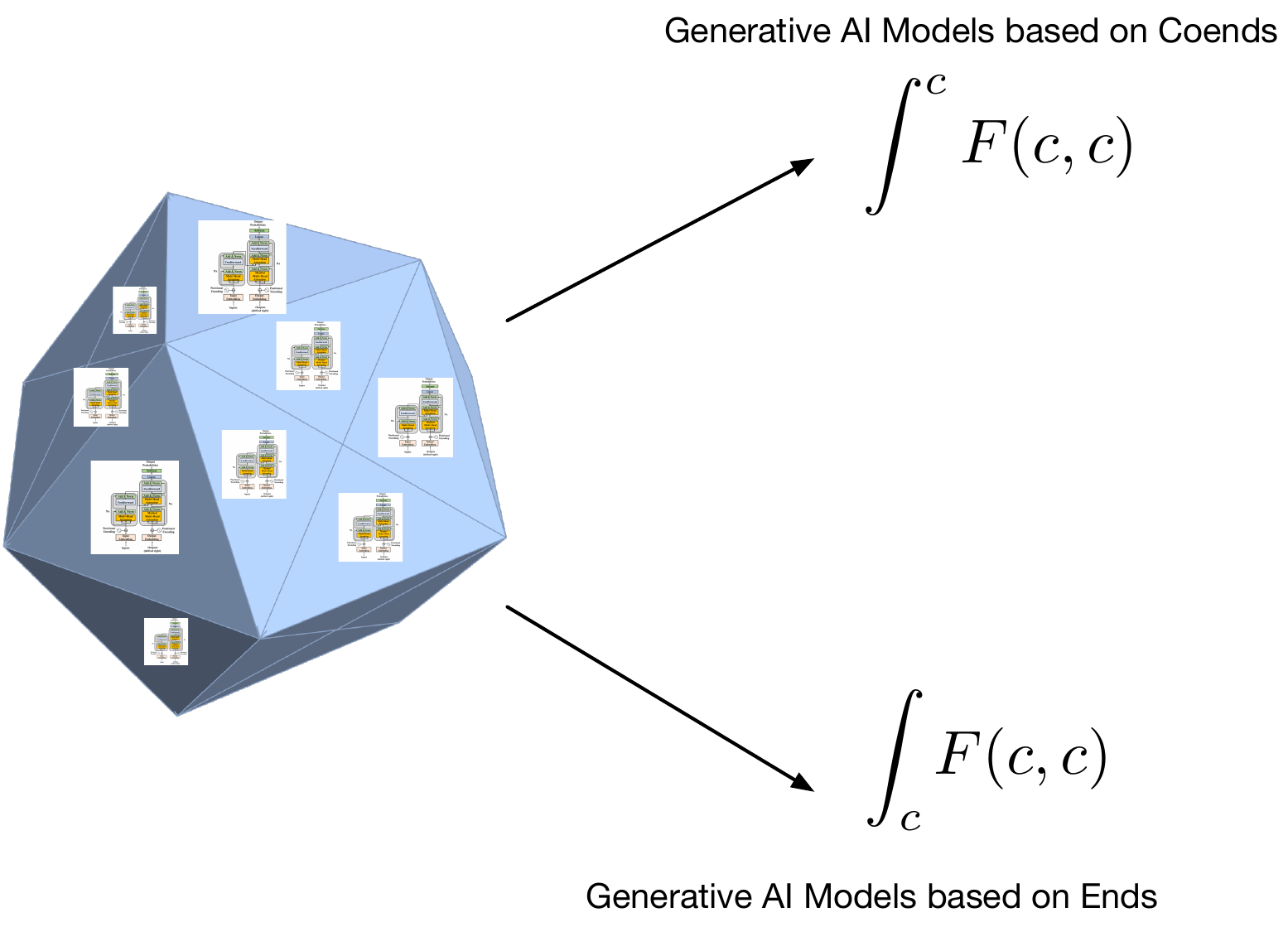}
    \caption{We propose two families of GAIA models in this paper (see Section~\ref{coend} for details),  based on coends and ends \cite{loregian_2021}. In this diagram, the bifunctor $F \in \mbox{Cat}({\cal C}^{op} \times {\cal C}, {\cal D})$ acts both contravariantly and covariantly on objects in the category ${\cal C}$. Coend and end objects correspond to objects in the category ${\cal D}$. Coend GAIA models are based on topological realizations of the simplicial model, whereas end GAIA models are based on probabilistic generative models.} 
    \label{fig:enter-label}
\end{figure}

In the literature on higher-order category theory \cite{quasicats,Lurie:higher-topos-theory,kerodon} and homotopy theory \cite{gabriel1967calculus,richter2020categories,Quillen:1967}, lifting diagrams were used to define structures such as Kan complexes that possess nice extension properties. For example, for any given topological space $X$, there is a functor that defines a simplicial set $\mbox{Sing}_\bullet(X)$ defined by all continuous functions from the topological simplex $\Delta_n$ into the space $X$. The simplicial set $\mbox{Sing}_\bullet(X)$ is a Kan complex because all ``horn extensions" of simplicial subsets can be solved. In practical terms, this property explains the success of dimensionality reduction methods like UMAP \cite{umap}, which constructs functors from simplicial sets that represent data into topological spaces. The topological realization of simplicial sets computed by UMAP is an example of a coend, a unifying ``integral calculus" proposed originally by Yoneda \cite{yoneda-end}. \cite{loregian_2021} gives a detailed description of the integral calculus of (co)ends, which provides a unifying way to design an entire spectrum of generative AI systems, where systems based on ends leads to probabilistic generative models like Transformers, whereas models based on coends lead to topologically based generative AI systems that have not been explored in the literature. 

In this paper, we use the framework of lifting diagrams to both formulate queries as shown in Figure~\ref{gaia}, but also to define the problem of building foundation models from data. In particular, we build on the framework of simplicial sets and objects, where we pose lifting diagram queries as solving ``horn extension" problems \cite{Lurie:higher-topos-theory}. Formally, we pose the problem of learning in a generative AI system in terms of properties such as Kan complexes \cite{may1992simplicial}, which are ideal categories to solve lifting problems since there is a unique solution to all extension problems (inner and outer horn extensions). As a concrete example, every topological space can be mapped into a simplicial set using the singular functor, which was used in the UMAP \cite{umap} dimensionality reduction method, which can be shown to form a Kan complex. 

Beginning at the top layer, GAIA uses the simplicial category of ordinal numbers \cite{may1992simplicial} as a way to build, manipulate, and destroy compositional structures. The category of ordinal numbers $\Delta$ includes a collection of objects indexed by the ordinals $[n], n \geq 0$, where $[n] = (0, 1, \ldots, n)$ under the natural ordering $<$. The morphisms of $\Delta$ are order-preserving functions $f: [n] \rightarrow [m]$ where $f(i) \leq f(j), i \leq j, i, j \in [n]$. The category $\Delta$ has provided the basis for a combinatorial approach to topology and also serves as the basis for higher-order category theory \cite{quasicats,Lurie:higher-topos-theory,gabriel1967calculus}. This category ``comes to life" when it is functorially mapped to some other category, such as {\bf Sets}, when the resulting structure is called a simplicial set. The contravariant functor $X: \Delta^{op} \rightarrow {\bf Sets}$ is defined by viewing $X([0])$ as a set of ``objects", $X[1])$ as a set of ``arrows" representing pairwise interactions among the objects, and in general, $X([n])$ -- which is often simply written as $X_n$ and consists of a set of $n$-simplices -- defines interactions of order $n$ among the objects. Simplicial sets generalize directed graphs, partial orders, sequences, and in fact, regular categories \cite{maclane:71} as well. There are constructor and destructor morphisms that map from $X([n])$ to $X([n+1])$ and $X([n])$ to $X(n-1])$, which are usually denoted as degeneracy and face operators. 

The second layer of GAIA defines a category of generative AI models, which can be composed of any of the standard technologies used to build generative models, including finite and probabilistic automata, context-free grammars, structured state-space sequence models \cite{DBLP:conf/iclr/GuGR22} or Transformer models \cite{DBLP:conf/nips/VaswaniSPUJGKP17}, or cellular automata \cite{DBLP:journals/jca/Vollmar06,wolfram:book}. We assume that each of these models defines a category whose objects and arrows can be composed and otherwise manipulated by the simplicial category $\Delta$.

The third layer of GAIA defines the category of (relational) databases out of which the category of foundation models is build (e.g., such as by using one of the standard generative AI methods, such as self-attention \cite{DBLP:conf/nips/VaswaniSPUJGKP17} or structured state-space sequence models \cite{DBLP:conf/iclr/GuGR22}). \cite{Spivak_2012} has shown that categories provide a foundation for defining relational databases, and that many common operations in databases can be defined in terms of lifting diagrams in topology \cite{lifting}. In particular, a fundamental premise of GAIA is that machine learning is defined as the extension of {\em functors} on categories, not functions on sets \cite{DBLP:journals/jmlr/WagstaffFEOP22}. The fundamental reason to view machine learning as extending functors is that there are two canonical solutions to the problem of extending functors, defined as left and right Kan extensions \cite{kan}. In contrast, there is no obvious or natural solution to the problem of extending functions on sets, which has prompted an enormous literature in the field of machine learning over many decades, and also in fields like information theory \cite{chaitin,cover}.  Lifting diagrams provide an elegant and general framework to pose the problem of generalization for generative AI, based not just on individual units of experience, but by providing a theoretically sound way to do generalization over arbitrary relational structures. Much of the work in machine learning has focused on the ability to generalize propositional representations, which also includes most of the work in statistics. To generalize over first-order relational structures requires bringing in some powerful tools from algebraic topology and higher-order category theory, in particular the ability to do {\em horn filling} of simplicial horns \cite{Lurie:higher-topos-theory}. 

\subsection{Roadmap to the Paper}

Given the length of this paper, a roadmap to its organization will be helpful to the reader. Keep in mind that this paper is a condensed version of a forthcoming book, which is designed to provide a detailed tutorial level introduction to category theory, in addition to illustrating its application to generative AI. With that mind, Section~\ref{backprop} begins us off with a detailed look at a category theory of deep learning, building on the work of \cite{DBLP:conf/lics/FongST19}. The crucial idea of separating the algebraic structure of a generative AI model from its parameterization, which in turn is independent of the structure of a learning framework is crucial to our framework as well, although GAIA differs in many ways from the approach proposed in \cite{DBLP:conf/lics/FongST19}. 

Section~\ref{endofunctor} gives our alternative view of backpropagation as an endofunctor, in particular a universal coalgebra of the form $X \rightarrow F(X)$, where the endofunctor $F$ maps objects $X$ in a category ${\cal C}$ back to the same category. Universal coalgebras \cite{jacobs:book,rutten2000universal} provide a rich language for specifying dynamical systems, and they have also been extended to describe probabilistic generative models, such as Markov chains \cite{SOKOLOVA20115095}, Markov decision processes \cite{feys:hal-02044650} and a wealth of programming-related abstractions \cite{jacobs:book}. 

Section~\ref{layer1} defines the simplicial layer of GAIA, which acts like a ``combinatorial factory" that can assemble together pieces of generative AI models. The heart of the GAIA framework is that the simplicial category allows a hierarchical framework for generative AI, which we believe goes beyond the purely sequential framework thus far studied in the literature. We illustrate how hierarchical learning works in GAIA in terms of lifting problems in simplicial sets. Section~\ref{layer2} defines particular categories for generative AI, including the popular Transformer architecture as permutation equivariant functions over Euclidean spaces.  Section~\ref{layer3} defines universal properties and the Yoneda Lemma, which are used to define universal parameterizations of generative AI models. In particular, we show that non-symmetric generalized metric spaces can be studied with the metric Yoneda Lemma, which has applications in constructing non-symmetric attention models for natural language. 

Section~\ref{coend} defines an abstract integral calculus for generative AI, based on Yoneda's pioneering work \cite{yoneda-end}. \cite{loregian_2021} gives a detailed textbook level account of (co)end calculus. We define two classes of generative AI models, those based on coends and ends. We show that coend GAIA models lead to topological realizations, whereas GAIA models based on ends lead to probabilistic generative AI models.  We also introduce sheaves and topoi as alternative parameterizations of generative AI models, which arise from the Yoneda Lemma, and can give additional structure over simply using Euclidean spaces. Section~\ref{homotopy} finally defines abstract notions of equivalence in category theory for comparing two generative AI models. When can we say, for example, that a summarized document produced by a generative AI copilot is actually faithful to the original document on which it was based? Homotopy theory provides some answers to these questions, which have only been studied empirically in the literature. We introduce the notion of a {\em classifying space} for generative AI models, and define their homotopy colimits. 

The paper covers a great deal of abstract mathematics, but we have attempted to provide a range of concrete examples of its application to the problem of generative AI. Many more examples can be given, but would significantly increase the length of an already really long paper!  Ultimately, as we conclude at the end, the real proof of the utility of GAIA will come from its actual implementation as a working system, but we view that as a multi-year research problem. There are many open problems that are remaining to be worked out, and we discuss a few of them in the paper at various places. 

\section{Backpropagation as a Functor: Compositional Learning}  

\label{backprop}

Our principal goal in this section to review the categorical framework for deep learning proposed in \cite{DBLP:conf/lics/FongST19}, which models backpropagation as a functor. In the next section, we will argue that backpropagation should  be viewed instead as an {\em endufunctor} on the category {\em Param}, which defines the space over which generative AI model are defined. 

We first  give a high-level introduction to generative AI, building on the framework of category theory (see Figure~\ref{functors}).  Category theory is intrinsically a framework for compositional structures, which generative AI exemplifies as well. Excellent textbook length treatments are readily available and should be consulted for further background \cite{maclane:71,maclane:sheaves,riehl2017category,richter2020categories}. We summarize salient concepts from category theory as and when needed. We define several types of categories in this section, beginning with a category for supervised learning, and then other categories that represent machine learning algorithms, including the traditional backpropagation algorithm, and zeroth-order optimization, as well as categories for specific deep learning architectures, such as Transformers, structured state space sequence models, and diffusion models. We then introduce some key ideas from category theory, including the fundamental Yoneda Lemma that shows all objects in a category can be characterized in terms of their interactions. 

\begin{figure}[h]
\centering
\includegraphics[scale=.4]{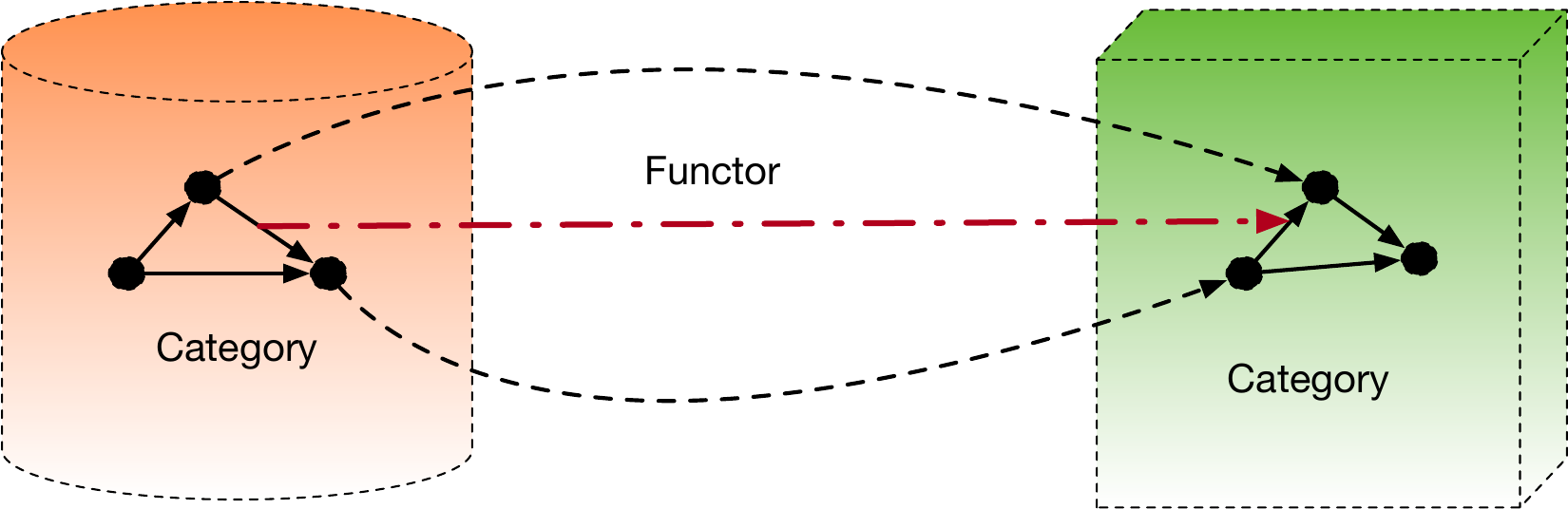}
\caption{Categories are defined by collection of arbitrary objects that interact through morphisms (also called arrows). Functors map objects from one category into another, but also map the arrows of the domain category into corresponding arrows in the co-domain category. We define generative AI systems and learning algorithms as categories in GAIA.}
\label{functors} 
\end{figure}

\subsection{Category of Supervised Learning}

\cite{DBLP:conf/lics/FongST19} give an elegant characterization of the well-known backpropagation algorithm that serves as the ``workhorse" of deep learning as a functor  over symmetric monoidal categories. In such categories, objects can be ``multiplied": for example, sets form a symmetric monoidal category as the Cartesian product of two sets defines a multiplication operator. A detailed set of coherence axioms are defined for monoidal categories (see \cite{maclane:71} for details), which we will not go through, but they ensure that multiplication is associative, as well as that there are identity operators such that $I \otimes A \simeq A$ for all objects $A$, where $I$ is the identity object. 

\begin{figure}
    \centering
    \includegraphics[scale=0.4]{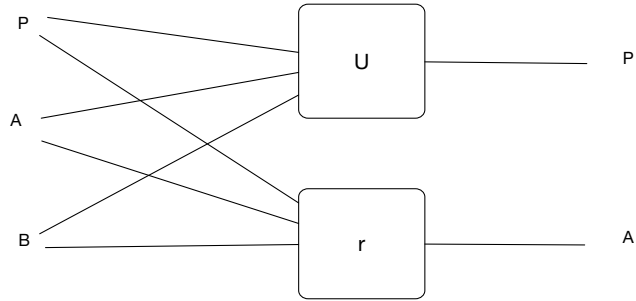}
    \caption{A learner in the symmetric monoidal category {\tt Learn} is defined as a morphism. Later in Section~\ref{endofunctor}, we will see how to define learners as coalgebras instead.}
    \label{learncat}
\end{figure}

\begin{definition}\cite{DBLP:conf/lics/FongST19}
    The symmetric monoidal category {\bf Learn} is defined as a collection of objects that define sets, and a collection of an equivalence class of learners. Each learner is defined by the following $4$-tuple (see Figure~\ref{learncat}). 
    
\begin{itemize}
    \item A parameter space $P$

    \item An implementation function $I: P \times A \rightarrow B$

    \item An update function $U: P \times A \times B \rightarrow P$

    \item A request function $r: P \times A \times B \rightarrow A$
\end{itemize}

Note that it is the request function that allows learners to be composed, as each request function transmits information back upstream to earlier learners what output they could have produced that would be more ``desirable". This algebraic characterization of the backpropagation algorithm clarifies its essentially compositional nature 

Two learners $(P, I, U, R)$ and $(P', I', U', r')$ are equivalent if there is a bijection $f: P \rightarrow P'$ such that the following identities hold for each $p \in P, a \in A$ and $b \in B$. 

\begin{itemize}
    \item $I'(f(p), a) = I(p, a)$. 

    \item $U'(f(p), a, b) = f(U(p, a, b))$. 

    \item $r'(f(p), a, b) = r(p, a, b)$
\end{itemize}
    
\end{definition}

Typically, in generative AI trained with neural networks, the parameter space $P = \mathbb{R}^N$ where the neural network has $N$ parameters. The implementation function $I$ represents the ``feedforward" component, and the request function represents the ``backpropagation" component. The update function represents the change in parameters as a result of processing a training example $(a, f(a)) \in A \times B$. The main contribution of \cite{DBLP:conf/lics/FongST19} is in showing that supervised learning can be defined as a compositional category under the sequential composition of morphisms defining individual building blocks of learners. 

\begin{figure}[t]
    \centering
    \includegraphics[scale=0.4]{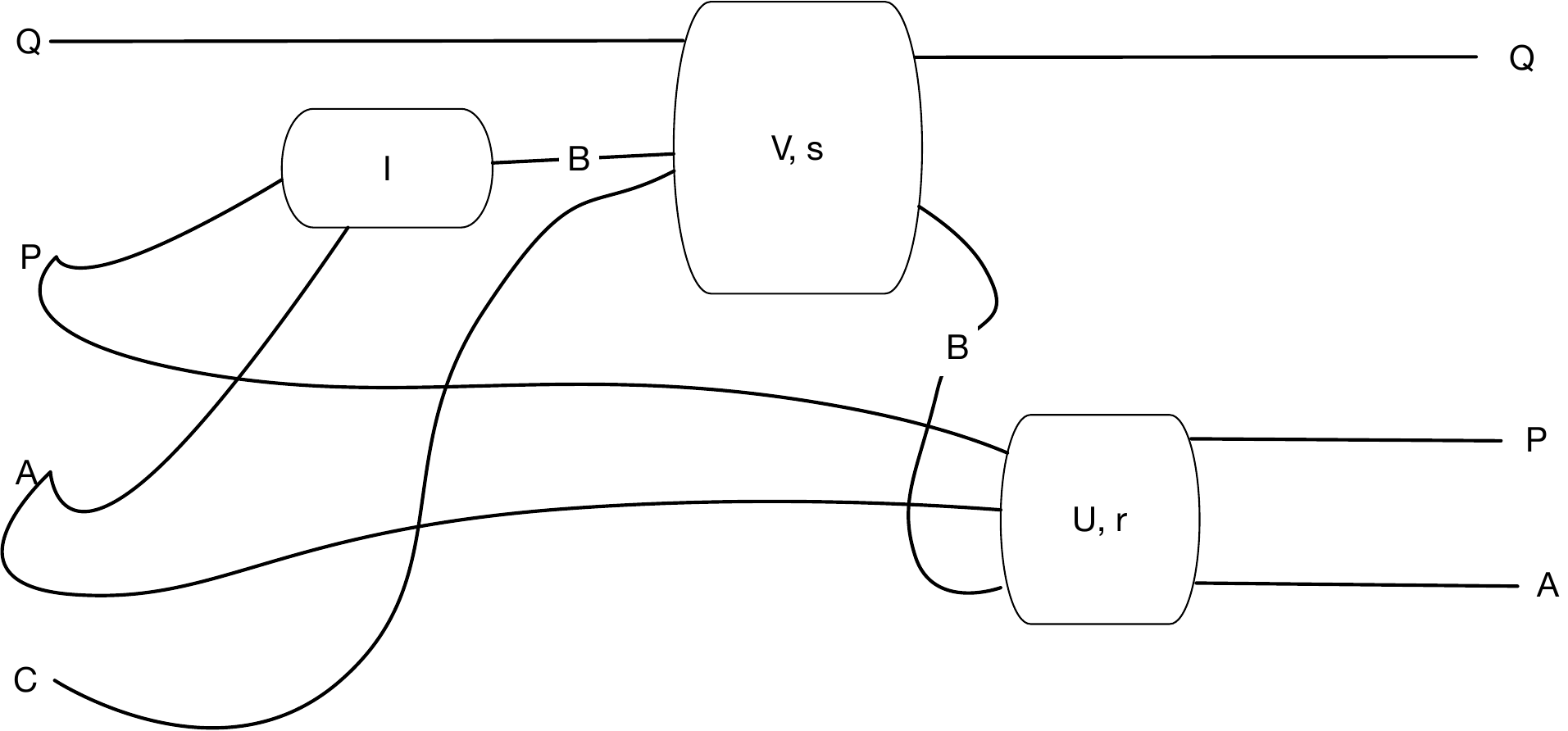}
    \includegraphics[scale=0.4]{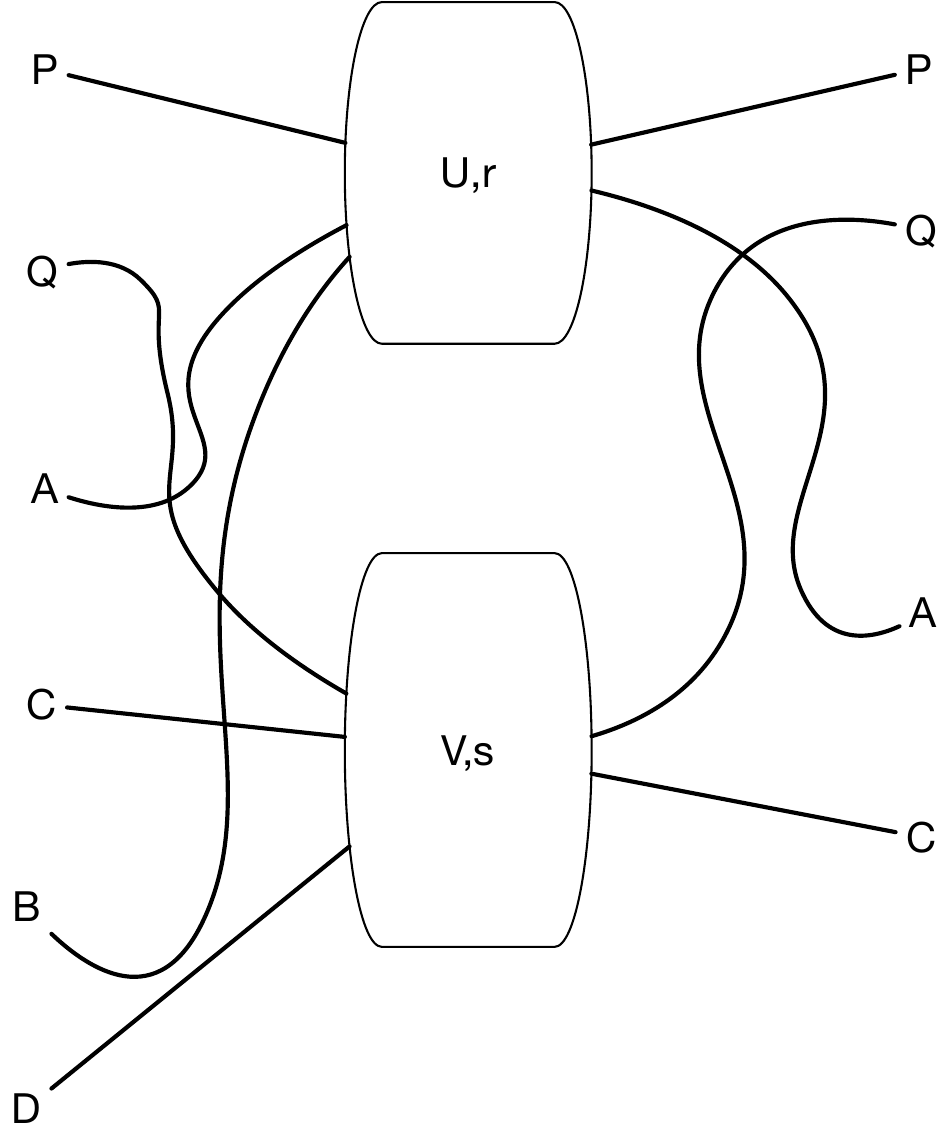}
    \caption{Sequential and parallel composition of two learners in the symmetric monoidal category {\tt Learn}.}
    \label{seqlearn}
\end{figure}

\cite{DBLP:conf/lics/FongST19} show that each learner can be combined in sequentially and in parallel (see Figure~\ref{seqlearn}), both formally using the operations of composition $\circ$ and tensor product $\otimes$ in the symmetric monoidal category {\tt Learn}, and equivalently in terms of string diagrams. For clarity, let us write out the compositional rule for a pair of learners

\[ A \xrightarrow[]{(P, I, U, r)} B \xrightarrow[]{(Q, J, V, s)} C \]

The composite learner $A \rightarrow C$ is defined as $(P \times Q, I \cdot J, U \cdot V, r \cdot s)$, where the composite implementation function is

\[ (I \cdot J)(p, q, a) \coloneqq J(q, I(p, a)) \]

and the composite update function is 

\[ U \cdot V(p, q, a, c) \coloneqq \left( U(p, a, s(q, I(p, a), c) \right), V(q, I(p, a), c) \]

and the composite request function is 

\[ (r \cdot s)(p, q, a, c) \coloneqq r(p, a, s(q, I(p, a), c)). \]

\subsection{Backpropagation as a Functor}

We can define the backpropagation procedure as a functor that maps from the category {\tt Para} to the category {\tt Learn}. Functors can be viewed as a generalization of the notion of morphisms across algebraic structures, such as groups, vector spaces, and graphs. Functors do more than functions: they not only map objects to objects, but like graph homomorphisms, they need to also map each morphism in the domain category to a corresponding morphism in the co-domain category. Functors come in two varieties, as defined below. The Yoneda Lemma, in its most basic form, asserts that any set-valued functor $F: {\cal C} \rightarrow {\bf Sets}$ can be universally represented by a {\em representable functor} ${\cal C}(-, x): {\cal C}^{op} \rightarrow {\bf Sets}$. 

 \begin{definition} 
A {\bf {covariant functor}} $F: {\cal C} \rightarrow {\cal D}$ from category ${\cal C}$ to category ${\cal D}$, and defined as \mbox{the following: }

\begin{itemize} 
    \item An object $F X$ (also written as $F(X)$) of the category ${\cal D}$ for each object $X$ in category ${\cal C}$.
    \item An  arrow  $F(f): F X \rightarrow F Y$ in category ${\cal D}$ for every arrow  $f: X \rightarrow Y$ in category ${\cal C}$. 
   \item The preservation of identity and composition: $F \ id_X = id_{F X}$ and $(F f) (F g) = F(g \circ f)$ for any composable arrows $f: X \rightarrow Y, g: Y \rightarrow Z$. 
\end{itemize}
\end{definition} 

\begin{definition} 
A {\bf {contravariant functor}} $F: {\cal C} \rightarrow {\cal D}$ from category ${\cal C}$ to category ${\cal D}$ is defined exactly like the covariant functor, except all the arrows are reversed. 
\end{definition} 

The {\em functoriality} axioms dictate how functors have to be behave: 

\begin{itemize} 

\item For any composable pair $f, g$ in category $C$, $Fg \cdot Ff = F(g \cdot f) $.

\item For each object $c$ in $C$, $F (1_c) = 1_{Fc}$.

\end{itemize} 

Note that the category {\tt Learn} is ambivalent as to what particular learning method is used. To define a particular learning method, such as backpropagation, we can define a category whose objects define the parameters of the particular learning method, and then another category for the learning method itself.  We can define a functor from the category {\tt NNet} to the category {\tt Learn} that factors through the category {\tt Param}. Later in the next section, we show how to generlize this construction to simplicial sets. 

\[\begin{tikzcd}
	NNet &&&& Learn \\
	&& Param
	\arrow["F", from=1-1, to=2-3]
	\arrow["{L_{\epsilon, e}}", from=2-3, to=1-5]
	\arrow[from=1-1, to=1-5]
\end{tikzcd}\]

\begin{definition} \cite{DBLP:conf/lics/FongST19}
    The category {\tt Param} defines a strict symmetric monoidal category whose objects are Euclidean spaces, and whose morphisms $f: \mathbb{R}^n \rightarrow \mathbb{R}^m$ are equivalence classes of differential parameterized functions. In particular, $(P, I)$ defines a Euclidean space $P$ and $I: P \times A \rightarrow B$ defines a differentiable parameterized function $A \rightarrow B$. Two such pairs $(P, I), (P', I')$ are considered equivalent if there is a differentiable bijection $f: P \rightarrow P'$ such that for all $p \in P$, and $a \in A$, we have that $I'(f'(p),a) = I(p,a)$. The composition of $(P, I): \mathbb{R}^n \rightarrow \mathbb{R}^m$ and $(Q, J): \mathbb{R}^n \rightarrow \mathbb{R}^m$ is given as 

    \[ (P \times Q, I \cdot J) \ \ \ \mbox{where} \ \ \ (I \cdot J)(p, q, a) = J(q, I(p, a)) \]

    The monoidal product of objects $\mathbb{R}^n$ and $\mathbb{R}^m$ is the object $\mathbb{R}^{n+m}$, whereas the monoidal product of morphisms $(P, I): \mathbb{R}^m \rightarrow \mathbb{R}^m$ and $(Q, J): \mathbb{R}^l \rightarrow \mathbb{R}^k$ is given as $(P \times Q, I \parallel J)$, where 

    \[ (I \parallel J) (p, q, a, c)  = \left( I(p, a), J(q, c) \right) \]

    Symmetric monoidal categories can also be braided. In this case, the braiding $\mathbb{R}^m \parallel \mathbb{R}^m \rightarrow \mathbb{R}^m \parallel \mathbb{R}^n $ is given as $(\mathbb{R}^0, \sigma)$ where $\sigma(a, b) = (b, a)$. 
\end{definition}

The backpropagation algorithm can itself be defined as a functor over symmetric monoidal categories

\[ L_{\epsilon, e}: {\tt Param} \rightarrow {\tt Learn}\]

where $\epsilon > 0$ is a real number defining the learning rate for backpropagation, and $e(x,y): \mathbb{R} \times \mathbb{R} \rightarrow \mathbb{R}$ is a differentiable error function such that $\frac{\partial e}{\partial x}(x_0, -)$ is invertible for each $x_0 \in \mathbb{R}$. This functor essentially defines an update procedure for each parameter in a compositional learner. In other words,  the functor $L_{\epsilon, e}$ defined by backpropagation sends each parameterized function $I: P \times A \rightarrow B$ to the learner $(P, I, U_I,r_I)$

\[ U_I(p, a, b) \coloneqq p - \epsilon \nabla_p E_I(p, a, b) \]

\[ r_I(p, a, b) \coloneqq f_a(\nabla_a E_I(p, a, b)) \]

where $E_I(p, a, b) \coloneqq \sum_j e(I_j(p, a), b_j)$ and $f_a$ is a component-wise application of the inverse to $\frac{\partial e}{\partial x}(a_i, -)$ for each $i$. 

Note that we can easily define functors that define other ways of doing parameterized updates, such as a stochastic approximation method \cite{rm} that updates each parameter using only the (noisy) value of the function at the current value of the parameter, and uses a gradual decay of the learning parameters to ``simulate" the process of taking gradients. These sort of stochastic approximation updates are now called ``zeroth-order" optimization in the deep learning literature.

\section{Backpropagation as an Endofunctor: Generative AI using Universal Coalgebras}

\label{endofunctor} 

Our categorical framework for generative AI differs in crucial ways from the analysis in \cite{DBLP:conf/lics/FongST19} that defined backpropagation as a functor, but not an endofunctor. In their framework, which we reviewed in the previous section, backpropagation was modeled as a functor from the category {\tt Param} to the cateory {\tt Learn}, but that masks the simple fact that the goal of learning is to produce a new set of parameters (i.e., construct a new object in {\tt Param}). Once you complete that loop, backpropagation becomes an endofunctor. This property allows bringing in the rich framework of universal coalgebras \cite{jacobs:book,rutten2000universal} to analyze a whole family of endofunctors for generative AI. 

As the ultimate goal of backpropagation at each step is to produce a new parameter, i.e. a new object in {\tt Param}, we argue that our endofunctor characterization provides a rich source of insight into the analysis of generative AI methods. Accordingly, we review below the theory of universal coalgebras, and then show more formally how to model backpropagation and other similiar generative AI methods as coalgebras. 

\subsection{Non-Well-Founded Sets and Universal Coalgebras} 

\begin{figure}[h]
\centering
\includegraphics[scale=.35]{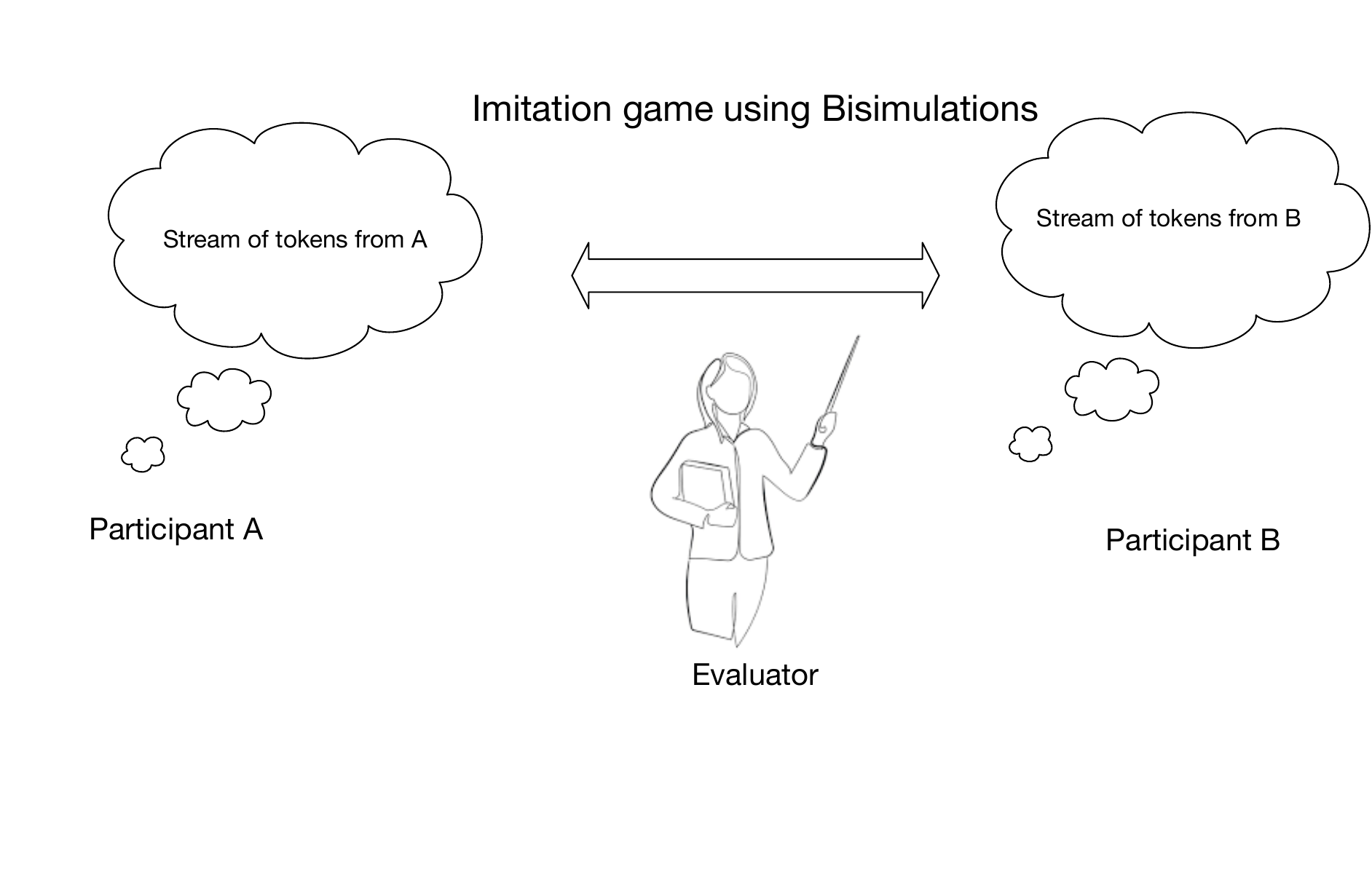}
\caption{Generative models define an infinite stream of tokens. Solving an `imitation game" \cite{turing} that involves comparing two infinite data streams involves the process of deciding if two  non-well-founded sets are categorically {\em bisimulations} of each other\cite{Aczel1988-ACZNS,rutten2000universal}.} 
\label{imitgamebisim} 
\end{figure}

To begin with, we present an elegant formalism for defining generative AI models as universal coalgebras \cite{rutten2000universal}, and non-well-founded sets \cite{Aczel1988-ACZNS}. Figure~\ref{imitgamebisim} illustrates the main idea. We define the two participants in an imitation game as universal coalgebras (or non-well-founded sets) and ask if there is a bisimulation relationship between them. This characterization covers a wide range of probabilistic models, including Markov chains and Markov decision processes \cite{DBLP:books/lib/SuttonB98}, and automata-theoretic models, as well as generative AI models \cite{DBLP:conf/iclr/GuJTRR23}.

Generative AI has become  popular recently due to the successes of neural and structured-state space sequence models  \cite{DBLP:conf/iclr/GuGR22,DBLP:conf/nips/VaswaniSPUJGKP17} and text-to-image diffusion models \cite{DBLP:conf/nips/SongE19}. The underlying paradigm of building generative models has a long history in computer science and AI, and it is useful to begin with the simplest models that have been studied for several decades, such as deterministic finite state machines, Markov chains, and context-free grammars.  We use category theory to build generative AI models and analyze them, which is one of the unique and novel aspects of this paper. To explain briefly, we represent a generative model in terms of {\em universal coalgebras} \cite{rutten2000universal} generated by an {\em endofunctor} $F$ acting on a category $C$. Coalgebras provide an elegant way to model dynamical systems, and capture the notion of state \cite{jacobs:book} in ways that provide new insight into the design of AI and ML systems. Perhaps the simplest and in some ways, the most general, type of generative AI model that is representable as a coalgebra is the {\em powerset functor} 

\[ F: S \Rightarrow {\cal P}(S)\]

where $S$ is any set (finite or not), and ${\cal P}(S)$ is the set of all subsets of $S$, that is: 

\[ {\cal P}(S) = \{ A | A \subseteq S \} \]

Notice in the specification of the powerset functor coalgebra, the same term $S$ appears on both sides of the equation. That is a hallmark of coalgebras, and it is what distinguishes coalgebras from algebras. Coalgebras generate search spaces, whereas algebras compact search spaces and summarize them.  This admittedly simple structure nonetheless is extremely versatile and enables modeling a remarkably rich and diverse set of generative AI models, including the ones listed in Figure~\ref{genaimodel}. To explain briefly, we can model a context-free grammar as a mapping from a set $S$ that includes all the vertices in the context-free grammar graph shown in Figure~\ref{genaimodel} to the powerset of the set $S$. More specifically, if $S = N \cup T$ is defined as the non-terminals $N$ as well as the terminal symbols (the actual words) $T$, any context-free grammar rule can be represented in terms of a power set functor. We will explain how this approach can be refined later in this Section, and in much more detail in later sections of the paper. To motivate further why category theory provides an elegant way to model generative AI systems, we look at some actual examples of generative AI systems to see why they can be modeled as functors. 

\begin{figure}[h]
\centering
\includegraphics[scale=.3]{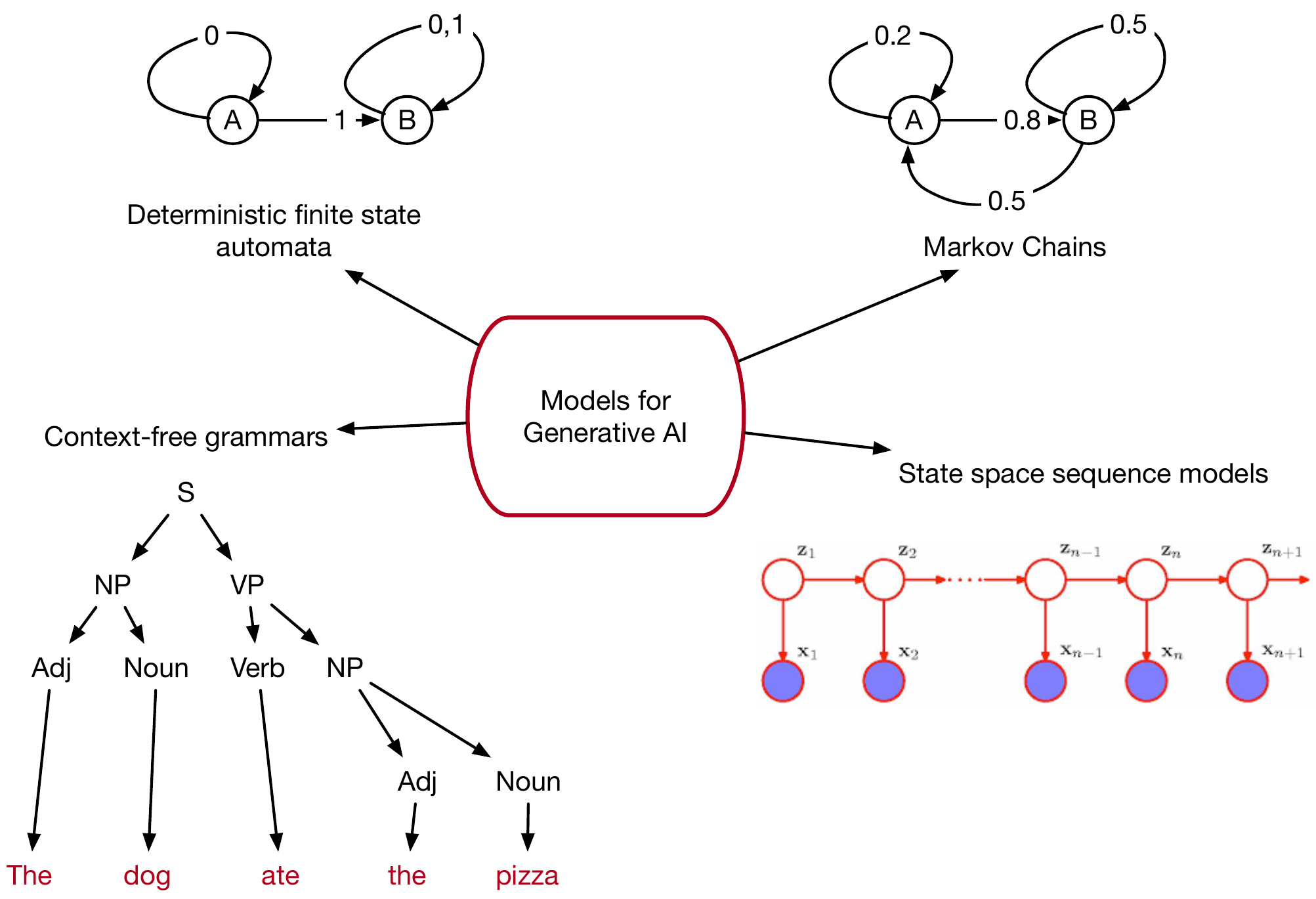}
\caption{In this paper, generative AI models, from the earliest models studied in computer science such as deterministic finite state automata and context-free grammars, to models in statistics and information theory like Markov chains, and lastly, sequence models can be represented as universal coalgebras. }
\label{genaimodel} 
\end{figure}

To compare say two large language models, we need to compare two potentially infinite data streams of tokens (e.g., words, or in general, other forms of communication represented digitally by bits). Many problems in AI and ML involve reasoning about circular phenomena. These include reasoning about {\em common knowledge} \cite{barwise,fagin} such as social conventions, dealing with infinite data structures such as lists or trees in computer science, and causal inference in systems with feedback where part of the input comes from the output of the system. In all these situations, there is an intrinsic problem of having to deal with infinite sets that are recursive and violate a standard axiom called well-foundedness in set theory. First, we explain some of the motivations for including non-well-founded sets in AI and ML, and then proceed to define the standard ZFC axiomatization of set theory and how to modify it to allow circular sets. We build on the pioneering work of Peter Aczel on the anti-foundation-axiom in modeling non-well-founded sets \cite{Aczel1988-ACZNS}, which has elaborated previously in other books as well \cite{barwise,jacobs:book}, although we believe this paper is perhaps one of the first to focus on the application of non-well-founded sets and universal coalgebras to problems in AI and ML at a broad level. 

\begin{figure}[t]
\centering
\includegraphics[scale=.5]{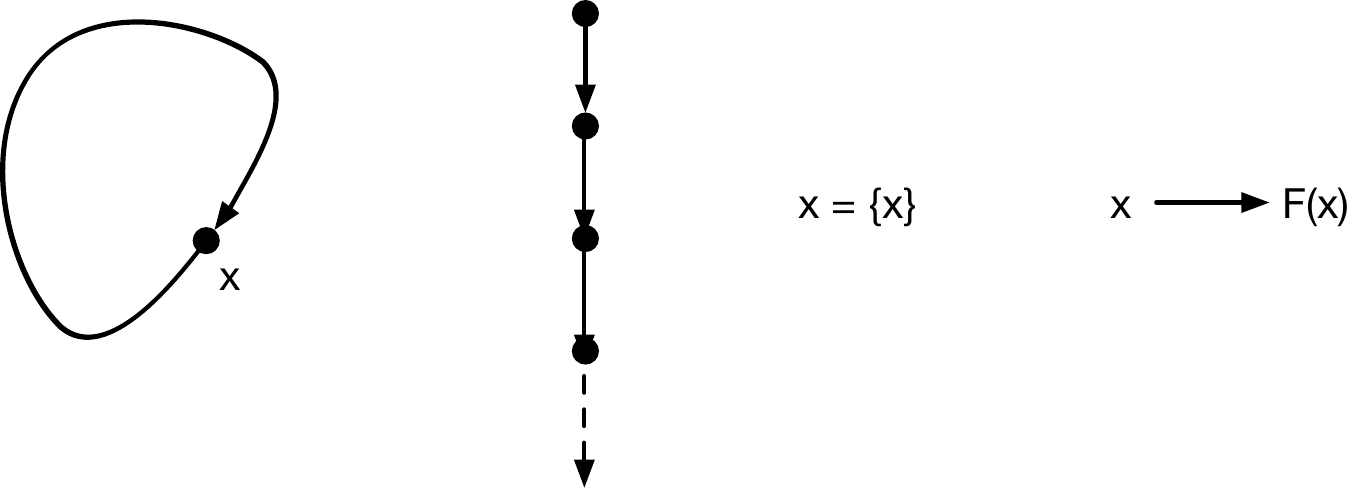}
\caption{Three representations of infinite data streams: non-well-founded set $x = \{ x \}$: accessible pointed graphs (AGPs), non-well-founded sets specified by systems of equations, and universal coalgebras. We can view these as generative models of the recursive set $\{ \{ \{ \ldots \} \} \} $.}
\label{threereps} 
\end{figure}

Figure~\ref{threereps} illustrates three ways to represent an infinite object, as a directed graph, a (non-well-founded) set or as a system of equations.  We begin with perhaps the simplest approach introduced by Peter Aczel called accessible pointed graphs (APGs) (see Figure~\ref{threereps}), but also include the category-theoretic approach of using {\em universal coalgebras} \cite{rutten2000universal}, as well as systems of equations \cite{barwise}. 

We now turn to describing coalgebras, a much less familiar construct that will play a central role in the proposed ML framework of coinductive inference. Coalgebras capture hidden state, and enable modeling infinite data streams. Recall that in the previous Section, we explored non-well-founded sets, such as the set $\Omega = \{ \Omega \}$, which gives rise to a circularly defined object. As another example, consider the infinite data stream comprised of a sequence of objects, indexed by the natural numbers: 

\[ X = (X_0, X_1, \ldots, X_n, \ldots ) \]

We can define this infinite data stream as a coalgebra, comprised of an accessor function {\bf head} that returns the head of the list, and a destructor function that gives the {\bf tail}  of the list, as we will show in detail below. 

To take another example, consider a deterministic finite state machine model defined as the tuple $M = (X, A, \delta)$, where $X$ is the set of possible states that the machine might be in, $A$ is a set of input symbols that cause the machine to transition from one state to another, and $\delta: X \times A \rightarrow X$ specifies the transition function. To give a coalgebraic definition of a finite state machine, we note that we can define a functor $F: X \rightarrow {\cal P}(A \times X)$ that maps any given state $x \in X$ to the subset of possible future states $y$ that the machine might transition to for any given input symbol $a \in A$. 

We can now formally define $F$-coalgebras analogous to the definition of $F$-algebras given above. 

\begin{definition}
Let $F: {\cal C} \rightarrow {\cal C}$ be an endofunctor on the category ${\cal C}$. An {\bf $F$-coalgebra} is defined as a pair $(A, \alpha)$ comprised of an object $A$ and an arrow $\alpha: A \rightarrow F(A)$. 
\end{definition}

The fundamental difference between an algebra and a coalgebra is that the structure map is reversed! This might seem to be a minor distinction, but it makes a tremendous difference in the power of coalgebras to model state and capture dynamical systems. Let us use this definition to capture infinite data streams, as follows. 

\[ {\bf Str}: {\bf Set} \rightarrow {\bf Set}, \ \ \ \ \ \ {\bf Str}(X) = \mathbb{N} \times X\]

Here, {\bf Str} is defined as a functor on the category {\bf Set}, which generates a sequence of elements. Let $N^\omega$ denote the set of all infinite data streams comprised of natural numbers:

\[ N^\omega = \{ \sigma | \sigma: \mathbb{N} \rightarrow \mathbb{N} \} \]

To define the accessor function {\bf head} and destructor function {\bf tail} alluded to above, we proceed as follows: 

\begin{eqnarray}
{\bf head}&:& \mathbb{N}^\omega \rightarrow \mathbb{N} \ \ \  \ \ \ \ {\bf tail}: \mathbb{N}^\omega \rightarrow\mathbb{N}^\omega \\
{\bf head}(\sigma) &=& \sigma(0) \ \ \ \ \ \ \ {\bf tail}(\sigma) = (\sigma(1), \sigma(2), \ldots )
\end{eqnarray}

Another standard example that is often used to illustrate coalgebras, and provides a foundation for many AI and  ML applications, is that of a {\em labelled transition system}. 

\begin{definition}
    A {\bf labelled transition system} (LTS) $(S, \rightarrow_S, A)$ is defined by a set $S$ of states, a transition relation $\rightarrow_S \subseteq S \times A \times S$, and a set $A$ of labels (or equivalently, ``inputs" or ``actions"). We can define the transition from state $s$ to $s'$ under input $a$ by the transition diagram $s \xrightarrow[]{a} s'$, which is equivalent to writing $\langle s, a,  s' \rangle \in \rightarrow_S$. The ${\cal F}$-coalgebra for an LTS is defined by the functor 

    \[ {\cal F}(X) = {\cal P}(A \times X) = \{V | V \subseteq A \times X\} \]
\end{definition}

Just as before, we can also define a category of $F$-coalgebras over any category ${\cal C}$, where each object is a coalgebra, and the morphism between two coalgebras is defined as follows, where $f: A \rightarrow B$ is any morphism in the category ${\cal C}$. 

\begin{definition}
Let $F: {\cal C} \rightarrow {\cal C}$ be an endofunctor. A {\em homomorphism} of $F$-coalgebras $(A, \alpha)$ and $(B, \beta)$ is an arrow $f: A \rightarrow B$ in the category ${\cal C}$ such that the following diagram commutes:

\begin{center}
\begin{tikzcd}
  A \arrow[r, "f"] \arrow[d, "\alpha"]
    & B \arrow[d, "\beta" ] \\
  F(A) \arrow[r,  "F(f)"]
& F(B)
\end{tikzcd}
\end{center}
\end{definition}

For example, consider two labelled transition systems $(S, A, \rightarrow_S)$ and $(T, A, \rightarrow_T)$ over the same input set $A$, which are defined by the coalgebras $(S, \alpha_S)$ and $(T, \alpha_T)$, respectively. An $F$-homomorphism $f: (S, \alpha_S) \rightarrow (T, \alpha_T)$ is a function $f: S \rightarrow T$ such that $F(f) \circ \alpha_S  = \alpha_T \circ f$. Intuitively, the meaning of a homomorphism between two labeled transition systems means that: 

\begin{itemize}
    \item For all $s' \in S$, for any transition $s \xrightarrow[]{a}_S s'$ in the first system $(S, \alpha_S)$, there must be a corresponding transition in the second system $f(s) \xrightarrow[]{a}_T f(s;)$ in the second system. 

    \item Conversely, for all $t \in T$, for any transition $t \xrightarrow[]{a}_T t'$ in the second system, there exists two states $s, s' \in S$ such that $f(s) = t, f(t) = t'$ such that $s \xrightarrow[]{a}_S s'$ in the first system. 
\end{itemize}

If we have an $F$-homomorphism $f: S \rightarrow T$ with an inverse $f^{-1}: T \rightarrow S$ that is also a $F$-homomorphism, then the two systems $S \simeq T$ are isomorphic. If the mapping $f$ is {\em injective}, we have a  {\em monomorphism}. Finally, if the mapping $f$ is a surjection, we have an {\em epimorphism}. 

The analog of congruence in universal algebras is {\em bisimulation} in universal coalgebras. Intuitively, bisimulation allows us to construct a more ``abstract" representation of a dynamical system that is still faithful to the original system. We will explore many applications of the concept of bisimulation to AI and ML systems in this paper. We introduce the concept in its general setting first, and then in the next section, we will delve into concrete examples of bisimulations. 

\begin{definition}
Let $(S, \alpha_S)$ and $(T, \alpha_T)$ be two systems specified as coalgebras acting on the same category ${\cal C}$. Formally, a $F$-{\bf bisimulation} for coalgebras defined on a set-valued functor $F: {\bf Set} \rightarrow {\bf Set}$ is a relation $R \subset S \times T$ of the Cartesian product of $S$ and $T$ is a mapping $\alpha_R: R \rightarrow F(R)$ such that the projections of $R$ to $S$ and $T$ form valid $F$-homomorphisms.

\begin{center}
\begin{tikzcd}
  R \arrow[r, "\pi_1"] \arrow[d, "\alpha_R"]
    & S \arrow[d, "\alpha_S" ] \\
  F(R) \arrow[r,  "F(\pi_1)"]
& F(S)
\end{tikzcd}
\end{center}

\begin{center}
\begin{tikzcd}
  R \arrow[r, "\pi_2"] \arrow[d, "\alpha_R"]
    & T \arrow[d, "\alpha_T" ] \\
  F(R) \arrow[r,  "F(\pi_2)"]
& F(T)
\end{tikzcd}
\end{center}

Here, $\pi_1$ and $\pi_2$ are projections of the relation $R$ onto $S$ and $T$, respectively. Note the relationships in the two commutative diagrams should hold simultaneously, so that we get 

\begin{eqnarray*}
    F(\pi_1) \circ \alpha_R &=& \alpha_S \circ \pi_1 \\
    F(\pi_2) \circ \alpha_R &=& \alpha_T \circ \pi_2 
\end{eqnarray*}

Intuitively, these properties imply that we can ``run" the joint system $R$ for one step, and then project onto the component systems, which gives us the same effect as if we first project the joint system onto each component system, and then run the component systems. More concretely, for two labeled transition systems that were considered above as an example of an $F$-homomorphism, an $F$-bisimulation between $(S, \alpha_S)$ and $(T, \alpha_T)$ means that  there exists a relation $R \subset S \times T$ that satisfies for all $\langle s, t \rangle \in R$

\begin{itemize}
    \item For all $s' \in S$, for any transition $s \xrightarrow[]{a}_S s'$ in the first system $(S, \alpha_S)$, there must be a corresponding transition in the second system $f(s) \xrightarrow[]{a}_T f(s;)$ in the second system, so that $\langle s', t' \rangle \in R$

    \item Conversely, for all $t \in T$, for any transition $t \xrightarrow[]{a}_T t'$ in the second system, there exists two states $s, s' \in S$ such that $f(s) = t, f(t) = t'$ such that $s \xrightarrow[]{a}_S s'$ in the first system, and $\langle s', t' \rangle \in R$.
\end{itemize}

\end{definition} 

A simple example of a bisimulation of two coalgebras is shown in Figure~\ref{bisim}. 

\begin{figure}[t]
\centering
\includegraphics[scale=.45]{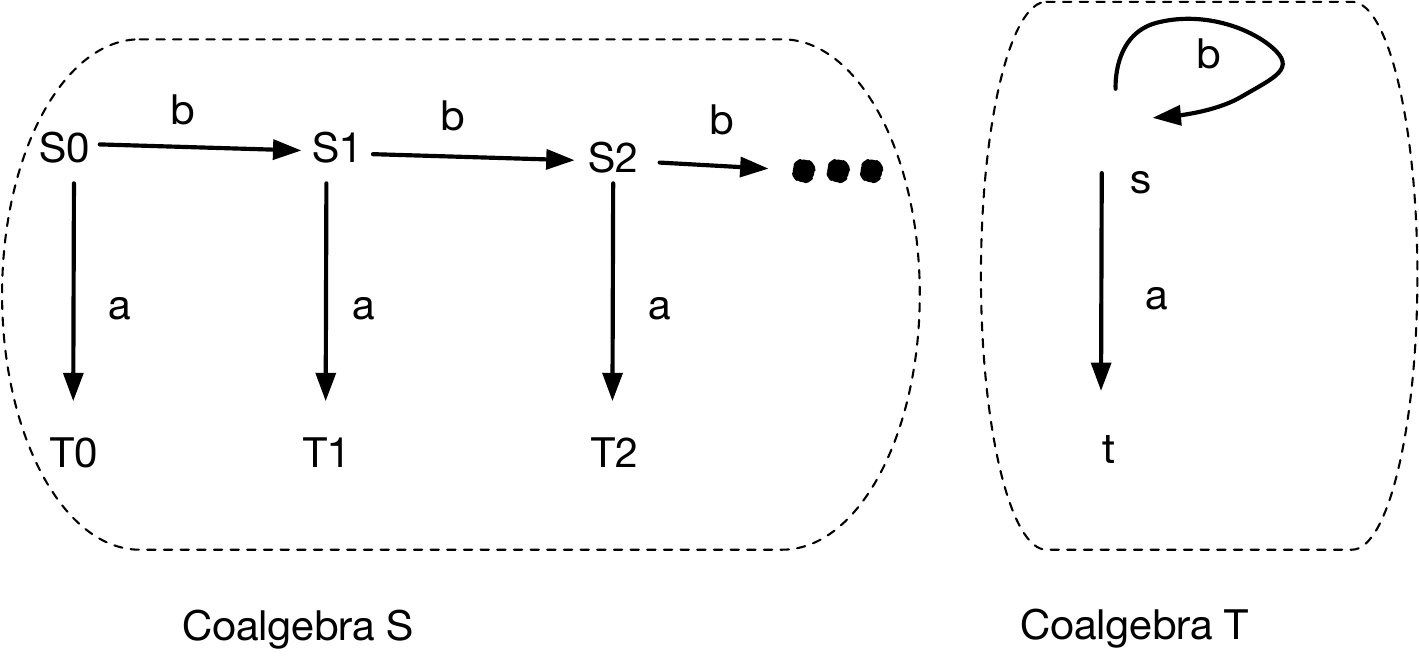}
\caption{A bisimulation among two coalgebras.}
\label{bisim} 
\end{figure}

There are a number of basic properties about bisimulations, which we will not prove, but are useful to summarize here: 

\begin{itemize}
    \item If $(R, \alpha_R)$ is a bisimulation between systems $S$ and $T$, the inverse $R^{-1}$ of $R$ is a bisimulation between systems $T$ and $S$. 

    \item Two homomorphisms $f: T \rightarrow S$ and $g:T \rightarrow U$ with a common domain $T$ define a {\em span}. The {\em image} of the span $\langle f, g \rangle(T) = \{ \langle f(t), g(t) \rangle | t \in T \}$ of $f$ and $g$ is also a bisimulation between $S$ and $U$. 

    \item The composition $R \circ Q$ of two bisimulations $R \subseteq S \times T$ and $Q \subseteq T \times U$ is a bisimulation between $S$ and $U$. 

    \item The union $\cup_k R_k$ of a family of bisimulations between $S$ and $T$ is also a bisimulation. 

    \item The set of all bisimulations between systems $S$ and $T$ is a complete lattice, with least upper bounds and greatest lower bounds given by: 

    \[ \bigvee_k R_k = \bigcup_k R_k \]

    \[ \bigwedge_K R_k = \bigcup \{ R | R \ \mbox{is a bisimulation between} S \ \mbox{and} \ T \ \mbox{and} \ R \subseteq \cap_k R_k \} \]

    \item The kernel $K(f) = \{ \langle s, s' \rangle | f(s) = f(s') \}$ of a homomorphism $f: S \rightarrow T$ is a bisimulation equivalence.

    \end{itemize}

  \subsection{Backpropagation as a Coalgebra}

 Finally, we return to the original goal of this section, which is to argue that any generative AI machine learning method can be usefully modeled not just as a functor, but rather as an endofunctor that maps an object in a category {\tt Param} of parameters into a new object as a result of doing a machine learning step, such as a gradient update. We can now formally define backpropagation as a coalgebra over the caetgory {\tt Param} as follows.

 Recall that the category {\tt Param} defines a strict symmetric monoidal category whose objects are Euclidean spaces, and whose morphisms $f: \mathbb{R}^n \rightarrow \mathbb{R}^m$ are equivalence classes of differential parameterized functions. To see why the backpropagation algorithm can be defined as an endofunctor over the symmetric monoidal category {\tt Param}, recall from the previous section that backpropagation was viewed as a functor from the cateory {\tt Param} to the cateogry {\tt Learn}. 

\[ L_{\epsilon, e}: {\tt Param} \rightarrow {\tt Learn}\]

where $\epsilon > 0$ is a real number defining the learning rate for backpropagation, and $e(x,y): \mathbb{R} \times \mathbb{R} \rightarrow \mathbb{R}$ is a differentiable error function such that $\frac{\partial e}{\partial x}(x_0, -)$ is invertible for each $x_0 \in \mathbb{R}$. This functor essentially defines an update procedure for each parameter in a compositional learner. In other words,  the functor $L_{\epsilon, e}$ defined by backpropagation sends each parameterized function $I: P \times A \rightarrow B$ to the learner $(P, I, U_I,r_I)$

\[ U_I(p, a, b) \coloneqq p - \epsilon \nabla_p E_I(p, a, b) \]

\[ r_I(p, a, b) \coloneqq f_a(\nabla_a E_I(p, a, b)) \]

where $E_I(p, a, b) \coloneqq \sum_j e(I_j(p, a), b_j)$ and $f_a$ is a component-wise application of the inverse to $\frac{\partial e}{\partial x}(a_i, -)$ for each $i$. 

But a simpler and we argue more elegant characterization of backpropagation is to view it as a coalgebra or dynamical system defined by an endofunctor on {\tt Param}. Here, we view the inputs $A$ and outputs $B$ as the input ``symbols" and output produced by a dynamical system. The actual process of updating the parameters need not be defined as ``gradient descent", but it can involve any other functor (as we saw earlier, it could involve a stochastic approximation method \cite{borkar}). Our revised definition of backpropagation as an endofunctor follows. Note that this definition is generic, and applies to virtually any approach to building foundation models that updates each object to a new object in the category {\tt Param} as a result of processing a data instance. 

  \begin{definition}
{\bf Backpropagation} defines an $F_B$-coalgebra over the symmetric monoidal category {\tt Param}, specified by an endofunctor $X \rightarrow F_B(X)$ defined as 

\[ F_B(X) = A \times B \times X\]
  \end{definition}

Note that in this definition, the endofunctor $F_B$ takes an object $X$ of {\tt Param}, which is a set of network weights of a generative AI model, and produces a new set of weights, where $A$ is the ``input" symbol of the dynamical system and $B$ is the output symbol. 

  \subsection{Zeroth-Order Deep Learning using Stochastic Approximation}

To illustrate how the broader coalgebraic definition of backpropagation is more useful than the previous definition in \cite{DBLP:conf/lics/FongST19}, we describe a class of generative AI methods based on adapting stochastic approximation \cite{rm} to deep learning, which are popularly referred to zeroth-order optimization \cite{liu2009large} (see Figure~\ref{zoodl}). A vast range of stochastic approximation methods have been explored in the literature (e.g., see \cite{borkar,kushner2003stochastic}). For example, in {\em random directions} stochastic approximation, each parameter is adjusted in a random direction by sampling from distribution, such as a multivariate normal, or a uniform distribution. Any of these zeroth-order stochastic approximation algorithms can itself be defined as a functor over symmetric monoidal categories

\[ L^{0}_{\epsilon}: {\tt Param} \rightarrow {\tt Learn}\]

where $\epsilon > 0$ is a real number defining a learning rate parameter that is gradually decayed. Notice now that the error of the approximation with respect to the target plays no role in the update process itself.  backpropagation, and $e(x,y): \mathbb{R} \times \mathbb{R} \rightarrow \mathbb{R}$ is a differentiable error function such that $\frac{\partial e}{\partial x}(x_0, -)$ is invertible for each $x_0 \in \mathbb{R}$. The functor $L^0_{\epsilon}$ defined by zeroth-order optimization sends each parameterized function $I: P \times A \rightarrow B$ to the learner $(P, I, U^0_I,r^0_I)$

\[ U^0_I(p, a, b) \coloneqq p - \epsilon I(p, a, b) \]

Here, the $1$-point gradient estimate is approximated by the (noisy) sampled value, averaged over multiple steps using a decaying learning rate as required by the convergence theorems of stochastic approximation \cite{rm,kushner2003stochastic}.  The advantages of zeroth-order stochastic approximation methods are that it avoids computing gradients over a very large number of parameters (which for state of the art generative AI models can be in the billions or trillions of parameters), and it potentially helps avoid local minima by stochastically moving around in a very high-dimensional space. The disadvantage is that it can be significantly slower than gradient-based methods for well-behaved (convex) functions.  

\begin{figure}
    \centering
    \includegraphics[scale=0.6]{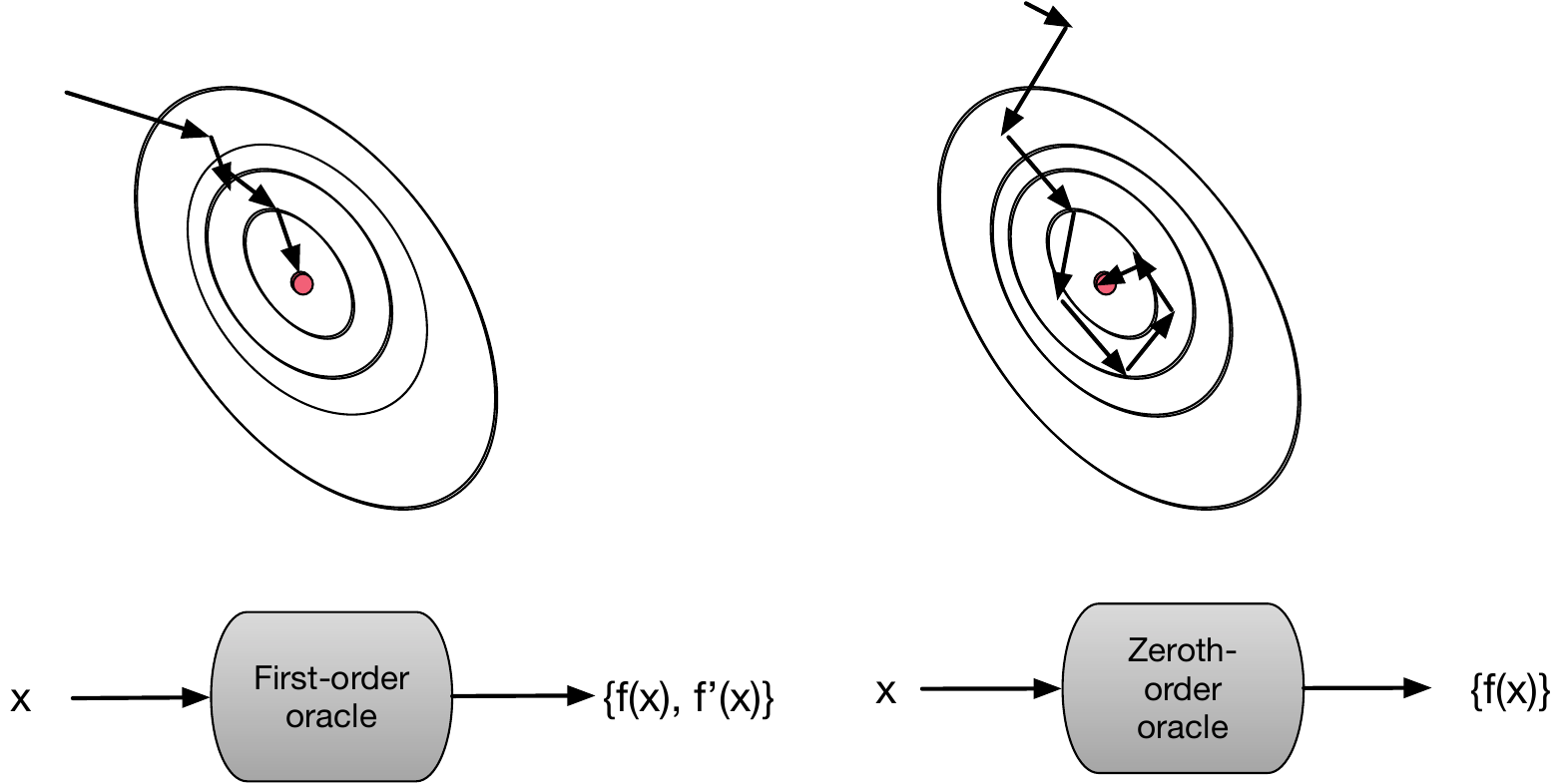}
    \caption{Zeroth-order optimization methods for generative AI are based on stochastic approximation, and average noisy values of the function to approximate gradient steps. Such methods define probabilistic coalgebras \cite{SOKOLOVA20115095}.}
    \label{zoodl}
\end{figure}

We can easily extend our previous definition of backpropagation as a coalgebra to capture zeroth-order optimization methods which act like stochastic dynamical systems,   where there is a distribution of possible ``next" states that is produced as a result of doing stochastic approximation updates. 

  \begin{definition}
{\bf Stochastic Backpropagation} defines an $F_{\mbox{SGD}}$-coalgebra over the symmetric monoidal category {\tt Param}, specified by an endofunctor $X \rightarrow F_{\mbox{SGD}}B(X)$ defined as 
\[ F_{\mbox{SGD}}(X) = A \times B \times {\cal D}(X)\]
  \end{definition}

  where $F_{\mbox{SGD}}$ defines the variant of backpropagation defined by stochastic gradient descent, and ${\cal D}$ is the distribution functor over $X$ that defines a distribution over possible objects $X$ in {\tt Param}. There is a vast literature on stochastic coalgebras that can be defined in terms of such distribution functors. \cite{SOKOLOVA20115095} contains an excellent review of this literature.

\subsection{Lambek's Theorem and Final Coalgebras: Analyzing the Convergence of Generative AI Algorithms}

Another advantage modeling backpropagation as a coalgebra defined by an endofunctor is that it provides an elegant way to analyze the problem of convergence of the algorithm to some (local) minimum solution. We explain the general principle of using final coalgebras as a generalization of (greatest) fixed points in this section. Later in Section~\ref{layer3}, when we introduce the metric Yoneda Lemma, we will show how to use the metric coinduction property to analyze convergence of backpropagation. 

Let us illustrate the concept of final coalgebras defined by a functor that represents a monotone function over the category defined by a preorder $(S, \leq)$, where $S$ is a set and $\leq$ is a relation that is reflexive and transitive. That is, $a \leq a, \forall a \in S$, and if $a \leq b$, and $b \leq c$, for $a, b, c \in S$, then $a \leq c$. Note that we can consider $(S, \leq)$ as a category, where the objects are defined as the elements of $S$ and if $a \leq b$, then there is a unique arrow $a \rightarrow b$. 

Let us define a functor $F$ on a preordered set $(S, \leq)$ as any monotone mapping $F: S \rightarrow S$,  so that if $a \leq b$, then $F(a) \leq F(b)$. Now, we can define an $F$-algebra as any {\em pre-fixed point} $x \in S$ such that $F(x) \leq x$. Similarly, we can define any {\em post-fixed point} to be any $x \in S$ such that $x \leq F(x)$. Finally, we can define the {\em final $F$-coalgebra} to be the {\em greatest post-fixed point} $x \leq F(x)$, and analogously, the {\em initial $F$-algebra} to be the least pre-fixed point of $F$. 

In this section, we give a detailed overview of the concept of {\em final coalgebras} in the category of coalgebras parameterized by some endofunctor $F$. This fundamental notion plays a central role in the application of universal coalgebras to model a diverse range of AI and ML systems. Final coalgebras generalize the concept of (greatest) fixed points in many areas of application in AI, including causal inference, game theory and network economics, optimization, and reinforcement learning among others. The final coalgebra, simply put, is just the final object in the category of coalgebras. From the universal property of final objects, it follows that for any other object in the category, there must be a unique morphism to the final object. This simple property has significant consequences in applications of the coalgebraic formalism to AI and ML, as we will see throughout this paper. 

An $F$-system $(P, \pi)$ is termed {\bf final} if for another $F$-system $(S, \alpha_S)$, there exists a unique homomorphism $f_S: (S, \alpha_S) \rightarrow (P, \pi)$. That is, $(P, \pi)$ is the terminal object in the category of coalgebras $Set_F$ defined by some set-valued endofunctor $F$. Since the terminal object in a category is unique up to isomorphism, any two final systems must be isomorphic. 

\begin{definition}
    An $F$-coalgebra $(A, \alpha)$ is a {\em fixed point} for $F$, written as $A \simeq F(A)$ if $\alpha$ is an isomorphism between $A$ and $F(A)$. That is, not only does there exist an arrow $A \rightarrow F(A)$ by virtue of the coalgebra $\alpha$, but there also exists its inverse $\alpha^{-1}: F(A) \rightarrow A$ such that 

    \[ \alpha \circ \alpha^{-1} = \mbox{{\bf id}}_{F(A)} \ \ \mbox{and} \ \  \alpha^{-1} \circ \alpha = \mbox{{\bf id}}_A \]
\end{definition}

The following lemma was shown by Lambek, and implies that the transition structure of a final coalgebra is an isomorphism. 

\begin{theorem}
    {\bf Lambek:} A final $F$-coalgebra is a fixed point of the endofunctor $F$. 
\end{theorem}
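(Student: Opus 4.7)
The plan is to exploit finality twice: once to produce a candidate inverse $f: F(A) \to A$ of $\alpha$, and once to identify $f \circ \alpha$ with $\mathrm{id}_A$ via the uniqueness clause of the universal property. The argument has no analytic content; the whole difficulty is simply recognizing which object deserves to be turned into an auxiliary coalgebra.

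First I would observe that, since $F$ is an endofunctor, applying $F$ to the structure map $\alpha \colon A \to F(A)$ yields an arrow $F(\alpha) \colon F(A) \to F(F(A))$, so the pair $\bigl(F(A), F(\alpha)\bigr)$ is itself an $F$-coalgebra. By finality of $(A, \alpha)$, there exists a unique coalgebra homomorphism $f \colon (F(A), F(\alpha)) \to (A, \alpha)$; unpacking the definition, this $f$ satisfies $\alpha \circ f = F(f) \circ F(\alpha)$. This $f$ is my candidate inverse of $\alpha$.

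Next I would argue that $f \circ \alpha = \mathrm{id}_A$ by invoking uniqueness of endomorphisms of the final coalgebra. Both $f \circ \alpha$ and $\mathrm{id}_A$ are arrows $A \to A$; the identity is trivially a coalgebra endomorphism of $(A, \alpha)$, and for $f \circ \alpha$ I would compute, using functoriality of $F$ and the homomorphism equation for $f$, that $F(f \circ \alpha) \circ \alpha = F(f) \circ F(\alpha) \circ \alpha = \alpha \circ f \circ \alpha$, which is exactly the claim that $f \circ \alpha$ is itself a coalgebra endomorphism of $(A, \alpha)$. Finality then forces $f \circ \alpha = \mathrm{id}_A$. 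For the other composite I would use functoriality directly: $\alpha \circ f = F(f) \circ F(\alpha) = F(f \circ \alpha) = F(\mathrm{id}_A) = \mathrm{id}_{F(A)}$. Together these identities exhibit $\alpha$ and $f$ as mutually inverse, so $\alpha$ is an isomorphism and $(A, \alpha)$ is a fixed point of $F$ in the sense of the preceding definition.

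The main obstacle, if there is one, is simply keeping the two uses of finality clearly separated: the \emph{existence} clause yields the map $f$ out of the auxiliary coalgebra $(F(A), F(\alpha))$, while the \emph{uniqueness} clause, applied to endomorphisms of $(A, \alpha)$, pins one of the two composites down to the identity; the other composite is then forced by functoriality. Once one notices that $F(A)$ carries a canonical coalgebra structure, the rest of the argument is pure diagram-chasing dictated by the universal property.
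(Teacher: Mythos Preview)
Your proof is correct and follows essentially the same route as the paper: both arguments equip $F(A)$ with the coalgebra structure $F(\alpha)$, invoke finality to obtain $f\colon F(A)\to A$, use uniqueness of endomorphisms of the final coalgebra to conclude $f\circ\alpha=\mathrm{id}_A$, and then derive $\alpha\circ f=\mathrm{id}_{F(A)}$ from functoriality. The only cosmetic difference is that the paper phrases the step ``$f\circ\alpha$ is a coalgebra endomorphism'' by first noting that $\alpha$ itself is a coalgebra morphism $(A,\alpha)\to(F(A),F(\alpha))$ and then composing, whereas you verify the homomorphism square for $f\circ\alpha$ directly.
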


{\bf Proof:} The proof is worth including in this paper, as it provides a classic example of the power of diagram chasing. Let $(A, \alpha)$ be a final $F$-coalgebra. Since $(F(A), F(\alpha)$ is also an $F$-coalgebra, there exists a unique morphism $f: F(A) \rightarrow A$ such that the following diagram commutes: 

\begin{tikzcd}
  F(A) \arrow[r, "f"] \arrow[d, "F(\alpha)"]
    & A \arrow[d, "\alpha" ] \\
  F(F(A)) \arrow[r,  "F(f)"]
& F(A)
\end{tikzcd}

However, by the property of finality, the only arrow from $(A, \alpha)$ into itself is the identity. We know the following diagram also commutes, by virtue of the definition of coalgebra homomorphism:

\begin{tikzcd}
  A \arrow[r, "\alpha"] \arrow[d, "\alpha"]
    & F(A) \arrow[d, "\alpha" ] \\
  F(A) \arrow[r,  "F(\alpha)"]
& F(F(A))
\end{tikzcd}

Combining the above two diagrams, it clearly follows that $f \circ \alpha$ is the identity on object $A$, and it also follows that $F(\alpha) \circ F(f)$ is the identity on $F(A)$. Therefore, it follows that: 

\[ \alpha \circ f  = F(f) \circ F(\alpha) = F(f \circ \alpha) = F(\mbox{{\bf id}}_A) = \mbox{{\bf id}}_{F(A)} \qed \]

By reversing all the arrows in the above two commutative diagrams, we get the easy duality that the initial object in an $F$-algebra is also a fixed point. 

\begin{theorem}
    {\bf Dual to Lambek}: The initial $F$-algebra $(A, \alpha)$, where $\alpha: F(A) \rightarrow A$, in the category of $F$-algebras is a fixed point of $F$. 
\end{theorem}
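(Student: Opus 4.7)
The plan is to prove the statement by dualizing the argument for Lambek's theorem, i.e.\ reversing every arrow in the two commutative diagrams used in that proof. I would record the reasoning as an explicit sequence of steps rather than appealing to duality as a black box, so that it can be verified directly against the $F$-algebra conventions.

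First I would observe that since $(A,\alpha)$ is an $F$-algebra with $\alpha \colon F(A) \to A$, applying $F$ produces another $F$-algebra $(F(A), F(\alpha))$ whose structure map $F(\alpha) \colon F(F(A)) \to F(A)$ is itself the image of $\alpha$ under $F$. By initiality of $(A,\alpha)$, there exists a unique $F$-algebra homomorphism $f \colon A \to F(A)$; unpacking the definition, this means $f \circ \alpha = F(\alpha) \circ F(f)$.

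Next I would note the easy but decisive fact that $\alpha$ itself is an $F$-algebra homomorphism $(F(A), F(\alpha)) \to (A, \alpha)$, because the homomorphism square in this case reduces to the tautology $\alpha \circ F(\alpha) = \alpha \circ F(\alpha)$. Composing, the map $\alpha \circ f \colon A \to A$ is an $F$-algebra endomorphism of the initial object, and by the uniqueness clause of initiality we must have $\alpha \circ f = \id_A$. A short functoriality calculation then yields
\[
 f \circ \alpha \;=\; F(\alpha) \circ F(f) \;=\; F(\alpha \circ f) \;=\; F(\id_A) \;=\; \id_{F(A)},
\]
so $f$ is a two-sided inverse of $\alpha$, and $\alpha \colon F(A) \to A$ is an isomorphism, which is exactly the fixed-point condition $A \simeq F(A)$ for algebras.

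There is essentially no hard step: the only content is to recognize $\alpha$ as itself a homomorphism so that $\alpha \circ f$ is pinned down by the initial-object property. The one thing to be careful about is orientation, since the $F$-algebra homomorphism condition is $h \circ \alpha = \beta \circ F(h)$ (with $\alpha,\beta$ pointing into the carriers), so the two squares from the Lambek proof must be redrawn with their vertical arrows reversed before the same diagram-chasing pattern is applied. Everything else is formal.
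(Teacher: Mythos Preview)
Your proposal is correct and follows essentially the same approach as the paper: the paper simply states that the proof ``follows in the same way'' by reversing all arrows in the two commutative diagrams from the Lambek argument and invoking the universal property of the initial object, while you have written out that dual argument explicitly. Your careful tracking of the $F$-algebra homomorphism orientation and the explicit computation $f \circ \alpha = F(\alpha \circ f) = \id_{F(A)}$ is exactly the content the paper leaves implicit.
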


The proof of the duality follows in the same way, based on the universal property that there is a unique morphism from the initial object in a category to any other object. 

Lambek's lemma  has many implications, one of which is that final coalgebras generalize the concept of a (greatest) fixed point, which can be applied to analyze the convergence of generative AI methods, such as backpropagation.   Generally speaking, in optimization, we are looking to find a solution $x \in X$ in some space $X$ that minimizes a smooth real-valued function $f: X \rightarrow \mathbb{R}$. Since $f$ is smooth, a natural algorithm is to find its minimum by computing the gradient $\nabla f$ over the space $X$. The function achieves its minimum if $\nabla f = 0$, which can be written down as a fixed point equation. 

\subsection{Metric Coinduction for Generative AI} 

To make the somewhat abstract discussion of final coalgebras above a bit more concrete, we now briefly describe the concept of {\em metric coinduction} \cite{kozen}, which is a special case of the general principle of coinduction \cite{Aczel1988-ACZNS,rutten2000universal}.  The basic idea is simple to describe, and is based on viewing algorithms as forming contraction mappings in  a metric space. The novelty here for many readers is understanding how this well-studied notion of contractive mappings is related to coinduction and coalgebras. 
One way to analyze the convergence of iterative methods like backproapgation is to see if they can be shown to form contraction mappings in a metric space. 

Consider a complete metric space $(V, d)$ where $d: V \times V \rightarrow (0, 1)$ is a symmetric distance function that satisfies the triangle inequality, and all Cauchy sequences in $V$ converge (We will see later that the property of completeness itself follows from the Yoneda Lemma!). A function $H: V \rightarrow V$ is {\em contractive} if there exists $0 \leq c < 1$ such that for all $u, v \in V$, 

\[ d(H(u), H(v)) \leq c \cdot d(u, v) \]

In effect, applying the algorithm represented by the continuous mapping $H$ causes the distances between $u$ and $v$ to shrink, and repeated application eventually guarantees convergence to a fixed point. The novelty here is to interpret the fixed point as a final coalgebra. We will later see that the concept of a (greatest) fixed point is generalized by the concept of final coalgebras.

\begin{definition}
{\bf Metric Coinduction Principle} \cite{kozen}: If $\phi$ is a closed nonempty subset of a complete metric space $V$, and if $H$ is an eventually contractive map on $V$ that preserves $\phi$, then the unique fixed point $u^*$ of $H$ is in $\phi$. In other words, we can write: 

\begin{eqnarray*}
    \exists u \phi(u), \ \ \ \forall \phi(u) &\Rightarrow& \phi(H(u)) \\
    &\phi(u^*)& 
\end{eqnarray*}
\end{definition}

It should not be surprising to those familiar with contraction mapping style arguments that a large number of applications in AI and ML, including game theory, reinforcement learning and stochastic approximation involve proofs of convergence that exploit properties of contraction mappings. What might be less familiar is how to think of this in terms of the concept of {\em coinductive inference}. To explain this perspective briefly, let us introduce the relevant category theoretic terminology. 

Let us define a category $C$ whose objects are nonempty closed subsets of $V$, and whose arrows are reverse set inclusions. That is, there is a unique arrow $\phi_1 \rightarrow \phi_2$ if $\phi_1 \supseteq \phi_2$. Then, we can define an {\em endofunctor} $\bar{H}$ as the closure mapping ${\bar H}(\phi) = \mbox{cl}(H(\phi))$, where $\mbox{cl}$ denotes the closure in the metric topology of $V$. Note that $\bar{H}$ is an endofunctor on $C$ because $\bar{H}(\phi_1) \supseteq  \bar{H}(\phi_2)$ whenever $\phi_1 \supseteq \phi_2$. 

\begin{definition}
    An {\bf $\bar{H}$-coalgebra} is defined as the pair $(\phi, \phi \supseteq \bar{H}(\phi))$ (or equivalently, we can write $\phi \subseteq H(\phi)$. The {\bf final coalgebra} is the isomorphism $u^* \simeq \bar{H}(u^*)$ where $u^*$ is the unique fixed point of the mapping $H$. The metric coinduction rule can be restated more formally in this case as: 

    \[ \phi \supseteq H(\phi) \ \ \Rightarrow \ \ \phi \supseteq H(u^*) \]
\end{definition}

This result has deep significance, as we will see later, and provides an elegant way to prove contraction style arguments in very general settings. In particular, it can be applied to analyze the convergence of machine learning methods for GAIA models to see if they can be shown to convergence in a (generalized) metric space. We will postpone the details of this analysis to a subsequent paper 

\section{Layer 1 of GAIA: Simplicial Sets for Generative AI} 

\label{layer1} 

\begin{figure}
    \centering
    \includegraphics[scale=0.25]{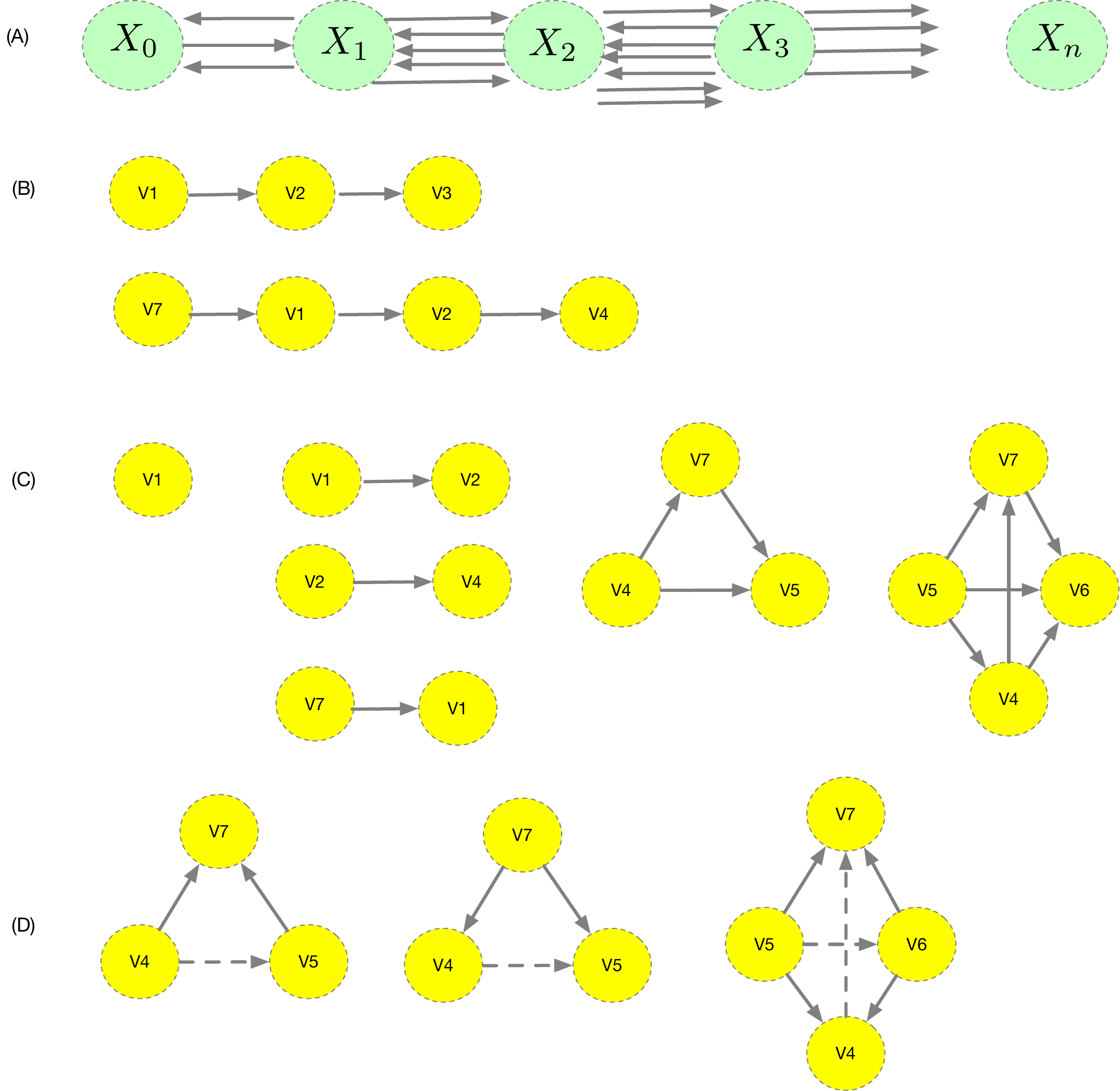}
    \caption{(A) GAIA is based on a {\em hierarchical} simplicial sets and objects. The base simplicial set $X_0$ is a set of entities that can be mapped to computational entities in generative AI, such as a tokens in a large language model, or images in a diffusion based system. The set $X_1$ defines a collection of morphisms between pairs of objects in $X_0$, where each morphism could define a deep learning module as explained in Section~\ref{backprop}. (B) The first two sets $X_0$ and $X_1$ essentially define what is possible with today's compositionally based generative AI system using backpropagation, where learning is conceived of as an entirely sequential process. (C) $X_2$ and higher-level simplicial sets constitute the novel core of GAIA: here, groups of sub-simplicial objects act like business units with a common objective. Each $n$-simplex has $n+1$ sub-simplicial complexes, and information is transmitted hierarchically in GAIA from superior simplicial sets to subordinate simplicial sets using lifting diagrams. (D) Solving ``outer horn" extension problems is more challenging for methods like deep learning with backpropagation, than solving ``inner horn" extensions. }
    \label{deltacat}
\end{figure}

Unlike earlier generative AI architectures, GAIA uses the paradigm of simplicial sets and objects as the basic building blocks for generative AI (see Figure~\ref{3simplex}). We can still cast much of the earlier work described above in terms of GAIA, but this flexibility gives us the power to also formulate generative AI approaches that lie beyond the scope of compositional learning methods, such as backpropagation. As we illustrated in Figure~\ref{simplicialgenAI}, GAIA puts together building blocks of generative AI methods as $n$-simplices of a simplicial set. In the simplest setting, these combine compositionally in the way that \cite{DBLP:conf/lics/FongST19} defined for the category {\tt Learn} that we discussed in detail in the previous section. It is possible to take the category {\tt Learn} and embed it in the category of simplicial sets using the {\em nerve functor} \cite{kerodon}, which is a full and faithful embedding of the category as a simplicial set. Each $n$-simplex is then defined by $n$-length sequences of a generative AI system, like a Transformer building block that computes a permutation-equivariant map. But the left adjoint of the nerve functor that maps back from the simplicial set category is a ``lossy" functor that does not generate a full and faithful embedding, which shows why simplicial learning is more powerful in principle than compositional learning. 

The first layer of GAIA is based on the simplicial category $\Delta$, which serves as a ``combinatorial factory" for piecing together building blocks of generative AI systems into larger units, and for decomposing complex systems into their component subsystems. The category $\Delta$ is defined over ordinal numbers $[n], n \geq 0$, but really ``comes to life" when it is plugged into some concrete category, such as the ones described in the previous section like {\tt Learn} or {\tt Para}. For example, if the parameters of a learning method are defined over a category of {\bf Sets}, then a contravariant functor from $\Delta$ into sets is called a simplicial set \cite{may1992simplicial}. We can also define functors from $\Delta$ into  some category of generative AI models, like Transformers \cite{DBLP:conf/nips/VaswaniSPUJGKP17} or structured state space sequence (S4) models \cite{DBLP:conf/iclr/GuGR22} or diffusion models \cite{DBLP:conf/nips/SongE19}. 

\subsection{Simplicial Sets and Objects} 

As shown in Figure~\ref{deltacat}, a simplicial set can be viewed as a collection of sets, or a {\em graded} set, $S_n, n \geq 0$, where $S_0$ defines the primitive objects (which can be elements of the category {\tt Param} defined in the previous section), $S_1$ represents a collection of ``edge" objects (which can be viewed as Learners as defined in the previous section), $S_2$ represents simplices of three objects interacting, and in general, $S_n$ defines a collection of objects that represents interactions of order $n$. These higher-level simplicial sets act like ``business units" in a company: they have a hierarchical structure, receive inputs and outputs from higher-level superiors and lower-level subordinates, and adjust their internal parameters. These $n$-simplicial sets are related to each other by {\em degeneracy} operators that map $S_n$ into $S_{n+1}$ or {\em face} operators that map $S_n$ into $S_{n-1}$. Figure~\ref{simplicialgenAI} shows an example of a $3$-simplex. Note how in Figure~\ref{3simplex}, the simplicial set $X_3$ sends ``back" information to $X_2$ through four face operators. These exactly correspond to the four subsimplices of each object in $X_3$, as illustrated in Figure~\ref{3simplex}, because each $3$-simplex has four faces. The crux of the GAIA framework is to treat each such simplex as a building block of a generative AI system. 

Simplicial sets are higher-dimensional generalizations of directed graphs, partially ordered sets, as well as regular categories themselves.  Importantly, simplicial sets and simplicial objects form a foundation for higher-order category theory. Simplicial objects have long been a foundation for algebraic topology, and  more recently in  higher-order category theory. The category $\Delta$ has non-empty ordinals $[n] = \{0, 1, \ldots, n]$ as objects, and order-preserving maps $[m] \rightarrow [n]$ as arrows. An important property in $\Delta$ is that any many-to-many mapping is decomposable as a composition of an injective and a surjective mapping,  each of which is decomposable into a sequence of elementary injections $\delta_i: [n] \rightarrow [n+1]$, called {\em {coface}} mappings, which omits $i \in [n]$, and a sequence of elementary surjections $\sigma_i: [n] \rightarrow [n-1]$, called {\em {co-degeneracy}} mappings, which repeats $i \in [n]$. The fundamental simplex $\Delta([n])$ is the presheaf of all morphisms into $[n]$, that is, the representable functor $\Delta(-, [n])$.  The Yoneda Lemma  assures us that an $n$-simplex $x \in X_n$ can be identified with the corresponding map $\Delta[n] \rightarrow X$. Every morphism $f: [n] \rightarrow [m]$ in $\Delta$ is functorially mapped to the map $\Delta[m] \rightarrow \Delta[n]$ in ${\cal S}$. 

Any morphism in the category $\Delta$ can be defined as a sequence of {\em co-degeneracy} and {\em co-face} operators, where the co-face operator $\delta_i: [n-1] \rightarrow [n], 0 \leq i \leq n$ is defined as: 
\[ 
\delta_i (j)  =
\left\{
	\begin{array}{ll}
		j,  & \mbox{for } \ 0 \leq j \leq i-1 \\
		j+1 & \mbox{for } \  i \leq j \leq n-1 
	\end{array}
\right. \] 

Analogously, the co-degeneracy operator $\sigma_j: [n+1] \rightarrow [n]$ is defined as 
\[ 
\sigma_j (k)  =
\left\{
	\begin{array}{ll}
		j,  & \mbox{for } \ 0 \leq k \leq j \\
		k-1 & \mbox{for } \  j < k \leq n+1 
	\end{array}
\right. \] 

Note that under the contravariant mappings, co-face mappings turn into face mappings, and co-degeneracy mappings turn into degeneracy mappings. That is, for any simplicial object (or set) $X_n$, we have $X(\delta_i) \coloneqq d_i: X_n \rightarrow X_{n-1}$, and likewise, $X(\sigma_j) \coloneqq s_j: X_{n-1} \rightarrow X_n$. 

The compositions of these arrows define certain well-known properties \cite{may1992simplicial,richter2020categories}: 
\begin{eqnarray*}
    \delta_j \circ \delta_i &=& \delta_i \circ \delta_{j-1}, \ \ i < j \\
    \sigma_j \circ \sigma_i &=& \sigma_i \circ \sigma_{j+1}, \ \ i \leq j \\ 
    \sigma_j \circ \delta_i (j)  &=&
\left\{
	\begin{array}{ll}
		\sigma_i \circ \sigma_{j+1},  & \mbox{for } \ i < j \\
		1_{[n]} & \mbox{for } \  i = j, j+1 \\ 
		\sigma_{i-1} \circ \sigma_j, \mbox{for} \ i > j + 1
	\end{array}
\right.
\end{eqnarray*}

\begin{example}
The ``vertices'' of a simplicial object ${\cal C}_n$ are the objects in  ${\cal C}$, and the ``edges'' of ${\cal C}$ are its arrows $f: X \rightarrow Y$, where $X$ and $Y$ are objects in ${\cal C}$. Given any such arrow, the degeneracy operators $d_0 f = Y$ and $d_1 f = X$ recover the source and target of each arrow. Also, given an object $X$ of category ${\cal C}$, we can regard the face operator $s_0 X$ as its identity morphism ${\bf 1}_X: X \rightarrow X$. 
\end{example}

\begin{example} 
Given a category ${\cal C}$, we can identify an $n$-simplex $\sigma$ of a simplicial set ${\cal C}_n$ with \mbox{the sequence: }
\[ \sigma = C_o \xrightarrow[]{f_1} C_1 \xrightarrow[]{f_2} \ldots \xrightarrow[]{f_n} C_n \] 
the face operator $d_0$ applied to $\sigma$ yields the sequence 
\[ d_0 \sigma = C_1 \xrightarrow[]{f_2} C_2 \xrightarrow[]{f_3} \ldots \xrightarrow[]{f_n} C_n \] 
where the object $C_0$ is ``deleted'' along with the morphism $f_0$ leaving it.  

\end{example} 

\begin{example} 
Given a category ${\cal C}$, and an $n$-simplex $\sigma$ of the simplicial set ${\cal C}_n$, the face operator $d_n$ applied to $\sigma$ yields the sequence 
\[ d_n \sigma = C_0 \xrightarrow[]{f_1} C_1 \xrightarrow[]{f_2} \ldots \xrightarrow[]{f_{n-1}} C_{n-1} \] 
where the object $C_n$ is ``deleted'' along with the morphism $f_n$ entering it.  Note this face operator can be viewed as analogous to interventions on leaf nodes in a causal DAG model. 

\end{example} 

\begin{example} 
Given a category  ${\cal C}$, and an $n$-simplex $\sigma$ of the simplicial set ${\cal C}_n$
the face operator $d_i, 0 < i < n$ applied to $\sigma$ yields the sequence 
\[ d_i \sigma = C_0 \xrightarrow[]{f_1} C_1 \xrightarrow[]{f_2} \ldots C_{i-1} \xrightarrow[]{f_{i+1} \circ f_i} C_{i+1} \ldots \xrightarrow[]{f_{n}} C_{n} \] 
where the object $C_i$ is ``deleted'' and the morphisms $f_i$ is composed with morphism $f_{i+1}$.  Note that this process can be abstractly viewed as intervening on object $C_i$ by choosing a specific value for it (which essentially ``freezes'' the morphism $f_i$ entering object $C_i$ to a constant value). 

\end{example} 

\begin{example} 
Given a category ${\cal C}$, and an $n$-simplex $\sigma$ of the simplicial set ${\cal C}_n$, 
the degeneracy operator $s_i, 0 \leq i \leq n$ applied to $\sigma$ yields the sequence 
\[ s_i \sigma = C_0 \xrightarrow[]{f_1} C_1 \xrightarrow[]{f_2} \ldots C_{i} \xrightarrow[]{{\bf 1}_{C_i}} C_{i} \xrightarrow[]{f_{i+1}} C_{i+1}\ldots \xrightarrow[]{f_{n}} C_{n} \] 
where the object $C_i$ is ``repeated'' by inserting its identity morphism ${\bf 1}_{C_i}$. 

\end{example} 

\begin{definition} 
Given a category ${\cal C}$, and an $n$-simplex $\sigma$ of the simplicial set ${\cal C}_n$, 
$\sigma$ is a {\bf {degenerate}} simplex if some $f_i$ in $\sigma$  is an identity morphism, in which case $C_i$ and $C_{i+1}$ are equal. 
\end{definition} 

\subsection{Hierarchical Learning in GAIA by solving Lifting Problems}
\label{lift} 

As we mentioned earlier, a crucial difference between GAIA and earlier generative AI architectures is that it is based on a hierarchical model of simplicial learning, rather than the standard compositional learning framework described in Section~\ref{backprop}. To understand how such a structure will work, we need to define some key ideas from higher-order category theory below, but before we do that, we want to build up some intuition as to how this process will work at a more informal level. Figure~\ref{simplicial-learning} illustrates the idea using the same figure we showed earlier as Figure~\ref{3simplex}, but here, we will use it to illustrate the simplicial learning concept. 

\begin{figure}
    \centering
    \includegraphics[scale=0.4]{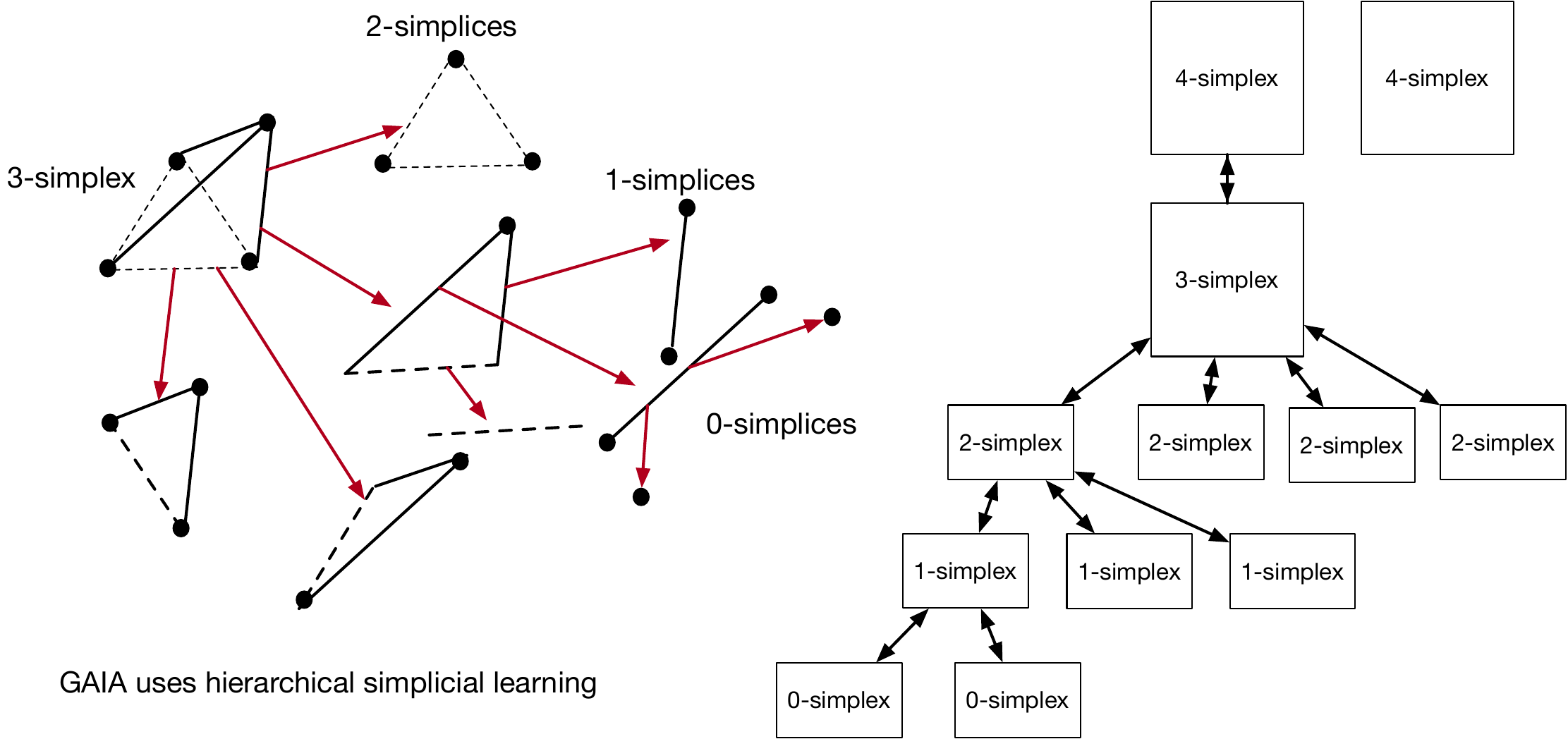}
    \caption{Example of a ``small business unit" in GAIA defined as a $3$-simplex that maintains its set of internal parameters, and updates them based on information it receives from its superiors and subordinates.}
    \label{simplicial-learning}
\end{figure}

To understand how simplicial learning works, let us consider as an example the $3$-simplex shown in Figure~\ref{simplicial-learning}.  We generalize the earlier compositional model defined in Section~\ref{backprop-functor}, where in the category {\tt Learn} (see Figure~\ref{learncat}),  each learner was a morphism in the symmetric monoidal category. In that model, each learner morphism transmits information downstream to its successors and upstream to its predecessors, but there is no hierarchical structure. Here, in Figure~\ref{simplicial-learning}, the simplicial structure defines a hierarchy of learners, so that each learner is not just a morphism anymore, but a $n$-simplex that maintains its internal set of parameters that it then updates based on the information from its superiors and subordinates. To define this more carefully, we can construct a functor that maps the algebraic structure of a simplicial set $\Delta$ into a suitable parameter space (e.g., a symmetric monoidal category like vector spaces), whereby each $n$-simplex now becomes defined as a contravariant functor $\Delta^{op} \rightarrow {\bf Vect}$. 

We can in fact use exactly the same updates defined earlier in Section~\ref{backprop-functor}, following \cite{DBLP:conf/lics/FongST19}, with the crucial difference being that the updates must be consistent across the hierarchical structure of the simplicial complex. So, each $n$-simplex is updated based on data from its subordinate $n-1$ sub-simplicial complexes and its superior $n+1$-simplicial complexes, but these need to be made consistent with each other. To solve this problem requires some additional machinery from higher-order category theory, which we now introduce below. 

Lifting problems provide elegant ways to define solutions to computational problems in category theory regarding the existence of mappings. We will use these lifting diagrams later in this paper. For example, the notion of injective and surjective functions, the notion of separation in topology, and many other basic constructs can be formulated as solutions to lifting problems. Lifting problems define ways of decomposing structures into simpler pieces, and putting them back together again. 
 
 \begin{definition}
 Let ${\cal C}$ be a category. A {\bf {lifting problem}} in ${\cal C}$ is a commutative diagram $\sigma$ in ${\cal C}$. 
 \begin{center}
 \begin{tikzcd}
  A \arrow{d}{f} \arrow{r}{\mu}
    & X \arrow[]{d}{p} \\
  B  \arrow[]{r}[]{\nu}
&Y \end{tikzcd}
 \end{center} 
 \end{definition}
 
 \begin{definition}
 Let ${\cal C}$ be a category. A {\bf {solution to a lifting problem}} in ${\cal C}$ is a morphism $h: B \rightarrow X$ in ${\cal C}$ satisfying $p \circ h = \nu$ and $h \circ f = \mu$ as indicated in the diagram below. 
 \begin{center}
 \begin{tikzcd}
  A \arrow{d}{f} \arrow{r}{\mu}
    & X \arrow[]{d}{p} \\
  B \arrow[ur,dashed, "h"] \arrow[]{r}[]{\nu}
&Y \end{tikzcd}
 \end{center} 
 \end{definition}

 \begin{definition}
 Let ${\cal C}$ be a category. If we are given two morphisms $f: A \rightarrow B$ and $p: X \rightarrow Y$ in ${\cal C}$, we say that $f$ has the {\bf {left lifting property}} with respect to $p$, or that p has the {\bf {right lifting property}} with respect to f if for every pair of morphisms $\mu: A \rightarrow X$ and $\nu: B \rightarrow Y$ satisfying the equations $p \circ \mu = \nu \circ f$, the associated lifting problem indicated in the diagram below. 
 \begin{center}
 \begin{tikzcd}
  A \arrow{d}{f} \arrow{r}{\mu}
    & X \arrow[]{d}{p} \\
  B \arrow[ur,dashed, "h"] \arrow[]{r}[]{\nu}
&Y \end{tikzcd}
 \end{center} 
admits a solution given by the map $h: B \rightarrow X$ satisfying $p \circ h = \nu$ and $h \circ f = \mu$. 
 \end{definition}

 \cite{lifting} shows that a remarkable number of properties in mathematics can be defined as lifting problems. \cite{SPIVAK_2013} showed that query answering in languages like SQL in relational databases can be formalized as lifting problems. \cite{DBLP:journals/entropy/Mahadevan23} showed that causal inference could be posed in terms of lifting problems. At its heart, a lifting problem defines a constrained search over a space of parameters, and it is that property that makes it so useful in generative AI because in effect, methods like backpropagation can be viewed as solving lifting problems. As a simple example to build intuition, here is a way any surjective (onto) function as a solution to a lifting problem. 
 
 \begin{example}
 Given the paradigmatic non-surjective morphism $f: \emptyset \rightarrow \{ \bullet \}$, any morphism p that has the right lifting property with respect to f is a {\bf {surjective mapping}}. 
. 
 \begin{center}
 \begin{tikzcd}
  \emptyset \arrow{d}{f} \arrow{r}{\mu}
    & X \arrow[]{d}{p} \\
  \{ \bullet \} \arrow[ur,dashed, "h"] \arrow[]{r}[]{\nu}
&Y \end{tikzcd}
 \end{center} 

 \end{example}

 Similarly, here is another lifting problem whose solution defines an $1-1$ injective function. 

  \begin{example}
 Given the paradigmatic non-injective morphism $f: \{ \bullet, \bullet \} \rightarrow \{ \bullet \}$, any morphism p that has the right lifting property with respect to f is an {\bf {injective mapping}}. 
.
 \begin{center}
 \begin{tikzcd}
  \{\bullet, \bullet \} \arrow{d}{f} \arrow{r}{\mu}
    & X \arrow[]{d}{p} \\
  \{ \bullet \} \arrow[ur,dashed, "h"]  \arrow[]{r}[]{\nu}
&Y \end{tikzcd}
 \end{center} 

 \end{example}

\subsection{Simplicial Subsets and Horns in GAIA}

To explain how lifting problems can be used for generative AI in GAIA, we need to define lifting problems over $n$-simplicial complexes. The basic idea, as illustrated in Figure~\ref{simplicial-learning}, is that we construct a solution to a lifting problem by asking if a particular sub-simplicial complex can be ``extended" into the whole complex. This extension process is essentially what methods like backpropagation are doing, and universal approximation results for Transformers \cite{DBLP:conf/iclr/YunBRRK20} are in effect saying that a solution to a lifting problem exists for a particular class of simplicial complexes defined as $n$-length sequences of Transformer models. 

We first describe more complex ways of extracting parts of categorical structures using simplicial subsets and horns. These structures will play a key role in defining suitable lifting problems. 
 
 \begin{definition}
 The {\bf {standard simplex}} $\Delta^n$ is the simplicial set defined by the construction 
 \[ ([m] \in \Delta) \mapsto {\bf Hom}_\Delta([m], [n]) \] 
 
 By convention, $\Delta^{-1} \coloneqq \emptyset$. The standard $0$-simplex $\Delta^0$ maps each $[n] \in \Delta^{op}$ to the single element set $\{ \bullet \}$. 
 \end{definition}
 
 \begin{definition}
 Let $S_\bullet$ denote a simplicial set. If for every integer $n \geq 0$, we are given a subset $T_n \subseteq S_n$, such that the face and degeneracy maps 
 \[ d_i: S_n \rightarrow S_{n-1} \ \ \ \ s_i: S_n \rightarrow S_{n+1} \] 
 applied to $T_n$ result in 
 \[ d_i: T_n \rightarrow T_{n-1} \ \ \ \ s_i: T_n \rightarrow T_{n+1} \] 
 then the collection $\{ T_n \}_{n \geq 0}$ defines a {\bf {simplicial subset}} $T_\bullet \subseteq S_\bullet$
 \end{definition}
 
 \begin{definition}
 The {\bf {boundary}} is a simplicial set $(\partial \Delta^n): \Delta^{op} \rightarrow$ {\bf {Set}} defined as
 \[ (\partial \Delta^n)([m]) = \{ \alpha \in {\bf Hom}_\Delta([m], [n]): \alpha \ \mbox{is not surjective} \} \]
 \end{definition}
 
 Note that the boundary $\partial \Delta^n$ is a simplicial subset of the standard $n$-simplex $\Delta^n$. 
 
 \begin{definition}
 The {\bf {Horn}} $\Lambda^n_i: \Delta^{op} \rightarrow$ {\bf {Set}} is defined as
 \[ (\Lambda^n_i)([m]) = \{ \alpha \in {\bf Hom}_\Delta([m],[n]): [n] \not \subseteq \alpha([m]) \cup \{i \} \} \] 
 \end{definition}
 
 Intuitively, the Horn $\Lambda^n_i$ can be viewed as the simplicial subset that results from removing the interior of the $n$-simplex $\Delta^n$ together with the face opposite its $i$th vertex.   
 
Consider the problem of composing $1$-dimensional simplices  to form a $2$-dimensional simplicial object. Each simplicial subset of an $n$-simplex induces a  a {\em horn} $\Lambda^n_k$, where  $ 0 \leq k \leq n$. Intuitively, a horn is a subset of a simplicial object that results from removing the interior of the $n$-simplex and the face opposite the $i$th vertex. Consider the three horns defined below. The dashed arrow  $\dashrightarrow$ indicates edges of the $2$-simplex $\Delta^2$ not contained in the horns. 
\begin{center}
 \begin{tikzcd}[column sep=small]
& \{0\}  \arrow[dl] \arrow[dr] & \\
  \{1 \} \arrow[rr, dashed] &                         & \{ 2 \} 
\end{tikzcd} \hskip 0.5 in 
 \begin{tikzcd}[column sep=small]
& \{0\}  \arrow[dl] \arrow[dr, dashed] & \\
  \{1 \} \arrow{rr} &                         & \{ 2 \} 
\end{tikzcd} \hskip 0.5in 
 \begin{tikzcd}[column sep=small]
& \{0\}  \arrow[dl, dashed] \arrow[dr] & \\
  \{1 \} \arrow{rr} &                         & \{ 2 \} 
\end{tikzcd}
\end{center}

The inner horn $\Lambda^2_1$ is the middle diagram above, and admits an easy solution to the ``horn filling'' problem of composing the simplicial subsets. The two outer horns on either end pose a more difficult challenge. For example, filling the outer horn $\Lambda^2_0$ when the morphism between $\{0 \}$ and $\{1 \}$ is $f$ and that between $\{0 \}$ and $\{2 \}$ is the identity ${\bf 1}$ is tantamount to finding the left inverse of $f$ up to homotopy. Dually, in this case, filling the outer horn $\Lambda^2_2$ is tantamount to finding the right inverse of $f$ up to homotopy. A considerable elaboration of the theoretical machinery in category theory is required to describe the various solutions proposed, which led to different ways of defining higher-order category theory \cite{weakkan,quasicats,Lurie:higher-topos-theory}. 

\subsection{Higher-Order Categories} 

We now formally introduce higher-order categories, building on the framework proposed in a number of formalisms. We briefly summarize various approaches to the horn filling problem in higher-order category theory.
 
 \begin{definition}
 Let $f: X \rightarrow S$ be a morphism of simplicial sets. We say $f$ is a {\bf {Kan fibration}} if, for each $n > 0$, and each $0 \leq i \leq n$, every lifting problem. 
 \begin{center}
 \begin{tikzcd}
  \Lambda^n_i \arrow{d}{} \arrow{r}{\sigma_0}
    & X \arrow[]{d}{f} \\
  \Delta^n \arrow[ur,dashed, "\sigma"] \arrow[]{r}[]{\bar{\sigma}}
&S \end{tikzcd}
 \end{center}  
 admits a solution. More precisely, for every map of simplicial sets $\sigma_0: \Lambda^n_i \rightarrow X$ and every $n$-simplex $\bar{\sigma}: \Delta^n \rightarrow S$ extending $f \circ \sigma_0$, we can extend $\sigma_0$ to an $n$-simplex $\sigma: \Delta^n \rightarrow X$ satisfying $f \circ \sigma = \bar{\sigma}$. 
 \end{definition}
 
 \begin{example}
Given a simplicial set $X$, then a projection map $X \rightarrow \Delta^0$ that is a Kan fibration is called a {\bf {Kan complex}}. 
\end{example} 

\begin{example}
Any isomorphism between simplicial sets is a Kan fibration. 
\end{example}

\begin{example}
The collection of Kan fibrations is closed under retracts. 
\end{example}

\begin{definition}\cite{Lurie:higher-topos-theory}
\label{ic} 
An $\infty$-category is a simplicial object $S_\bullet$ which satisfies the following condition: 

\begin{itemize} 
\item For $0 < i < n$, every map of simplicial sets $\sigma_0: \Lambda^n_i \rightarrow S_\bullet$ can be extended to a map $\sigma: \Delta^n \rightarrow S_i$. 
\end{itemize} 
\end{definition}

This definition emerges out of a common generalization of two other conditions on a simplicial set $S_i$: 

\begin{enumerate} 
\item {\bf {Property K}}: For $n > 0$ and $0 \leq i \leq n$, every map of simplicial sets $\sigma_0: \Lambda^n_i \rightarrow S_\bullet$ can be extended to a map $\sigma: \Delta^n \rightarrow S_i$. 

\item {\bf {Property C}}:  for $0 < 1 < n$, every map of simplicial sets $\sigma_0: \Lambda^n_i \rightarrow S_i$ can be extended uniquely to a map $\sigma: \Delta^n \rightarrow S_i$. 
\end{enumerate} 

Simplicial objects that satisfy property K were defined above to be Kan complexes. Simplicial objects that satisfy property C above can be identified with the nerve of a category, which yields a full and faithful embedding of a category in the category of sets.  definition~\ref{ic} generalizes both of these definitions, and was called a {\em {quasicategory}} in \cite{quasicats} and {\em {weak Kan complexes}} in \cite{weakkan} when ${\cal C}$ is a category.

\section{Layer 2 of GAIA: Generative AI using Simplicial Categories} 

\label{layer2} 

The second layer of GAIA is based on defining generative models as universal coalgebras over some base category, including the standard mathematical categories ({\bf Sets}, measurable spaces {\bf Meas}, topoogical spaces {\bf Top} or Vector spaces {\bf Vect}). Existing approaches to generative AI, such as Transformers \cite{DBLP:conf/nips/VaswaniSPUJGKP17}, structured state-space models \cite{DBLP:conf/iclr/GuGR22}, or image diffusion models \cite{DBLP:conf/nips/SongE19} can all be defined as (stochastic) coalgebras over one of the base categories. 
We first introduce some basic categorical structures, and then define the category of  universal coalgebras over these. to  define generative AI systems as coalgebras. 

\subsection{Categories as Building Blocks of GAIA} 

GAIA is built on the hypothesis that category theory provides a universal language for encoding foundation models. We define a few salient aspects of category theory in this section, including showing how it can reveal surprising similarities between algebraic structures that superficially look very different, such as metric spaces and partially ordered sets (see Table~\ref{homtable}). A key principle that is often exploited is to explicitly represent the structure in the collection of morphisms between two objects. That is, for some category {\cal C}, the {\bf Hom}$_{\cal C}(a,b)$ between objects $a$ and $b$ might itself have some additional structure beyond that of merely being a collection or a set. For example, in the category of vector spaces, the set of morphisms (linear transformations) between two vector spaces $U$ and $V$ is itself a vector space. So-called {\cal V}-enriched categories signify cases when the {\bf Hom} values are specified in some structure {\cal V}. Examples include metric spaces, where the {\bf Hom} values are non-negative real numbers representing distances, and partially ordered sets (posets) where the {\bf Hom} values are Boolean. 

\begin{table}[t] 
\caption{Categories are defined as collections of objects and arrows between them.}
\vskip 0.1in
\begin{minipage}{0.7\textwidth}
 \begin{small}\hfill 
  \begin{tabular}{|c|c|c|c|c|} \hline 
{\cal C}  & {\bf Hom}$_{\cal C}$ values   & Composition  and & Domain  & Domain for \\ 
& & and identity law & for composition & for identity laws \\ \hline 
General category & Sets & Functions & Cartesian product & One element set \\ \hline 
Metric spaces & Non-negative numbers  & $\geq$  & sum & zero \\ \hline  
Posets & Truth values & Entailment & Conjunction & true \\ \hline
{\cal V}-enriched  & objects  & morphisms  & tensor product  & unit object for   \\ 
category & in {\cal V} & in {\cal V} & in {\cal V} & tensor product in {\cal V} \\ \hline 
$F:{\cal C}^{op} \times C \rightarrow D$ & Bivalent functors & Dinatural transformations & Probabilities, distances & Unit object \\
Coends, ends & & & topological embeddings & \\ \hline 
\end{tabular}
\end{small}
\end{minipage}
\label{homtable}
\end{table}

The aim of category theory is to build a ``unified field theory" of mathematics based on a simple model of {\em objects} that {\em interact} with each other, analogous to directed graph representations. In graphs, vertices represent arbitrary entities, and the edges denote some form of (directional) interaction. In categories, there is no restriction on how many edges exist between any two objects. In a {\em locally small} category, there is assumed to be a ``set's worth" of edges, meaning that it could still be infinite! In addition, small categories are assumed to contain a set's worth of objects (again, which might not be finite).  The framework is {\em compositional}, in that categories can be formed out of objects, {\em arrows} that define the interaction between objects, or {\em functors} that define the interactions between categories. This compositionality gives us a rich {\em generative} space of models that will be invaluable in modeling UIGs. 

Category theory gives an exceptional set of  ``measuring tools" for modeling Transformers and other generative models in AI. Choosing a category means selecting a collection of objects and a collection of composable arrows by which each pair of objects interact. This choice of objects and arrows defines the measurement apparatus that is used in formulating and solving an imitation game. A key result called the Yoneda Lemma shows that {\em objects can be identified up to isomorphism solely by their interactions with other objects}. Category theory also embodies the principle of  {\em universality}: a property is universal if it defines an {\em initial} or {\em final} object in a category. Many approaches in generative AI, such as probabilistic generative models or distance metrics, can be abstractly characterized as initial or final objects in a category of {\em wedges}, where the objects are bifunctors and the arrows are dinatural transformations.  \cite{loregian_2021} has an excellent treatment of the calculus of coends, which we will discuss in detail later in the paper. At a high level, the notion of object isomorphism in category theory is defined as follows. 

\begin{definition}
Two objects $X$ and $Y$ in a category ${\cal C}$ are deemed {\bf {isomorphic}}, or $X \cong Y$ if and only if there is an invertible morphism $f: X \rightarrow Y$, namely $f$ is both {\em left invertible} using a morphism $g: Y \rightarrow X$ so that $g \circ f = $ {{\bf id}}$_X$, and $f$ is {\em right invertible} using a morphism $h$ where $f \circ h = $ {{\bf id}}$_Y$. 
\end{definition}

Category theory provides a rich language to describe how objects interact, including notions like {\em braiding} that plays a key role in quantum computing \cite{Coecke_2016}. The notion of isomorphism can be significantly weakened to include notions like homotopy.  This notion of homotopy generalizes the notion of homotopy in topology, which defines why an object like a coffee cup is topologically homotopic to a doughnut (they have the same number of ``holes'').  In the category {\bf {Sets}}, two finite sets are considered isomorphic if they have the same number of elements, as it is then trivial to define an invertible pair of morphisms between them. In the category {\bf {Vect}}$_k$ of vector spaces over some field $k$, two objects (vector spaces) are isomorphic if there is a set of invertible linear transformations between them. As we will see below, the passage from a set to the ``free'' vector space generated by elements of the set is another manifestation of the universal arrow property. In the category of topological spaces {\bf Top}, two objects are isomorphic if there is a pair of continuous functions that makes them {\em homeomorphic} \cite{may2012more}. A more refined category is {\em hTop}, the category defined by topological spaces where the arrows are now given by homotopy classes of continuous functions. 
     
 \begin{definition}
 Let $C$ and $C'$ be a pair of objects in a category ${\cal C}$. We say $C$ is {\bf {a retract}} of $C'$ if there exists maps $i: C \rightarrow C'$ and $r: C' \rightarrow C$ such that $r \circ i = \mbox{id}_{\cal C}$. 
 \end{definition}
 
 \begin{definition}
 Let ${\cal C}$ be a category. We say a morphism $f: C \rightarrow D$ is a {\bf {retract of another morphism}} $f': C \rightarrow D$ if it is a retract of $f'$ when viewed as an object of the functor category {\bf {Hom}}$([1], {\cal C})$. A collection of morphisms $T$ of ${\cal C}$ is {\bf {closed under retracts}} if for every pair of morphisms $f, f'$ of ${\cal C}$, if $f$ is a retract of $f'$, and $f'$  is in $T$, then $f$ is also in $T$. 
 \end{definition}

 The point of these examples is to illustrate that choosing a category, which means choosing a collection of objects and arrows, is like defining a measurement system for deciding if two objects are isomorphic. A richer model of interaction is provided by {\em simplicial sets} \cite{may1992simplicial}, which is a {\em graded set} $S_n, \ n \geq 0$, where $S_0$ represents a set of non-interacting objects, $S_1$ represents a set of pairwise interactions, $S_2$ represents a set of three-way interactions, and so on. We can map any category into a simplicial set by constructing sequences of length $n$ of composable morphisms. For example, we can model sequences of words in a language as composable morphisms, thereby constructing a simplicial set representation of language-based interactions in an imitation game. Then, the corresponding notion of homotopy between simplicial sets is defined as \cite{richter2020categories}: 

 \begin{definition}
  Let X and Y be simplicial sets, and suppose we are given a pair of morphisms $f_0, f_1: X \rightarrow Y$. A {\bf {homotopy}} from $f_0$ to $f_1$ is a morphism $h: \Delta^1 \times X \rightarrow Y$ satisfying $f_0 = h |_{{0} \times X}$ and $f_1 = h_{ 1 \times X}$. 
 \end{definition}

\begin{figure}[t]
\begin{center}
\begin{tabular}{|c |c | } \hline 
{\bf Set theory } & {\bf Category theory} \\ \hline 
 set & object \\ 
 subset & subobject \\
 truth values $\{0, 1 \} $ & subobject classifier $\Omega$ \\
power set $P(A) = 2^A$ & power object $P(A) = \Omega^A$ \\ \hline
bijection & isomorphims \\ 
injection & monic arrow \\
surjection & epic arrow \\ \hline
singleton set $\{ * \}$ & terminal object ${\bf 1}$ \\ 
empty set $\emptyset$ & initial object ${\bf 0}$ \\
elements of a set $X$ & morphism $f: {\bf 1} \rightarrow X$ \\
- & functors, natural  transformations \\ 
- & limits, colimits, adjunctions \\ \hline
\end{tabular}
\end{center}
\caption{Comparison of notions from set theory and category theory.} 
\label{setvscategories}
\end{figure} 

\begin{figure}[h]
\centering
\includegraphics[scale=.5]{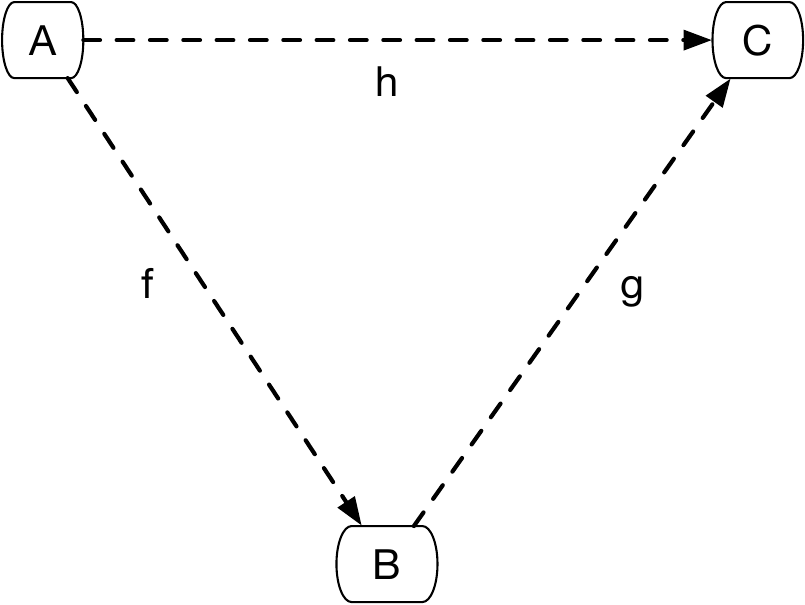}
\caption{Category theory is a compositional model of a system in terms of objects and their interactions.}
\label{morphisms} 
\end{figure}

Figure~\ref{setvscategories} compares the basic notions in set theory vs. category theory. Figure~\ref{morphisms} illustrates a simple category of 3 objects: $A$, $B$, and $C$ that interact through the morphisms $f: A \rightarrow B$, $g: B \rightarrow C$, and $h: A \rightarrow C$. Categories involve a fundamental notion of {\em composition}: the morphism $h: A \rightarrow C$ can be defined as the composition $g \circ f$ of the morphisms from $f$ and $g$. What the objects and morphisms represent is arbitrary, and like the canonical directed graph model, this abstractness gives category theory -- like graph theory -- a universal quality in terms of applicability to a wide range of problems. While categories and graphs and intimately related, in a category, there is no assumption of finiteness in terms of the cardinality of objects or morphisms. A category is defined to be {\em small} or {\em locally small} if there is a set's worth of objects and between any two objects, a set's worth of morphisms, but of course, a set need not be finite. As a simple example, the set of integers $\mathbb{Z}$ defines a category, where each integer $z$ is an object and there is a morphism $f: a \rightarrow b$ between integers $a$ and $b$ if $a \leq b$. This example serves to immediately clarify an important point: a category is only defined if both the objects and morphisms are defined. The category of integers $\mathbb{Z}$ may be defined in many ways, depending on what the morphisms represent. 

Briefly, a category is a collection of objects, and a collection of morphisms between pairs of objects, which are closed under composition, satisfy associativity, and include an identity morphism for every object. For example, sets form a category under the standard morphism of functions. Groups, modules, topological spaces and vector spaces all form categories in their own right, with suitable morphisms (e.g, for groups, we use group homomorphisms, and for vector spaces, we use linear transformations). 

A simple way to understand the definition of a category is to view it as a ``generalized" graph, where there is no limitation on the number of vertices, or the number of edges between any given pair of vertices. Each vertex defines an object in a category, and each edge is associated with a morphism. The underlying graph induces a ``free'' category where we consider all possible paths between pairs of vertices (including self-loops) as the set of morphisms between them. In the reverse direction, given a category, we can define a ``forgetful'' functor that extracts the underlying graph from the category, forgetting the composition rule. 

\begin{definition}
\label{cat-defn}
A {\bf {graph}} ${\cal G}$ (sometimes referred to as a quiver) is a labeled directed multi-graph defined by a set $O$ of {\em objects}, a set $A$ of {\em arrows},  along with two morphisms $s: A \rightarrow O$ and $t: A \rightarrow O$ that specify the domain and co-domain of each arrow.  In this graph, we define the set of composable pairs of arrows by the set 
\[ A \times_O A = \{\langle g, f  \rangle | \ g, f \in A, \ \ s(g) = t(f) \} \]

A {\bf {category}} ${\cal C}$ is a graph ${\cal G}$ with two additional functions: {${\bf id}:$} $O \rightarrow A$, mapping each object $c \in C$ to an arrow {${\bf id}_c$} and $\circ: A \times_O A \rightarrow A$, mapping each pair of composable morphisms $\langle f, g \rangle$ to their composition $g \circ f$. 
\end{definition}

It is worth emphasizing that no assumption is made here of the finiteness of a graph, either in terms of its associated objects (vertices) or arrows (edges). Indeed, it is entirely reasonable to define categories whose graphs contain an infinite number of edges. A simple example is the group $\mathbb{Z}$ of integers under addition, which can be represented as a single object, denoted $\{ \bullet \}$ and an infinite number of morphisms $f: \bullet \rightarrow \bullet$, each of which represents an integer, where composition of morphisms is defined by addition. In this example, all morphisms are invertible. In a general category with more than one object, a {\em groupoid} defines a category all of whose morphisms are invertible. A central principle in category theory is to avoid the use of equality, which is pervasive in mathematics, in favor of a more general notion of {\em isomorphism} or weaker versions of it. Many examples of categories can be given that are relevant to specific problems in AI and ML.  Some examples of categories of common mathematical structures are illustrated below. 

\begin{itemize} 

\item {\bf Set}: The canonical example of a category is {\bf Set}, which has as its objects, sets, and morphisms are functions from one set to another. The {\bf Set} category will play a central role in our framework, as it is fundamental to the universal representation constructed by Yoneda embeddings. 

\item {\bf Top:} The category {\bf Top} has topological spaces as its objects, and continuous functions as its morphisms. Recall that a topological space $(X, \Xi)$ consists of a set $X$, and a collection of subsets $\Xi$ of $X$ closed under finite intersection and arbitrary unions. 

\item {\bf Group:} The category {\bf Group} has groups as its objects, and group homomorphisms as its morphisms. 

\item {\bf Graph:} The category {\bf Graph} has graphs (undirected) as its objects, and graph morphisms (mapping vertices to vertices, preserving adjacency properties) as its morphisms. The category {\bf DirGraph} has directed graphs as its objects, and the morphisms must now preserve adjacency as defined by a directed edge. 

\item {\bf Poset:} The category {\bf Poset} has partially ordered sets as its objects and order-preserving functions as its morphisms. 

\item {\bf Meas:} The category {\bf Meas} has measurable spaces as its objects and measurable functions as its morphisms. Recall that a measurable space $(\Omega, {\cal B})$ is defined by a set $\Omega$ and an associated $\sigma$-field of subsets {\cal B} that is closed under complementation, and arbitrary unions and intersections, where the empty set $\emptyset \in {\cal B}$. 

\end{itemize} 

\subsection{A Categorical Theory of Transformer Models} 

To illustrate the power of the simplicial sets and objects framework, we want to briefly explain how we can use it to define a novel hierarchical framework for generative AI, where each morphism $[m] \rightarrow [n]$ can be mapped into a Transformer module \cite{DBLP:conf/nips/VaswaniSPUJGKP17}.  It is straightforward to extend our discussion below to  other building blocks of generative AI systems, including structured state space sequence models \cite{DBLP:conf/iclr/GuGR22} or image diffusion models \cite{DBLP:conf/nips/SongE19}. As with all generative AI systems, the fundamental structure of a Transformer model is that it is a compositional structure made up of modular components, each of which computes a {\em permutation-equivariant} function over the vector space $\mathbb{R}^{d \times n}$ of $n$-length sequences of tokens, each embedded in a space of dimension $d$. We can define a commutative diagram showing the permutation equivariant property as shown below. 

To begin with, following \cite{DBLP:conf/lics/FongST19}, we can generically  define a neural network layer of type $(n_1, n_2)$ as a subset $C \subseteq [n_1] \times [n_2]$ where $n_1, n_2 \in \mathbb{N}$ are natural numbers, and $[n] = \{1, \ldots, n \}$. Notice how these can be viewed as the objects of a simplicial category $\Delta$. These numbers $n_1$ and $n_2$ serve to define the number of inputs and outputs of each layer, $C$ is a set of connections, and $(i, j) \in C$ means that node $i$ is connected to node $j$ in the network diagram. It is straightforward, but perhaps tedious, to define activation functions $\sigma: \mathbb{R} \rightarrow \mathbb{R}$ for each layer, but essentially each network layer defines a parameterized function $I: \mathbb{R}^{|C| + n_2} \times \mathbb{R}^{n_1} \rightarrow \mathbb{R}^{n_2}$, where the $\mathbb{R}^|C|$ define the edge weights of each network edge and the $\mathbb{R}^{n_2}$ factor encodes individual unit biases. We can specialize these to Transformer models, in particular, noting that the Transfomer models compute specialized types of permutation-equivariant functions as defined by the commutative diagram below. 

\[\begin{tikzcd}
	X && Y && Z \\
	\\
	XP && YP && ZP
	\arrow["f", from=1-1, to=1-3]
	\arrow["P", from=1-3, to=3-3]
	\arrow["P"', from=1-1, to=3-1]
	\arrow["f"', from=3-1, to=3-3]
	\arrow["g", from=1-3, to=1-5]
	\arrow["g"', from=3-3, to=3-5]
	\arrow["P", from=1-5, to=3-5]
\end{tikzcd}\]

In the above commutative diagram, vertices are objects, and arrows are morphisms that define the action of a Transformer block. Here, $X \in \mathbb{R}^{d \times n}$ is a $n$-length sequence of tokens of dimensionality $d$. $P$ is a permutation matrix. The function $f$ computed by a Transformer block is such that $f(XP) = f(X) P$. This property is defined in the above diagram by setting $Y = f(X) P$, which can be computed in two ways, either first by permuting the input by the matrix $P$, and then applying $f$, or by 

Let us understand the permutation equivariant property of the Transformer model in a bit more detail. Our notation for the Transformer model is based on \cite{DBLP:conf/iclr/YunBRRK20}, although there are countless variations in the literature that we do not discuss further. These can be adapted into our categorical framework fairly straightforwardly based on the approach outlined below. Transformer models are inherently compositional, which makes them particularly convenient to model using category theory. \begin{definition}\cite{DBLP:conf/iclr/YunBRRK20,DBLP:conf/nips/VaswaniSPUJGKP17}
    A {\bf Transformer} block is a sequence-to-sequence function mapping $\mathbb{R}^{d \times n} \rightarrow \mathbb{R}^{d \times n}$. There are generally two layers: a {\em self-attention} layer \cite{DBLP:conf/nips/VaswaniSPUJGKP17} and a token-wise feedforward layer. We assume tokens are embedded in a space of dimension $d$. Specifically, we model the inputs $X \in \mathbb{R}^{d \times n}$ to a Transformer block as $n$-length sequences of tokens in $d$ dimensions, where each block computes the following function defined as $t^{h,m,r}: \mathbb{R}^{d \times n}: \mathbb{R}^{d \times n}$: 

\begin{eqnarray*}
    \mbox{Attn}(X) &=& X + \sum_{i=1}^h W^i_O W^i_V X \cdot \sigma[W^i_K X)^T W^i_Q X] \\
    \mbox{FF}(X) &=& \mbox{Attn}(X) + W_2 \cdot \mbox{ReLU}(W_1 \cdot \mbox{Attn}(X) + b_1 {\bf 1}^T_n, 
\end{eqnarray*}

where $W^i_O \in \mathbb{R}^{d \times n}$, $W^i_K, W^i_Q, W^i_Q \in \mathbb{R}^{d \times n}$, $W_2 \in \mathbb{R}^{d \times r}$, $W_1 \in \mathbb{R}^{r \times d}$, and $b_1 \in \mathbb{R}^r$. The output of a Transformer block is $FF(X)$. Following convention, the number of ``heads" is $h$, and each ``head"  size $m$
are the principal parameters of the attention layer, and the size of the ``hidden" feed-forward layer is $r$. 

\end{definition}

Transformer models take as input objects $X \in \mathbb{R}^{d \times n}$ representing $n$-length sequences of tokens in $d$ dimensions, and act as morphisms that represent permutation equivariant functions $f: \mathbb{R}^{d \times n} \rightarrow \mathbb{R}^{d \times n}$ such that $f(XP) = f(X)P$ for any permutation matrix.  \cite{DBLP:conf/iclr/YunBRRK20} show that the actual function computed by the Transformer model defined above is a permutation equivariant mapping. 

Categories are compositional structures, which can be built out of smaller objects. Concretely, we define a category of transformers ${\cal C}_{T}$  where the objects are vectors $x \in \mathbb{R}^{d \times n}$ representing sequences of $d$-dimensional tokens of length $n$, and the composable arrows are {\em permutation-invariant} functions ${\cal T}^{h,m,r}$ comprised of a composition of  transformer blocks $t^{h,m,r}$ of $h$ heads of size $m$ each, and a feedforward layer of $r$ hidden nodes.  Objects in a category interact with each other through arrows or morphisms. In the category ${\cal C}_T$ of Transformer models, the morphisms are the equivariant maps $f$ by which one Transformer model block can be composed with another. 

\begin{definition}
    The category ${\cal C}_T$ of Transformer models is defined as follows: 
    \begin{itemize}
        \item The objects $\mbox{Obj({\cal C})}$ are defined as vectors $X \in \mathbb{R}^{d \times n}$ denoting $n$-length sequences of tokens of dimension $d$. 

        \item The arrows or morphisms of the category ${\cal C}_T$ are defined as a family of sequence-to-sequence functions and defined as: 

        \[ T^{h,m,r} \coloneqq \{f: \mathbb{R}^{d \times n} \rightarrow \mathbb{R}^{d \times n} \ | \ \mbox{where} \ f(XP) = X P, \ \mbox{for some permutation matrix} \ P \} \]
    \end{itemize}
\end{definition}

\subsection{Constructing Simplicial Transformers from Transformer Categories}

We now show how we can define a novel hierarchical theory of simplicial Transformers, first by embedding the category of Transformers into a simplicial set by computing the {\em nerve} of a functor that maps ${\cal C}_T$ into the simplicial set $S^T_\bullet$.  The nerve of a category ${\cal C}$ enables embedding ${\cal C}$ into the category of simplicial objects, which is a fully faithful embedding \cite{Lurie:higher-topos-theory,richter2020categories}. 

\begin{definition} 
\label{fully-faithful} 
Let ${\cal F}: {\cal C} \rightarrow {\cal D}$ be a functor from category ${\cal C}$ to category ${\cal D}$. If for all arrows $f$ the mapping $f \rightarrow F f$

\begin{itemize}
    \item injective, then the functor ${\cal F}$ is defined to be {\bf {faithful}}. 
    \item surjective, then the functor ${\cal F}$ is defined to be {\bf {full}}.  
    \item bijective, then the functor ${\cal F}$ is defined to be {\bf {fully faithful}}. 
\end{itemize}
\end{definition} 

\begin{definition}
The {\bf {nerve}} of a category ${\cal C}$ is the set of composable morphisms of length $n$, for $n \geq 1$.  Let $N_n({\cal C})$ denote the set of sequences of composable morphisms of length $n$.  
\[ \{ C_o \xrightarrow[]{f_1} C_1 \xrightarrow[]{f_2} \ldots \xrightarrow[]{f_n} C_n \ | \ C_i \ \mbox{is an object in} \ {\cal C}, f_i \ \mbox{is a morphism in} \ {\cal C} \} \] 
\end{definition}

The set of $n$-tuples of composable arrows in {\cal C}, denoted by $N_n({\cal C})$,  can be viewed as a functor from the simplicial object $[n]$ to ${\cal C}$.  Note that any nondecreasing map $\alpha: [m] \rightarrow [n]$ determines a map of sets $N_m({\cal C}) \rightarrow N_n({\cal C})$.  The nerve of a category {\cal C} is the simplicial set $N_\bullet: \Delta \rightarrow N_n({\cal C})$, which maps the ordinal number object $[n]$ to the set $N_n({\cal C})$.

The importance of the nerve of a category comes from a key result \cite{kerodon,richter2020categories}, showing it defines a full and faithful embedding of a category: 

\begin{theorem}
The {\bf {nerve functor}} $N_\bullet:$ {\bf {Cat}} $\rightarrow$ {\bf {Set}} is fully faithful. More specifically, there is a bijection $\theta$ defined as: 
\[ \theta: {\bf Cat}({\cal C}, {\cal C'}) \rightarrow {\bf Set}_\Delta (N_\bullet({\cal C}), N_\bullet({\cal C'}) \] 
\end{theorem}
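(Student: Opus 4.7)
The plan is to establish the bijection $\theta$ by explicitly constructing its inverse, using the fact that a category is essentially determined by its $0$-simplices (objects), $1$-simplices (morphisms), and the $2$-simplex data (composition), together with the simplicial identities encoding associativity and unitality.

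First I would define the forward map $\theta$: given a functor $F: \mathcal{C} \to \mathcal{C}'$, send a composable chain $C_0 \xrightarrow{f_1} \cdots \xrightarrow{f_n} C_n$ to $F(C_0) \xrightarrow{F(f_1)} \cdots \xrightarrow{F(f_n)} F(C_n)$. Functoriality of $F$ (preservation of identities and composition) is exactly what is needed for this assignment to commute with the face maps $d_i$ (which either delete endpoints or compose adjacent morphisms) and the degeneracy maps $s_i$ (which insert identity morphisms), so $N_\bullet(F)$ is a genuine simplicial map. This direction is routine bookkeeping using the face/degeneracy formulas reviewed earlier in the section.

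Next I would construct the inverse. Given a simplicial map $\phi: N_\bullet(\mathcal{C}) \to N_\bullet(\mathcal{C}')$, define $F_\phi$ on objects by $\phi_0: N_0(\mathcal{C}) \to N_0(\mathcal{C}')$ and on morphisms by $\phi_1: N_1(\mathcal{C}) \to N_1(\mathcal{C}')$. To check this is a functor I would verify three conditions using the simplicial identities that $\phi$ respects:
\begin{itemize}
\item \emph{Source and target preservation:} $\phi$ commutes with $d_0, d_1: N_1 \to N_0$, and the face maps send a morphism to its target and source respectively; hence $F_\phi(f): F_\phi(\mathrm{dom}\, f) \to F_\phi(\mathrm{cod}\, f)$.
\item \emph{Identity preservation:} $\phi$ commutes with the degeneracy $s_0: N_0 \to N_1$, which sends an object to its identity morphism; therefore $\phi_1(\mathbf{1}_C) = \mathbf{1}_{\phi_0(C)}$.
\item \emph{Composition preservation:} for any composable pair $(f, g)$ in $\mathcal{C}$ there is a unique $2$-simplex $\sigma \in N_2(\mathcal{C})$ with $d_2\sigma = f$, $d_0\sigma = g$, and $d_1\sigma = g \circ f$. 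Applying $\phi_2$ and using compatibility with $d_0, d_1, d_2$ gives $\phi_1(g \circ f) = \phi_1(g) \circ \phi_1(f)$.
\end{itemize}

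Finally I would verify the two assignments are mutually inverse. The composite $N_\bullet \circ (-)$ clearly recovers $\phi$ on the $0$- and $1$-skeletons; to see it agrees on all $N_n$, I would argue inductively that any $n$-simplex in $N_n(\mathcal{C})$ is determined by its chain of $1$-simplex faces (extracted via iterated $d_i$), so $\phi$ is rigidly determined by $\phi_1$. Conversely, $F_{N_\bullet(F)} = F$ is immediate from the definitions. The main obstacle I anticipate is the composition-preservation step: one has to pick out the correct inner face $d_1$ on a $2$-simplex and argue that $\phi_2$ actually contains enough information to identify $\phi_1$ of a composite with the composite of $\phi_1$'s, which is precisely where the nontrivial content of ``fully faithful'' lives — the $2$-dimensional data of the nerve is what promotes a mere map on objects and arrows to a functor.
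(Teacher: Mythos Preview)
Your argument is correct and is the standard proof of this result. However, the paper does not actually supply a proof: it states the theorem and cites \cite{kerodon,richter2020categories} for the details. So there is no ``paper's own proof'' to compare against; what you have written is essentially the argument one finds in those references, with the key idea being that a simplicial map out of a nerve is determined by its action in degrees $\leq 2$, and that the face and degeneracy compatibilities in those degrees encode exactly the functor axioms.

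One small remark on your final paragraph: to conclude that $N_\bullet(F_\phi) = \phi$ in all degrees, you do not need induction so much as the observation that an $n$-simplex $\sigma = (f_1, \ldots, f_n)$ in $N_n(\mathcal{C})$ is recovered from its \emph{edges} via the vertex maps $[1] \hookrightarrow [n]$ sending $0 \mapsto i-1$, $1 \mapsto i$; compatibility of $\phi$ with these maps forces $\phi_n(\sigma) = (\phi_1(f_1), \ldots, \phi_1(f_n))$ directly. This is cleaner than iterated face maps and avoids any ambiguity about which composite of $d_i$'s extracts a given edge.
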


Unfortunately, the left adjoint to the nerve functor is not a full and faithful encoding of a simplicial set back into a suitable category. Note that a functor $G$ from a simplicial object $X$ to a category ${\cal C}$ can be lossy. For example, we can define the objects of ${\cal C}$ to be the elements of $X_0$, and the morphisms of ${\cal C}$ as the elements $f \in X_1$, where $f: a \rightarrow b$, and $d_0 f = a$, and $d_1 f = b$, and $s_0 a, a \in X$ as defining the identity morphisms {${\bf 1}_a$}. Composition in this case can be defined as the free algebra defined over elements of $X_1$, subject to the constraints given by elements of $X_2$. For example, if $x \in X_2$, we can impose the requirement that $d_1 x = d_0 x \circ d_2 x$. Such a definition of the left adjoint would be quite lossy because it only preserves the structure of the simplicial object $X$ up to the $2$-simplices. The right adjoint from a category to its associated simplicial object, in contrast, constructs a full and faithful embedding of a category into a simplicial set.  In particular, the  nerve of a category is such a right adjoint.

\section{Layer 3 of GAIA: Universal Properties and the Category of Elements} 

\label{layer3}

A central and unifying principle in GAIA is that every pair of categorical layers is synchronized by a functor, along with a universal arrow.  In this section, we introduce some additional ideas from category theory, including the fundamental Yoneda Lemma \cite{maclane:71} that states that all objects in a (generative AI) category can be defined in terms of their interactions. To understand the significance of this powerful lemma, keep in mind that it applies to any (small) category. In particular, we defined in Section~\ref{layer2} the category of Transformer models as equivariant functions over Euclidean spaces. What the Yoneda Lemma implies here is that any Transformer model building block can be defined (upto isomorphism) purely in terms of the interactions it makes with other Transformer building blocks. This somewhat strange parameterization provides deep insight into how objects in categories behave. In a concrete sense, Transformer models are based on defining words by their context in sentences, and an enriched form of the  Yoneda Lemma can be directly applied to model the statistical representations of words learned by Transformers \cite{bradley:enriched-yoneda-llms}. 

The bottom layer of GAIA is a (Grothendieck) category of elements \cite{riehl2017category} that essentially ``grounds" out the universal coalgebras at layer 2 of GAIA in terms of the concrete data that was used to build the foundation models in GAIA. Intuitively, Layer 3 stores the ``training data" in a general relational structure. We first define how to construct the category of elements in a relational database.  In particular, at the simplicial top layer, generative AI operators such as  face and degeneracy operators define ``graph surgery" \cite{pearl-book} operations on generative AI models, or in terms of ``copy", ``delete" operators in ``string diagram surgery" defined on symmetric monoidal categories \cite{string-diagram-surgery}. These ``surgery" operations at the next level may translate down to operations on probability distributions, measurable spaces, topological spaces, or chain complexes. This process follows a standard construction used widely in mathematics, for example group representations associate with any group $G$, a left {\bf k}-module $M$ representation that enables modeling abstract group operations by operations on the associated modular representation.  These concrete representations must satisfy the universal arrow property for them to be faithful.  

\subsection{Natural Transformations and Universal Arrows}

Since we now can have multiple functors between the category {\tt Para} and the category {\tt Learn}, for example traditional backpropagation or a stochastic approximation ``zeroth-order" approximation, we can compare these two functors using natural transformations. Given any two functors $F: C \rightarrow D$ and $G: C \rightarrow D$ between the same pair of categories, we can define a mapping between $F$ and $G$ that is referred to as a natural transformation. These are defined through a collection of mappings, one for each object $c$ of $C$, thereby defining a morphism in $D$ for each object in $C$. 

\begin{definition}
    Given categories $C$ and $D$, and functors $F, G: C \rightarrow D$, a {\bf natural transformation} $\alpha: F \Rightarrow G$ is defined by the following data: 

    \begin{itemize}
        \item an arrow $\alpha_c: Fc \rightarrow Gc$ in $D$ for each object $c \in C$, which together define the components of the natural transformation. 
        \item For each morphism $f: c \rightarrow c'$, the following commutative diagram holds true: 

\[\begin{tikzcd}
	Fc &&& Gc \\
	\\
	{Fc'} &&& {Gc'}
	\arrow["{\alpha_c}", from=1-1, to=1-4]
	\arrow["Ff"', from=1-1, to=3-1]
	\arrow["{\alpha_{c'}}"', from=3-1, to=3-4]
	\arrow["Gf", from=1-4, to=3-4]
\end{tikzcd}\]

    \end{itemize}
    A {\bf natural isomorphism} is a natural transformation $\alpha: F \Rightarrow G$ in which every component $\alpha_c$ is an isomorphism. 
\end{definition}

A fundamental universal construction in category theory, called the {\em {universal arrow}} lies at the heart of many useful results, principally the Yoneda lemma that shows how object identity itself emerges from the structure of morphisms that lead into (or out of) it. 

\begin{definition}
Given a functor $S: D \rightarrow C$ between two categories, and an object $c$ of category $C$, a {\bf {universal arrow}} from $c$ to $S$ is a pair $\langle r, u \rangle$, where $r$ is an object of $D$ and $u: c \rightarrow Sr$ is an arrow of $C$, such that the following universal property holds true: 

\begin{itemize} 
\item For every pair $\langle d, f \rangle$ with $d$ an object of $D$ and $f: c \rightarrow Sd$ an arrow of $C$, there is a unique arrow $f': r \rightarrow d$ of $D$ with $S f' \circ u = f$. 
\end{itemize}
\end{definition}

\begin{definition}
If $D$ is a category and $H: D \rightarrow$ {\bf {Set}} is a set-valued functor, a {\bf {universal element}} associated with the functor $H$ is a pair $\langle r, e \rangle$ consisting of an object $r \in D$ and an element $e \in H r$ such that for every pair $\langle d, x \rangle$ with $x \in H d$, there is a unique arrow $f: r \rightarrow d$ of $D$ such \mbox{that $(H f) e = x$. }
\end{definition}

\begin{theorem}
Given any functor $S: D \rightarrow C$, the universal arrow $\langle r, u: c \rightarrow Sr \rangle$ implies a bijection exists between the {\bf {Hom}} sets 
\[ \mbox{{\bf Hom}}_{D}(r, d) \simeq \mbox{{\bf Hom}}_{C}(c, Sd) \]
\end{theorem}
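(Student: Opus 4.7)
The plan is to construct explicit maps in both directions between $\mathrm{Hom}_D(r,d)$ and $\mathrm{Hom}_C(c, Sd)$, and then verify they are mutually inverse using only the defining properties of a universal arrow. Both maps are dictated by the data $u: c \to Sr$ and the functor $S$, so the entire argument is essentially ``unfolding the definition'' of universality.

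First I would define the forward map
\[ \Phi \colon \mathrm{Hom}_D(r, d) \longrightarrow \mathrm{Hom}_C(c, Sd), \qquad \Phi(g) = Sg \circ u. \]
This is manifestly well-formed: $g \colon r \to d$ in $D$ is sent via $S$ to $Sg \colon Sr \to Sd$ in $C$, which postcomposes with $u \colon c \to Sr$ to land in $\mathrm{Hom}_C(c, Sd)$. Functoriality of $S$ ensures $\Phi$ respects identities and composition in a way that will be useful when checking the inverse identity.

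Next I would define the backward map $\Psi \colon \mathrm{Hom}_C(c, Sd) \to \mathrm{Hom}_D(r, d)$ directly from the universal property. Given any $f \colon c \to Sd$, the pair $\langle d, f \rangle$ satisfies the hypotheses of the universal arrow definition, so there exists a unique $f' \colon r \to d$ in $D$ with $Sf' \circ u = f$. Set $\Psi(f) := f'$. Well-definedness and single-valuedness of $\Psi$ are exactly the existence-and-uniqueness clause in the definition of a universal arrow.

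Finally I would check the two round-trip identities. For $\Phi \circ \Psi$, given $f \colon c \to Sd$ we have $\Phi(\Psi(f)) = \Phi(f') = Sf' \circ u = f$ by construction of $f'$. For $\Psi \circ \Phi$, given $g \colon r \to d$, the arrow $\Phi(g) = Sg \circ u$ is a morphism $c \to Sd$, and $g$ itself is \emph{an} arrow $r \to d$ with $S(g) \circ u = \Phi(g)$; by the \textbf{uniqueness} clause of the universal property, $\Psi(\Phi(g)) = g$. I do not expect any real obstacle in this proof --- it is a formal consequence of the definition --- but the one point to be careful about is invoking uniqueness (not just existence) in the second round-trip, since without uniqueness $\Psi$ would not even be a function. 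A secondary remark worth including is that this bijection is natural in $d$, making $\mathrm{Hom}_D(r, -)$ and $\mathrm{Hom}_C(c, S-)$ naturally isomorphic set-valued functors on $D$, which foreshadows the Yoneda-style perspective emphasized in the surrounding text.
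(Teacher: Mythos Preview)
Your proof is correct and follows the standard approach that the paper also adopts: the paper presents the bijection via the naturality square sending $D(r,r) \to C(c,Sr)$ and $D(r,d) \to C(c,Sd)$, noting that the map is induced by $g \mapsto Sg \circ u$ (tracing the identity $1_r$ through the diagram), which is exactly your $\Phi$. Your treatment is more explicit than the paper's sketch---you spell out $\Psi$ and both round-trip identities, correctly flagging that uniqueness (not just existence) is what makes $\Psi\circ\Phi = \mathrm{id}$---but the underlying argument is the same.
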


A special case of this natural transformation that transforms the identity morphism {\bf {1}}$_r$ leads us to the Yoneda lemma.

\subsection{Yoneda Lemma}

The Yoneda Lemma states that the set of all morphisms into an object $d$ in a category ${\cal C}$, denoted as {\bf Hom}$_{\cal C}(-,d)$ and called the {\em contravariant functor} (or presheaf),  is sufficient to define $d$ up to isomorphism. The category of all presheaves forms a {\em category of functors}, and is denoted $\hat{{\cal C}} = $ {\bf Set}$^{{\cal C}^{op}}$.We will briefly describe two concrete applications of this lemma to two important areas in AI and ML in this section: reasoning about causality and reasoning about distances. The Yoneda lemma plays a crucial role in this paper because it defines the concept of a {\em universal representation} in category theory. We first show that associated with universal arrows is the corresponding induced isomorphisms between {\bf {Hom}} sets of morphisms in categories. This universal property then leads to the Yoneda lemma. 

\begin{center}
 \begin{tikzcd}
  D(r,r) \arrow{d}{D(r, f')} \arrow{r}{\phi_r}
    & C(c, Sr) \arrow[]{d}{C(c, S f')} \\
  D(r,d)  \arrow[]{r}[]{\phi_d}
&C(c, Sd)\end{tikzcd}
 \end{center} 

 As the two paths shown here must be equal in a commutative diagram, we get the property that a bijection between the {\bf {Hom}} sets holds precisely when $\langle r, u: c \rightarrow Sr \rangle$ is a universal arrow from $c$ to $S$. Note that for the case when the categories $C$ and $D$ are small, meaning their {\bf Hom} collection of arrows forms a set, the induced functor {\bf {Hom}}$_C(c, S - )$ to {\bf Set} is isomorphic to the functor {\bf {Hom}}$_D(r, -)$. This type of isomorphism defines a universal representation, and is at the heart of the causal reproducing property (CRP) defined below. 

\begin{lemma}
{\bf {Yoneda lemma}}: For any functor $F: C \rightarrow {\bf Set}$, whose domain category $C$ is ``locally small" (meaning that the collection of morphisms between each pair of objects forms a set), any object $c$ in $C$, there is a bijection 

\[ \mbox{Hom}(C(c, -), F) \simeq Fc \]

that defines a natural transformation $\alpha: C(c, -) \Rightarrow F$ to the element $\alpha_c(1_c) \in Fc$. This correspondence is natural in both $c$ and $F$. 
\end{lemma}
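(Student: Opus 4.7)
The plan is to exhibit the bijection explicitly by constructing maps in both directions and verifying they are mutually inverse, then to check naturality in $c$ and $F$ separately. Define the forward map $\Phi: \mathrm{Hom}(C(c,-),F) \to Fc$ by $\Phi(\alpha) = \alpha_c(1_c)$, i.e.\ by evaluating a natural transformation at the identity of $c$. Define the backward map $\Psi: Fc \to \mathrm{Hom}(C(c,-),F)$ by sending $x \in Fc$ to the natural transformation whose component at an object $d$ is the function $\Psi(x)_d: C(c,d) \to Fd$, $f \mapsto F(f)(x)$.

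First I would verify that $\Psi(x)$ really is a natural transformation: for any arrow $g: d \to d'$ in $C$, I need the square with horizontal arrows $\Psi(x)_d, \Psi(x)_{d'}$ and vertical arrows $C(c,g), F(g)$ to commute, which reduces to the identity $F(g) \circ F(f) = F(g \circ f)$, i.e.\ the functoriality of $F$. Then I would check $\Phi \circ \Psi = \mathrm{id}_{Fc}$: $\Phi(\Psi(x)) = \Psi(x)_c(1_c) = F(1_c)(x) = \mathrm{id}_{Fc}(x) = x$, where the last step uses $F(1_c) = 1_{Fc}$. For the other composite $\Psi \circ \Phi = \mathrm{id}$, given $\alpha: C(c,-) \Rightarrow F$, I need $\Psi(\alpha_c(1_c)) = \alpha$; for any $d$ and $f: c \to d$, I compute $\Psi(\alpha_c(1_c))_d(f) = F(f)(\alpha_c(1_c))$ and then use the naturality square for $\alpha$ applied to the morphism $f$, namely $\alpha_d \circ C(c,f) = F(f) \circ \alpha_c$, evaluated at $1_c \in C(c,c)$, to obtain $F(f)(\alpha_c(1_c)) = \alpha_d(f \circ 1_c) = \alpha_d(f)$.

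This establishes the bijection for fixed $c$ and $F$. The remaining step is naturality of the bijection in the two remaining variables. For naturality in $F$, given a natural transformation $\beta: F \Rightarrow G$, I would verify that the square relating $\Phi_F, \Phi_G$ along one axis and post-composition with $\beta$ along the other commutes; this is immediate because $\Phi_G(\beta \circ \alpha) = (\beta \circ \alpha)_c(1_c) = \beta_c(\alpha_c(1_c)) = \beta_c(\Phi_F(\alpha))$. For naturality in $c$, given $h: c' \to c$, precomposition in the first argument induces $C(c,-) \Rightarrow C(c',-)$ by $f \mapsto f \circ h$, and I would chase the resulting diagram, using once more the naturality square for $\alpha$ on the morphism $h$.

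The main obstacle, and the one place where the argument is genuinely conceptual rather than formal, is the step $\Psi \circ \Phi = \mathrm{id}$: one must recognise that an arbitrary natural transformation $\alpha$ is completely determined by the single value $\alpha_c(1_c)$, and that the instrument which recovers it is precisely the naturality condition applied to $1_c \in C(c,c)$. Once that insight is in place the rest of the verifications are diagram chases that are routine given the functoriality of $F$ and the naturality of the transformations in sight, and the full statement, including naturality in both $c$ and $F$, follows.
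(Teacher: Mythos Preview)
Your argument is the standard, correct proof of the Yoneda lemma: explicit inverse maps, functoriality of $F$ to show $\Psi(x)$ is natural, and the key naturality square for $\alpha$ at $1_c$ to recover $\alpha$ from $\alpha_c(1_c)$. The paper, however, does not actually give a proof of this lemma---it is stated and immediately followed by a remark about the dual form and the notion of universal representation, with no argument supplied. So there is nothing in the paper to compare your proof against; your writeup simply fills in what the paper omits, and does so correctly.
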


There is of course a dual form of the Yoneda Lemma in terms of the contravariant functor $C(-, c)$ as well using the natural transformation $C(-, c) \Rightarrow F$. A very useful way to interpret the Yoneda Lemma is through the notion of universal representability through a covariant or contravariant functor.

\begin{definition}
    A {\bf universal representation} of an object $c \in C$ in a category $C$ is defined as a contravariant functor $F$ together with a functorial representation $C(-, c) \simeq F$ or by a covariant functor $F$ together with a representation $C(c, -) \simeq F$. The collection of morphisms $C(-, c)$ into an object $c$ is called the {\bf presheaf}, and from the Yoneda Lemma, forms a universal representation of the object. 
\end{definition}

Later in this paper, we will see how the Yoneda Lemma gives us a novel perspective on how to construct universal representers in non-symmetric generalized metric spaces that are essential to defining ``nonsymmetric attention" in large language models.

A key distinguishing feature of category theory is the use of diagrammatic reasoning. However, diagrams are also viewed more abstractly as functors mapping from some indexing category to the actual category. Diagrams are useful in understanding universal constructions, such as limits and colimits of diagrams. To make this somewhat abstract definition concrete, let us look at some simpler examples of universal properties, including co-products and quotients (which in set theory correspond to disjoint unions). Coproducts refer to the universal property of abstracting a group of elements into a larger one.

 Before we formally the concept of limit and colimits, we consider some examples.  These notions generalize the more familiar notions of Cartesian products and disjoint unions in the category of {\bf {Sets}}, the notion of meets and joins in the category {\bf {Preord}} of preorders, as well as the  least upper bounds and greatest lower bounds in lattices, and many other concrete examples from mathematics. 

\begin{example} 
If  we consider a small  ``discrete'' category ${\cal D}$ whose only morphisms are identity arrows, then the colimit of a functor ${\cal F}: {\cal D} \rightarrow {\cal C}$ is the {\em categorical coproduct} of ${\cal F}(D)$ for $D$, an object of category {\cal D}, is denoted as 
\[ \mbox{Colimit}_{\cal D} F = \bigsqcup_D {\cal F}(D) \]

In the special case when the category {\cal C} is the category {\bf {Sets}}, then the colimit of this functor is simply the disjoint union of all the sets $F(D)$ that are mapped from objects $D \in {\cal D}$. 
\end{example} 

\begin{example} 
Dual to the notion of colimit of a functor is the notion of {\em limit}. Once again, if we consider a small  ``discrete'' category ${\cal D}$ whose only morphisms are identity arrows, then the limit of a functor ${\cal F}: {\cal D} \rightarrow {\cal C}$ is the {\em categorical product} of ${\cal F}(D)$ for $D$, an object of category {\cal D}, is denoted as 
\[ \mbox{limit}_{\cal D} F = \prod_D {\cal F}(D) \]

In the special case when the category {\cal C} is the category {\bf {Sets}}, then the limit of this functor is simply the Cartesian product of all the sets $F(D)$ that are mapped from objects $D \in {\cal D}$. 
\end{example} 

Category theory relies extensively on {\em universal constructions}, which satisfy a universal property. One of the central building blocks is the identification of universal properties through formal diagrams.  Before introducing these definitions in their most abstract form, it greatly helps to see some simple examples. 

 We can illustrate the limits and colimits in diagrams using pullback and pushforward mappings.

\begin{tikzcd}
    & Z\arrow[r, "p"] \arrow[d, "q"]
      & X \arrow[d, "f"] \arrow[ddr, bend left, "h"]\\
& Y \arrow[r, "g"] \arrow[drr, bend right, "i"] &X \sqcup Y \arrow[dr, "r"]  \\ 
& & & R 
\end{tikzcd}

An example of a universal construction is given by the above commutative diagram, where the coproduct object $X \sqcup Y$ uniquely factorizes any mapping $h: X \rightarrow R$, such that any mapping $i: Y \rightarrow R$, so that $h = r \circ f$, and furthermore $i = r \circ g$. Co-products are themselves special cases of the more general notion of co-limits. Figure~\ref{univpr}  illustrates the fundamental property of a {\em {pullback}}, which along with {\em pushforward}, is one of the core ideas in category theory. The pullback square with the objects $U,X, Y$ and $Z$ implies that the composite mappings $g \circ f'$ must equal $g' \circ f$. In this example, the morphisms $f$ and $g$ represent a {\em {pullback}} pair, as they share a common co-domain $Z$. The pair of morphisms $f', g'$ emanating from $U$ define a {\em {cone}}, because the pullback square ``commutes'' appropriately. Thus, the pullback of the pair of morphisms $f, g$ with the common co-domain $Z$ is the pair of morphisms $f', g'$ with common domain $U$. Furthermore, to satisfy the universal property, given another pair of morphisms $x, y$ with common domain $T$, there must exist another morphism $k: T \rightarrow U$ that ``factorizes'' $x, y$ appropriately, so that the composite morphisms $f' \ k = y$ and $g' \ k = x$. Here, $T$ and $U$ are referred to as {\em cones}, where $U$ is the limit of the set of all cones ``above'' $Z$. If we reverse arrow directions appropriately, we get the corresponding notion of pushforward. So, in this example, the pair of morphisms $f', g'$ that share a common domain represent a pushforward pair. 
As Figure~\ref{univpr}, for any set-valued functor $\delta: S \rightarrow$ {\bf {Sets}}, the Grothendieck category of elements $\int \delta$ can be shown to be a pullback in the diagram of categories. Here, {${\bf Set}_*$} is the category of pointed sets, and $\pi$ is a projection that sends a pointed set $(X, x \in X)$ to its \mbox{underlying set $X$.}

\begin{figure}[h]
\hspace{21pt}


\centering
\begin{tikzcd}
  T
  \arrow[drr, bend left, "x"]
  \arrow[ddr, bend right, "y"]
  \arrow[dr, dotted, "k" description] & & \\
    & 
    U\arrow[r, "g'"] \arrow[d, "f'"]
      & X \arrow[d, "f"] \\
& Y \arrow[r, "g"] &Z
\end{tikzcd}
\begin{tikzcd}
  T
  \arrow[drr, bend left, "x"]
  \arrow[ddr, bend right, "y"]
  \arrow[dr, dotted, "k" description] & & \\
    & 
    \int \delta \arrow[r, "\delta'"] \arrow[d, "\pi_\delta"]
      & {\bf Set}_* \arrow[d, "\pi"] \\
& S \arrow[r, "\delta"] & {\bf Set}
\end{tikzcd}
\caption{(\textbf{Left})
 Universal Property of pullback mappings. (\textbf{Right}) The Grothendieck category of elements $\int \delta$ of any set-valued functor $\delta: S \rightarrow$ {\bf {Set}} can be described as a pullback in the diagram of categories. Here, {\bf Set}$_*$ is the category of pointed sets $(X, x \in X)$, and $\pi$ is the ``forgetful" functor that sends a pointed set $(X, x \in X)$ into the underlying set $X$.  } 
\label{univpr}
\end{figure}
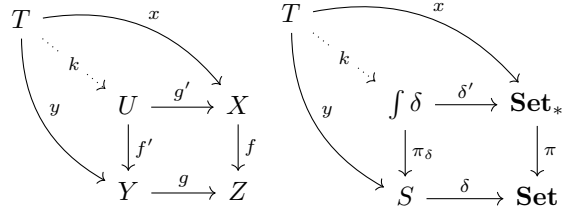

We can now proceed to define limits and colimits more generally. We define a {\em diagram} $F$ of {\em shape} $J$ in a category $C$ formally as a functor $F: J \rightarrow C$. We want to define the somewhat abstract concepts of {\em limits} and {\em colimits}, which will play a central role in this paper in identifying properties of AI and ML techniques.  A convenient way to introduce these concepts is through the use of {\em universal cones} that are {\em over} and {\em under} a diagram.

For any object $c \in C$ and any category $J$, the {\em constant functor} $c: J \rightarrow C$ maps every object $j$ of $J$ to $c$ and every morphism $f$ in $J$ to the identity morphisms $1_c$. We can define a constant functor embedding as the collection of constant functors $\Delta: C \rightarrow C^J$ that send each object $c$ in $C$ to the constant functor at $c$ and each morphism $f: c \rightarrow c'$ to the constant natural transformation, that is, the natural transformation whose every component is defined to be the morphism $f$. 

\begin{definition}
    A {\bf cone over} a diagram $F: J \rightarrow C$ with the {\bf summit} or {\bf apex} $c \in C$ is a natural transformation $\lambda: c \Rightarrow F$ whose domain is the constant functor at $c$. The components $(\lambda_j: c \rightarrow Fj)_{j \in J}$ of the natural transformation can be viewed as its {\bf legs}. Dually, a {\bf cone under} $F$ with {\bf nadir} $c$ is a natural transformation $\lambda: F \Rightarrow c$ whose legs are the components $(\lambda_j: F_j \rightarrow c)_{j \in J}$.

\[\begin{tikzcd}
	&& c &&&& Fj &&&& Fk \\
	\\
	{F j} &&&& Fk &&&& c
	\arrow["{\lambda_j}", from=1-3, to=3-1]
	\arrow["{\lambda_k}"', from=1-3, to=3-5]
	\arrow["{F f}", from=3-1, to=3-5]
	\arrow["Ff", from=1-7, to=1-11]
	\arrow["{\lambda_j}"', from=1-7, to=3-9]
	\arrow["{\lambda_k}", from=1-11, to=3-9]
\end{tikzcd}\]
    
\end{definition}

Cones under a diagram are referred to usually as {\em cocones}. Using the concept of cones and cocones, we can now formally define the concept of limits and colimits more precisely. 

\begin{definition}
    For any diagram $F: J \rightarrow C$, there is a functor 

    \[ \mbox{Cone}(-, F): C^{op} \rightarrow \mbox{{\bf Set}} \]

    which sends $c \in C$ to the set of cones over $F$ with apex $c$. Using the Yoneda Lemma, a {\bf limit} of $F$ is defined as an object $\lim F \in C$ together with a natural transformation $\lambda: \lim F \rightarrow F$, which can be called the {\bf universal cone} defining the natural isomorphism 

    \[ C(-, \lim F) \simeq \mbox{Cone}(-, F) \]

    Dually, for colimits, we can define a functor 

    \[ \mbox{Cone}(F, -): C \rightarrow \mbox{{\bf Set}} \]

    that maps object $c \in C$ to the set of cones under $F$ with nadir $c$. A {\bf colimit} of $F$ is a representation for $\mbox{Cone}(F, -)$. Once again, using the Yoneda Lemma, a colimit is defined by an object $\mbox{Colim} F \in C$ together with a natural transformation $\lambda: F \rightarrow \mbox{colim} F$, which defines the {\bf colimit cone} as the natural isomorphism 

    \[ C(\mbox{colim} F, -) \simeq \mbox{Cone}(F, -) \]
\end{definition}

Limit and colimits of diagrams over arbitrary categories can often be reduced to the case of their corresponding diagram properties over sets. One important stepping stone is to understand how functors interact with limits and colimits. 

\begin{definition}
    For any class of diagrams $K: J \rightarrow C$, a functor $F: C \rightarrow D$ 

    \begin{itemize}
        \item {\bf preserves} limits if for any diagram $K: J \rightarrow C$ and limit cone over $K$, the image of the cone defines a limit cone over the composite diagram $F K: J \rightarrow D$. 

        \item {\bf reflects} limits if for any cone over a diagram $K: J \rightarrow C$ whose image upon applying $F$ is a limit cone for the diagram $F K: J \rightarrow D$ is a limit cone over $K$

        \item {\bf creates} limits if whenever $FK : J \rightarrow D$ has a limit in $D$, there is some limit cone over $F K$ that can be lifted to a limit cone over $K$ and moreoever $F$ reflects the limits in the class of diagrams. 
    \end{itemize}
\end{definition}

To interpret these abstract definitions, it helps to concretize them in terms of a specific universal construction, like the pullback defined above $c' \rightarrow c \leftarrow c''$ in $C$. Specifically, for pullbacks: 

\begin{itemize} 

\item A functor $F$ {\bf preserves pullbacks} if whenever $p$ is the pullback of  $c' \rightarrow c \leftarrow c''$ in $C$, it follows that $Fp$ is the pullback of  $Fc' \rightarrow Fc \leftarrow Fc''$ in $D$.

\item A functor $F$ {\bf reflects  pullbacks}  if  $p$ is the pullback of  $c' \rightarrow c \leftarrow c''$ in $C$ whenever $Fp$ is the pullback of  $Fc' \rightarrow Fc \leftarrow Fc''$ in $D$.

\item A functor $F$ {\bf creates pullbacks} if there exists some $p$ that is the pullback of  $c' \rightarrow c \leftarrow c''$ in $C$ whenever there exists a $d$ such  that $d$ is the pullback of  $Fc' \rightarrow Fc \leftarrow Fc''$ in $F$.

\end{itemize} 

\subsection*{Universality of Diagrams}

In the category {\bf {Sets}}, we know that every object (i.e., a set) $X$ can be expressed as a coproduct (i.e., disjoint union)  of its elements $X \simeq \sqcup_{x \in X} \{ x \}$, where $x \in X$. Note that we can view each element $x \in X$ as a morphism $x: \{ * \} \rightarrow X$ from the one-point set to $X$. The categorical generalization of this result is called the {\em {density theorem}} in the theory of sheaves. First, we define the key concept of a {\em comma category}. 

\begin{definition}
Let $F: {\cal D} \rightarrow {\cal C}$ be a functor from category ${\cal D}$ to ${\cal C}$. The {\bf {comma category}} $F \downarrow {\cal C}$ is one whose objects are pairs $(D, f)$, where $D \in {\cal D}$ is an object of ${\cal D}$ and $f \in$ {\bf {Hom}}$_{\cal C}(F(D), C)$, where $C$ is an object of ${\cal C}$. Morphisms in the comma category $F \downarrow {\cal C}$ from $(D, f)$ to $(D', f')$, where $g: D \rightarrow D'$, such that $f' \circ F(g) = f$. We can depict this structure through the following commutative diagram: 
\begin{center} 
\begin{tikzcd}[column sep=small]
& F(D) \arrow{dl}[near start]{F(g)} \arrow{dr}{f} & \\
  F(D')\arrow{rr}{f'}&                         & C
\end{tikzcd}
\end{center} 
\end{definition} 

We first introduce the concept of a {\em {dense}} functor: 

\begin{definition}
Let {\cal D} be a small category, {\cal C} be an arbitrary category, and $F: {\cal D} \rightarrow {\cal D}$ be a functor. The functor $F$ is {\bf {dense}} if for all objects $C$ of ${\cal C}$, the natural transformation 
\[ \psi^C_F: F \circ U \rightarrow \Delta_C, \ \ (\psi^C_F)_{({\cal D}, f)} = f\]
is universal in the sense that it induces an isomorphism $\mbox{Colimit}_{F \downarrow C} F \circ U \simeq C$. Here, $U: F \downarrow C \rightarrow {\cal D}$ is the projection functor from the comma category $F \downarrow {\cal C}$, defined by $U(D, f) = D$. 

\end{definition} 

A fundamental consequence of the category of elements is that every object in the functor category of presheaves, namely contravariant functors from a category into the category of sets, is the colimit of a diagram of representable objects, via the Yoneda lemma. Notice this is a generalized form of the density notion from the category {\bf {Sets}}.

\begin{theorem}
\label{presheaf-theorem}
{\bf {Universality of Diagrams}}: In the functor category of presheaves {\bf {Set}}$^{{\cal C}^{op}}$, every object $P$ is the colimit of a diagram of representable objects, in a canonical way. 
\end{theorem}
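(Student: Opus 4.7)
The plan is to prove this via the construction of the \emph{category of elements} of $P$, which by the pullback description given earlier in the paper (Figure~\ref{univpr}) is the category $\int P$ whose objects are pairs $(c, x)$ with $c \in {\cal C}$ and $x \in P(c)$, and whose morphisms $(c, x) \to (c', x')$ are arrows $f: c \to c'$ in ${\cal C}$ satisfying $P(f)(x') = x$. There is a canonical projection $\pi_P: \int P \to {\cal C}$ sending $(c, x) \mapsto c$. Composing with the Yoneda embedding $y: {\cal C} \to \hat{{\cal C}} = {\bf Set}^{{\cal C}^{op}}$, $c \mapsto {\cal C}(-, c)$, yields the diagram of representables $D_P := y \circ \pi_P : \int P \to \hat{{\cal C}}$. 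The claim I would prove is that $P$ together with the natural transformations $\lambda_{(c,x)}: {\cal C}(-, c) \Rightarrow P$ corresponding under the Yoneda Lemma to the elements $x \in P(c)$ forms a colimit cocone over $D_P$.

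First I would verify that the $\lambda_{(c,x)}$ indeed assemble into a cocone under $D_P$. Given a morphism $f: (c, x) \to (c', x')$ in $\int P$, I need $\lambda_{(c', x')} \circ y(f) = \lambda_{(c,x)}$. Unwinding via Yoneda, $\lambda_{(c, x)}$ corresponds to $x \in P(c)$, $\lambda_{(c', x')}$ corresponds to $x' \in P(c')$, and the composite $\lambda_{(c', x')} \circ y(f)$ corresponds to $P(f)(x')$. But the very definition of a morphism in $\int P$ forces $P(f)(x') = x$, so the cocone condition is immediate.

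Next I would establish the universal property. Let $(Q, \mu)$ be any cocone under $D_P$ with nadir a presheaf $Q \in \hat{{\cal C}}$, so we have natural transformations $\mu_{(c, x)} : {\cal C}(-, c) \Rightarrow Q$ compatible with the cocone condition. By Yoneda, each $\mu_{(c, x)}$ corresponds to an element $q_{(c,x)} \in Q(c)$, and the cocone compatibility translates (again via the definition of morphisms in $\int P$) to naturality of the assignment $x \mapsto q_{(c, x)}$ in $c \in {\cal C}$. This means the collection $\{x \mapsto q_{(c,x)}\}_{c}$ defines a unique natural transformation $\varphi: P \Rightarrow Q$ whose components satisfy $\varphi_c(x) = q_{(c, x)}$, and this $\varphi$ is the unique map of presheaves through which every $\mu_{(c, x)}$ factors as $\mu_{(c,x)} = \varphi \circ \lambda_{(c, x)}$. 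Uniqueness follows because specifying $\varphi_c$ on every element $x \in P(c)$ is the same as specifying $\varphi$ itself.

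The main obstacle is bookkeeping: carefully tracking the Yoneda correspondences on both sides and verifying that the cocone condition on the presheaf side matches the definition of morphisms in $\int P$ on the element side. Everything ultimately reduces to the bijection $\hat{{\cal C}}({\cal C}(-, c), F) \simeq F(c)$ from the Yoneda Lemma applied with $F = P$ and $F = Q$; the verification is diagram-chase in nature but must be performed pointwise for each object $c \in {\cal C}$ and shown natural in $c$. A cleaner packaging, which I would mention as a remark, is that the colimit in $\hat{{\cal C}}$ is computed pointwise in ${\bf Set}$, and pointwise at $c$ the set $P(c)$ is manifestly the coend-style disjoint union $\bigsqcup_{(c', x') \in \int P} {\cal C}(c, c')$ modulo the relations imposed by the morphisms in $\int P$, matching exactly the set-theoretic colimit.
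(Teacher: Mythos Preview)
Your proof is correct and follows exactly the approach the paper gestures at: the paper does not actually supply a proof of this theorem, but the sentence immediately preceding it identifies the result as ``a fundamental consequence of the category of elements \ldots\ via the Yoneda lemma,'' and your argument---building the diagram $y\circ\pi_P:\int P\to\hat{{\cal C}}$, using the Yoneda bijection to produce the cocone $\lambda_{(c,x)}$, and then verifying universality by translating an arbitrary cocone $\mu$ back through Yoneda into a natural transformation $\varphi:P\Rightarrow Q$---is precisely that standard construction. Your bookkeeping is clean and the pointwise-colimit remark at the end is apt.
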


\subsection{Universal Arrows and Elements}

We explore the universal arrow property more deeply in this section, showing how it provides the conceptual basis behind  the (metric) Yoneda Lemma, and Grothendieck's category of elements. 

A special case of the universal arrow property is that of universal element, which as we will see below plays an important role in the GAIA architecture in defining a suitably augmented category of elements, based on a construction introduced by Grothendieck. 

\begin{definition}
If $D$ is a category and $H: D \rightarrow {\bf Set}$ is a set-valued functor, a {\bf universal element} associated with the functor $H$ is a pair $\langle r, e \rangle$ consisting of an object $r \in D$ and an element $e \in H r$ such that for every pair $\langle d, x \rangle$ with $x \in H d$, there is a unique arrow $f: r \rightarrow d$ of $D$ such that $(H f) e = x$. 
\end{definition}

\begin{example}
Let $E$ be an equivalence relation on a set $S$, and consider the quotient set $S/E$ of equivalence classes, where $p: S \rightarrow S/E$ sends each element $s \in S$ into its corresponding equivalence class. The set of equivalence classes $S/E$ has the property that any function $f: S \rightarrow X$ that respects the equivalence relation can be written as $f s = f s'$ whenever $s \sim_E s'$, that is, $f = f' \circ p$, where the unique function $f': S/E \rightarrow X$. Thus, $\langle S/E, p \rangle$ is a universal element for the functor $H$. 
\end{example}

 \subsection{The Category of Elements}

We turn next to define the category of elements, based on a construction by Grothendieck, and illustrate how it can serve as the basis for inference at each layer of the UCLA architecture. In particular, \citet{SPIVAK_2013} shows how the category of elements can be used to define SQL queries in a relational database. 

\begin{definition}
Given a set-valued functor $\delta: {\cal C} \rightarrow {\bf Set}$ from some category ${\cal C}$, the induced {\bf category of elements} associated with $\delta$ is a pair $(\int \delta, \pi_\delta)$, where $\int \delta \in$ {\bf Cat} is a category in the category of all categories {\bf Cat}, and $\pi_\delta: \int \delta \rightarrow {\cal C}$ is a functor that ``projects" the category of elements into the corresponding original category ${\cal C}$. The objects and arrows of $\int \delta$ are defined as follows: 
\begin{itemize}
    \item $\mbox{Ob}(\int \delta) = \{ (s, x) | x \in \mbox{Ob}({\cal c}), x \in \delta s \} $. 

    \item {\bf Hom}$_{\int \delta}((s, x), (s', x')) = \{f: s \rightarrow s' | \delta f (x) = x' \}$
\end{itemize}
\end{definition}

\begin{example}
To illustrate the category of elements construction, let us consider the toy climate change DAG model shown in Figure~\ref{climate-change}. Let the category {\cal C} be defined by this DAG model, where the objects Ob({\cal C}) are defined by the four vertices, and the arrows {\bf Hom}$_{\cal C}$ are defined by the four edges in the model. The set-valued functor $\delta: {\cal C} \rightarrow {\bf Set}$ maps each object (vertex) in {\cal C} to a set of instances, thereby turning the causal DAG model into an associated set of tables. For example, {\bf Climate Change} is defined as a table of values, which could be modeled as a multinomial variable taking on a set of discrete values, and for each of its values, the arrow from {\bf Climate Change} to {\bf Rainfall} maps each specific value of {\bf Climate Change} to a value of {\bf Rainfall}, thereby indicating a causal effect of climate change on the amount of rainfall in California. Im the figure, {\bf Climate Change} is mapped to three discrete levels (marked $1$, $2$ and $3$). Rainfall amounts are discretized as well into low (marked "L"), medium (marked "M"), high (marked "H"), or extreme (marked "E"). Wind speeds are binned into two levels (marked "W" for weak, and "S" for strong). Finally, the percentage of California wildfires is binned between $5$ to $30$. Not all arrows that exist in the Grothendieck category of elements are shown, for clarity. 

 \end{example}
 \begin{figure}[h] 
 \caption{A toy DAG model of climate change to illustrate the category of elements construction.  \label{climate-change}}
 \vskip 0.1in
\centering
\begin{minipage}{0.7\textwidth}
\includegraphics[scale=0.3]{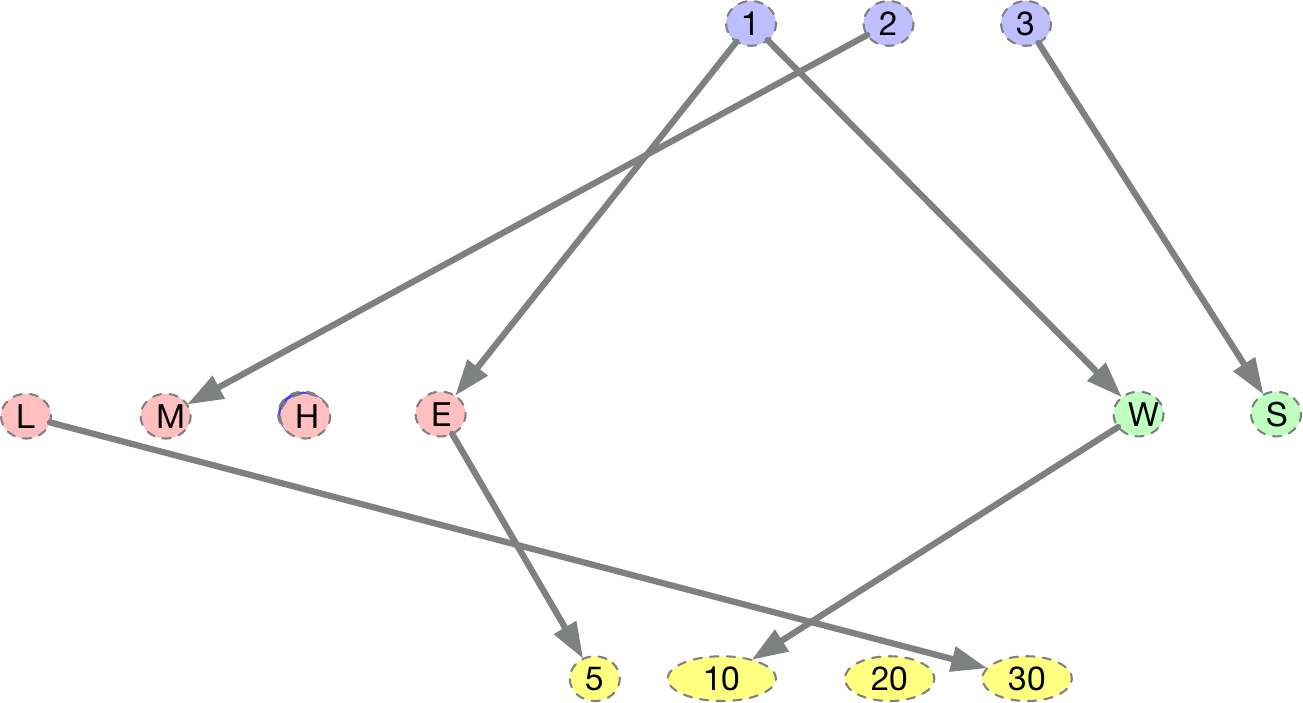}
\end{minipage} 
 \end{figure} 

 Many properties of Grothendieck's construction can be exploited (some of these are discussed in the context of relational database queries in \cite{SPIVAK_2013}), but for our application, we are primarily interested in the associated class of lifting problems that define queries in a generative AI model.

 \subsection{Lifting Problems in Generative AI}
 
\begin{definition}
 If $S$ is a collection of morphisms in category  ${\cal C}$, a morphism $f: A \rightarrow B$ has the {\bf left lifting property with respect to S} if it has the left lifting property with respect to every morphism in $S$. Analogously, we say a morphism $p: X \rightarrow Y$ has the {\bf right lifting property with respect to S} if it has the right lifting property with respect to every morphism in $S$.
 \end{definition}

 Many properties of Grothendieck's construction can be exploited (some of these are discussed in the context of relational database queries in \cite{SPIVAK_2013}), but for our application to generative AI, we are primarily interested in the associated class of lifting problems that can be used to define queries and build foundation models.  
 
\begin{definition}
 If $S$ is a collection of morphisms in category  ${\cal C}$, a morphism $f: A \rightarrow B$ has the {\bf {left lifting property with respect to S}} if it has the left lifting property with respect to every morphism in $S$. Analogously, we say a morphism $p: X \rightarrow Y$ has the {\bf {right lifting property with respect to S}} if it has the right lifting property with respect to every morphism in $S$.
 \end{definition}

We now turn to sketch some examples of the application of lifting problems for generative AI. Many problems in causal inference on graphs involve some particular graph property.  To formulate it as a lifting problem, we will use the following generic template, following the initial application of lifting problems to database queries proposed by \citet{SPIVAK_2013}. 

\begin{center}
 \begin{tikzcd}
  Q \arrow{d}{f} \arrow{r}{\mu}
    & \int \delta  \arrow[]{d}{p} \\
  R \arrow[ur,dashed, "h"] \arrow[]{r}[]{\nu}
&{\cal C} \end{tikzcd}
 \end{center} 

 Here, $Q$ is a generic query that we want answered, which could range from a database query, as in the original setting studied by \citet{SPIVAK_2013}, but more interestingly, it could be a particular graph property relating to generative AI. By suitably modifying the base category, the lifting problem formulation can be used to encode a diverse variety of problems in generative AI inference. $R$ represents a fragment of the complete generative AI model ${\cal C}$, and $\delta$ is the category of elements defined above. Finally, $h$ gives all solutions to the lifting problem. 
 
\begin{example}
 \textls[-15]{Consider the category of directed graphs defined by the category ${\cal G}$, where \mbox{Ob(${\cal G}$) = \{V, E\},}} and the morphisms of ${\cal G}$ are given as {\bf {Hom}}$_{\cal G}$ = \{s, t\}, where $s: E \rightarrow V$ and $t: E \rightarrow V$ define the source and terminal nodes of each vertex. Then, the category of all directed graphs is precisely defined by the category of all functors $\delta: {\cal G} \rightarrow$ {\bf {Set}}. Any particular graph is defined by the functor $X: {\cal G} \rightarrow$ {\bf {Set}}, where the function $X(s): X(E) \rightarrow X(V)$ assigns to every edge its source vertex. For causal inference, we may want to check some property of a graph, such as the property that every vertex in $X$ is the source of some edge. The following lifting problem ensures that every vertex has a source edge in the graph. The category of elements $\int \delta$ shown below refers to a construction introduced by Grothendieck, which will be defined in more detail later.
 
 \begin{center}
 \begin{tikzcd}
  V (\bullet) \arrow{d}{f} \arrow{r}{\mu}
    & \int \delta  \arrow[]{d}{p} \\
  \{ E (\bullet) \xrightarrow[]{s} V (\bullet) \} \arrow[ur,dashed, "h"] \arrow[]{r}[]{\nu}
&{\cal G} \end{tikzcd}
 \end{center} 

 \end{example}

\begin{example} 

As another example of the application of lifting problems to causal inference, let us consider the problem of determining whether two causal DAGs, $G_1$ and $G_2$ are Markov \mbox{equivalent \cite{anderson-annals}.} A key requirement here is that the immoralities of $G_1$ and $G_2$ must be the same, that is, if $G_1$ has a collider $A \rightarrow B \leftarrow C$, where there is no edge between $A$ and $C$, then $G_2$ must also have the same collider, and none others. We can formulate the problem of finding colliders as the following lifting problem. Note that the three vertices $A$, $B$ and $C$ are bound to an actual graph instance through the category of elements $\int \delta$ (as was illustrated above), using the top right morphism $\mu$. The bottom left morphism $f$ binds these three vertices to some collider. The bottom right morphism $\nu$ requires this collider to exist in the causal graph ${\cal G}$ with the same bindings as found by $\mu$. The dashed morphisms $h$ finds all solutions to this lifting problem, that is, all colliders involving the vertices $A$, $B$ and $C$. 
 \begin{center}
 \begin{tikzcd}
  \{A (\bullet), B (\bullet), C (\bullet) \} \arrow{d}{f} \arrow{r}{\mu}
    & \int \delta  \arrow[]{d}{p} \\
  \{ A (\bullet) \rightarrow B (\bullet) \leftarrow  C (\bullet)\} \arrow[ur,dashed, "h"] \arrow[]{r}[blue]{\nu}
&{\cal G} \end{tikzcd}
 \end{center} 

 \end{example}

If the category of elements is defined by a functor mapping a database schema into a table of instances, then the associated lifting problem corresponds to familiar problems like SQL queries in relational databases \cite{SPIVAK_2013}. In our application, we can use the same machinery to formulate causal inference queries by choosing the categories appropriately.  To complete the discussion, we now make the connection between universal arrows and the core notion of universal representations via the Yoneda Lemma.

\subsection{Kan Extension}

 It is well known in category theory that ultimately every concept, from products and co-products, limits and co-limits, and ultimately even the Yoneda Lemma (see below), can be derived as special cases of the Kan extension \citep{maclane:71}. Kan extensions intuitively are a way to approximate a functor ${\cal F}$ so that its domain can be extended from a category ${\cal C}$ to another category  ${\cal D}$.  Because it may be impossible to make commutativity work in general, Kan extensions rely on natural transformations to make the extension be the best possible approximation to ${\cal F}$ along ${\cal K}$.  We want to briefly show Kan extensions can be combined with the category of elements defined above to construct ``migration functors'' that map from one generative AI model into another. These migration functors were originally defined in the context of database migration \cite{SPIVAK_2013}, but can also be applied to  generative AI inference. By suitably modifying the category of elements from a set-valued functor $\delta: {\cal C} \rightarrow$ {\bf {Set}}, to some other category, such as the category of topological spaces, namely $\delta: {\cal C} \rightarrow$ {\bf {Top}}, we can extend the migration functors into solving more abstract generative AI inference questions.  Here, for simplicity, we restrict our focus to Kan extensions for migration functors over the category of elements defined over instances of a generative AI model. 

\begin{definition}
A {\bf {left Kan extension}} of a functor $F: {\cal C} \rightarrow {\cal E}$ along another functor $K: {\cal C} \rightarrow {\cal D}$, is a functor $\mbox{Lan}_K F: {\cal D} \rightarrow {\cal E}$ with a natural transformation $\eta: F \rightarrow \mbox{Lan}_F \circ K$ such that for any other such pair $(G: {\cal D} \rightarrow {\cal E}, \gamma: F \rightarrow G K)$, $\gamma$ factors uniquely through $\eta$. In other words, there is a unique natural transformation $\alpha: \mbox{Lan}_F \implies G$. \\
%
\begin{center}
\begin{tikzcd}[row sep=2cm, column sep=2cm]
\mathcal{C}  \ar[dr, "K"', ""{name=K}]
            \ar[rr, "F", ""{name=F, below, near start, bend right}]&&
\mathcal{E}\\
& \mathcal{D}  \ar[ur, bend left, "\text{Lan}_KF", ""{name=Lan, below}]
                \ar[ur, bend right, "G"', ""{name=G}]
                
%
\arrow[Rightarrow, "\exists!", from=Lan, to=G]
\arrow[Rightarrow, from=F, to=K, "\eta"]
\end{tikzcd}
\end{center}
\end{definition}

A {\bf {right Kan extension}} can be defined similarly.  To understand the significance of Kan extensions for causal inference, we note that under a causal intervention, when a causal category $S$ gets modified to $T$, evaluating the modified generative AI model over a database of instances can be viewed as an example of Kan extension.

 Let $\delta: S \rightarrow$ {\bf {Set}} denote the original generative AI model defined by the category $S$ with respect to some dataset. Let $\epsilon: T \rightarrow$ {\bf {Set}} denote the effect of some change in the category $S$ to $T$, such as deletion of a morphism, as illustrated in Figure~\ref{kan-causal-intervention}. Intuitively, we can consider three cases: the {\em {pullback}} $\Delta_F$ along $F$, which maps the effect of a deletion back to the original model, the {\em {left pushforward}} $\Sigma_F$ and the {\em {right pushforward}} $\prod_F$, which can be seen as adjoints to the pullback $\Delta_F$.

\begin{figure}[h] 
\includegraphics[scale=0.3]{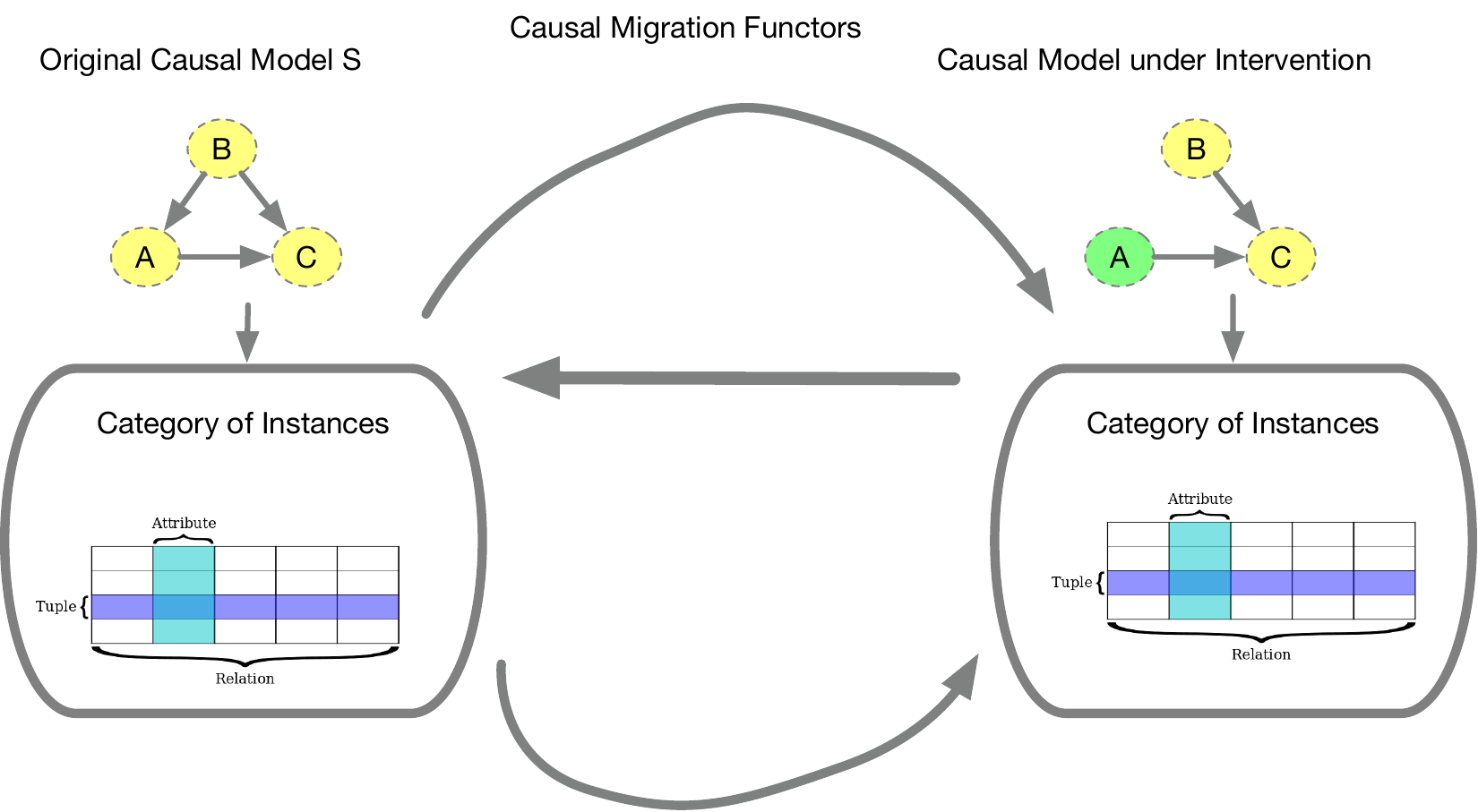}
\caption{Kan  extensions are useful in modeling the effects of modifications of generative AI models, such as deletion of morphisms, where in this toy example of a model over three objects $A, B$, and $C$, the object $A$ is intervened upon, eliminating the morphism into it from object $B$.   \label{kan-causal-intervention}}
 \end{figure} 

Following \cite{SPIVAK_2013}, we can define three {\em  migration functors} that evaluate the impact of a modification of a generative AI model with respect to a dataset of instances.

\begin{enumerate} 
\item The functor $\Delta_F: \epsilon \rightarrow \delta$ sends the functor $\epsilon: T \rightarrow$ {\bf {Set}} to the composed functor $\delta \circ F: S \rightarrow$ {\bf {Set}}. 

\item The functor $\Sigma_F: \delta \rightarrow \epsilon$ is the left Kan extension along $F$, and can be seen as the left adjoint to $\Delta_F$. 

The functor $\prod_F: \delta \rightarrow \epsilon$ is the right Kan extension along $F$, and can be seen as the right adjoint to $\Delta_F$. 
\end{enumerate}

To understand how to implement these functors, we use the following proposition that is stated in \cite{SPIVAK_2013} in the context of database queries, which we are restating in the setting of generative AI. 

\begin{theorem}
Let $F: S \rightarrow T$ be a functor. Let $\delta: S \rightarrow$ {\bf {Set}} and $\epsilon: T \rightarrow$ {\bf {Set}} be two set-valued functors, which can be viewed as two instances of a generative AI model defined by the category $S$ and $T$. If we view $T$ as the generative AI category that results from a modification caused by some modification on $S$ (e.g., deletion of an edge), then there is a commutative diagram linking the category of elements between $S$ and $T$. 
\begin{center}
 \begin{tikzcd}
  \int \delta \arrow{d}{\pi_\delta} \arrow{r}{}
    & \int \epsilon \arrow[]{d}{\pi_\epsilon} \\
  S \arrow[]{r}[]{F}
&T \end{tikzcd}
 \end{center} 

\end{theorem}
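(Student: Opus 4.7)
The plan is to prove this commutativity by first pinning down the tacit compatibility between $\delta$ and $\epsilon$ that the statement requires. In the surrounding discussion of migration functors the naturally occurring case is the pullback migration $\delta = \Delta_F(\epsilon) = \epsilon \circ F$, so I would work this case first; the other two cases ($\Sigma_F$ and $\prod_F$) follow by the same strategy after inserting the unit/counit of the Kan adjunctions. Under the identification $\delta = \epsilon \circ F$, constructing the top horizontal functor $\widetilde F: \int \delta \to \int \epsilon$ is straightforward, and the square will commute on the nose because both $\pi_\delta$ and $\pi_\epsilon$ are first-coordinate projection functors.

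First I would define $\widetilde F$ on objects by $(s, x) \mapsto (F(s), x)$. This is well-defined because $x \in \delta(s) = \epsilon(F(s))$, so $(F(s), x)$ is a legitimate object of $\int \epsilon$ by the definition of the category of elements given earlier in the excerpt. On morphisms, a morphism $(s, x) \to (s', x')$ in $\int \delta$ is an arrow $f: s \to s'$ in $S$ with $\delta(f)(x) = x'$, i.e.\ $\epsilon(F(f))(x) = x'$. I would send this to $F(f): F(s) \to F(s')$; the displayed equation says precisely that $F(f)$ is a morphism $(F(s), x) \to (F(s'), x')$ in $\int \epsilon$. Functoriality of $\widetilde F$ (preservation of identities and composites) then reduces immediately to functoriality of $F$ itself.

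Next I would verify commutativity of the square. On objects, $\pi_\epsilon(\widetilde F(s, x)) = \pi_\epsilon(F(s), x) = F(s) = F(\pi_\delta(s, x))$. On morphisms $f: (s, x) \to (s', x')$ the same calculation reads $\pi_\epsilon(\widetilde F(f)) = F(f) = F(\pi_\delta(f))$. Hence $\pi_\epsilon \circ \widetilde F = F \circ \pi_\delta$ strictly, which is exactly the required commutativity. In fact the square is a pullback of categories in the sense of Figure~\ref{univpr}, mirroring the pullback description of $\int \delta$ against $\mathbf{Set}_* \to \mathbf{Set}$.

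The hard part is conceptual rather than technical: the statement does not explicitly specify how $\delta$ and $\epsilon$ are related, and some compatibility is needed for the diagram to even make sense. In the $\Delta_F$ (restriction) case treated above, the identification $\delta = \epsilon \circ F$ makes the square commute strictly. In the $\Sigma_F$ and $\prod_F$ cases one has instead a canonical natural transformation $\delta \Rightarrow \epsilon \circ F$ (unit of the left Kan adjunction) or $\epsilon \circ F \Rightarrow \delta$ (counit of the right Kan adjunction), and the induced functor on categories of elements then fills the square commuting up to a canonical $2$-cell rather than on the nose. Articulating this choice, and noting that in each of the three migration settings one obtains a canonical filler compatible with $\pi_\delta$ and $\pi_\epsilon$, is what completes the proof.
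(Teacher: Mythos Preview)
Your proposal is correct and follows essentially the same route as the paper: the paper's proof is the one-line remark that the square is in fact a pullback $\int \delta \simeq S \times_T \int \epsilon$, verified by comparing objects and morphisms componentwise, which is exactly the explicit construction and check you carry out. Your write-up is more careful in that you make explicit the tacit hypothesis $\delta = \epsilon \circ F$ (the $\Delta_F$ case) needed for the diagram to be defined, and your added discussion of the $\Sigma_F$ and $\prod_F$ cases via the Kan adjunction unit/counit goes beyond what the paper proves.
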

\begin{proof}
To check that the above diagram is a pullback, that is, $\int \delta \simeq S \times_T \int \delta$, or in words, the fiber product, we can check the existence of the pullback component wise by comparing the set of objects and the set of morphisms in $\int \delta$ with the respective sets in $S \times_T \int \epsilon$.  
\end{proof}
For simplicity, we defined the migration functors above with respect to an actual dataset of instances. More generally, we can compose the set-valued functor $\delta: S \rightarrow {\bf Set}$ with a functor ${\cal T}:$ {\bf {Set}} $\rightarrow$ {\bf {Top}} to the category of topological spaces to derive a Kan extension formulation of the definition of an intervention. We discuss this issue in Section~\ref{homotopy} on  homotopy in generative AI. 

\subsection{The Metric Yoneda Lemma} 

One disadvantage of current generative AI systems, such as large language models, is that they are based a symmetric model of distances. The Yoneda Lemma \cite{maclane:71},  one of the most celebrated results in category theory, can be used to build universal representers in non-symmetric generalized metric spaces, leading to a metric Yoneda Lemma \cite{BONSANGUE19981}. Stated in simple terms, the Yoneda Lemma states the mathematical objects are determined (up to isomorphism) by the interactions they make with other objects in a category. We will show the surprising results of applying this lemma to problems involving computing distances between objects in a metric space. 

A  general principle in machine learning (see Figure~\ref{metric-space}) to discriminate two objects (e.g., probability distributions, images, text documents etc.) is to compare them in a suitable metric space. We now describe a category of generalized metric spaces, where a metric form of the Yoneda Lemma gives us surprising insight.  Often, in category theory, we want to work in an enriched category.  One of the most interesting ways to design categories for applications in AI and ML is to look to augment the basic structure of a category with additional properties. For example, the collection of morphisms from an object $x$ to an object $y$ in a category ${\cal C}$ often has additional structure, besides just being a set. Often, it satisfies additional properties, such as forming a space of some kind such as a vector space or a topological space. We can think of such categories as {\em enriched} categories that exploit some desirable properties. We will illustrate one such example of primary importance to applications in AI and ML that involve measuring the distance between two objects. A distance function is assumed to return some non-negative value between $0$ and $\infty$, and we will view distances as defining enriched $[0, \infty]$ categories. We summarize some results here from \cite{BONSANGUE19981}. 

\begin{figure}[h]
\centering
\begin{minipage}{0.9\textwidth}
\includegraphics[scale=0.8]{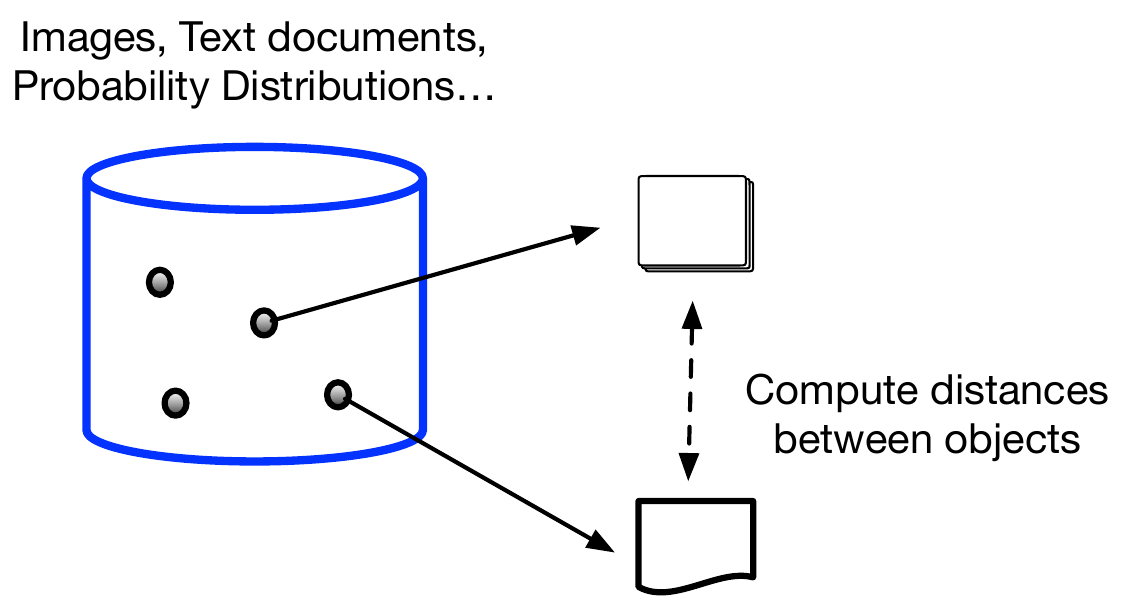}
\end{minipage}
\caption{Many algorithms in AI and ML involve computing distances between objects in a {\em metric space}. Interpreting distances categorically leads to powerful ways to reason about generalized metric spaces.}   
\label{metric-space}
\end{figure}

Figure~\ref{metric-space} illustrates a common motif among many AI and ML algorithms: define a problem in terms of computing distances between a group of objects. Examples of objects include points in $n$-dimensional Euclidean space, probability distributions, text documents represented as strings of tokens, and images represented as matrices. More abstractly, a {\em generalized metric space} $(X, d)$ is a set $X$ of objects, and a non-negative function $X(-,-): X \times X \rightarrow [0, \infty]$ that satisfies the following properties: 

\begin{enumerate}
    \item $X(x,x) = 0$: distance between the same object and itself is $0$. 

    \item $X(x,z) \leq X(x,y) + X(y,z)$: the famous {\em triangle inequality} posits that the distance between two objects cannot exceed the sum of distances between each of them and some other intermediate third object.
\end{enumerate}

In particular, generalized metric spaces are not required to be {\em symmetric}, or satisfy the property that if the distance between two objects $x$ and $y$ is $0$ implies $x$ must be identical to $y$, or finally that distances must be finite. These additional three properties listed below are what defines the usual notion of a {\em metric} space: 

\begin{enumerate}
    \item If $X(x, y) = 0$ and $X(y, x) = 0$ then $x = y$. 

\item $X(x, y) = X(y, x)$. 

\item $X(x, y) < \infty$. 
\end{enumerate}

In fact, we can subsume the previous discussion of causal inference under the notion of generalized metric spaces by defining a category around {\em preorders} $(P, \leq)$, which are relations that are reflexive and transitive, but not symmetric. Causal inference fundamentally involves constructing a preorder over the set of variables in a domain. Here are some examples of generalized metric spaces: 

\begin{enumerate}
    \item Any preorder $(P, \leq)$ such that all $p, q, r \in P$, if $p \leq q$ and $q \leq r$, then, $p \leq r$, and $p \leq p$, where 

   \[ P(p,q) =     \left\{ \begin{array}{rcl}
         0 & \mbox{if}
         & p \leq q \\ \infty  & \mbox{if} & p \not \leq q
                \end{array}\right\} \] 

                 \item The set of strings $\Sigma^*$ over some alphabet defined as the set $\Sigma$ where the distance between two strings $u$ and $v$ is defined as 

   \[ \Sigma^*(u,v) =     \left\{ \begin{array}{rcl}
         0 & \mbox{if}
         & u \ \mbox{is a prefix of} \ v \\ 2^{-n} & \mbox{otherwise} & \mbox{where} \ n \ \mbox{is the longest common prefix of } \ u \ \mbox{and} \ v
                \end{array}\right\} \] 

                 \item The set of non-negative distances $[0,\infty]$ where the distance between two objects $u$ and $v$ is defined as 

   \[ [0,\infty](u,v) =     \left\{ \begin{array}{rcl}
         0 & \mbox{if}
         & u \geq  v \\ v - u & \mbox{otherwise} & \mbox{where} \ r < s 
                \end{array}\right\} \] 

                 \item The powerset ${\cal P}(X)$ of all subsets of a standard metric space, where the distance between two subsets $V, W \subseteq X$ is defined as

   \[  {\cal P}(X)(V, W) = \inf \{ \epsilon > 0 | \forall v \in V, \exists w \in W, X(v, w) \leq \epsilon \} \] 

which is often referred to as the {\em non-symmetric Hausdorff distance}. 
                
\end{enumerate}

Generalized metric spaces can be shown to be $[0, \infty]$-enriched categories as the collection of all morphisms between any two objects itself defines a category.  In particular, the category $[0,\infty]$ is a complete and co-complete symmetric monoidal category. It is a category because objects are the non-negative real numbers, including $\infty$, and for two objects $r$ and $s$ in $[0,\infty]$, there is an arrow from $r$ to $s$ if and only if $r \leq s$. It is complete and co-complete because all equalizers and co-equalizers exist as there is at most one arrow between any two objects. The categorical product $r \sqcap s$ of two objects $r$ and $s$ is simply $\max\{r,s\}$, and the categorical coproduct $r \sqcup s$  is simply $\min\{r,s\}$. More generally, products are defined by supremums, and coproducts are defined by infimums. Finally, the {\em monoidal } structure is induced by defining the tensoring of two objects through ``addition":   

\[ +: [0, \infty] \times [0, \infty] \rightarrow [0,\infty]\]

where $r + s$ is simply their sum, and where as usual $r + \infty = \infty + r = \infty$. 

The category $[0,\infty]$ is also a {\em compact closed} category, which turns out to be a fundamentally important property, and can be simply explained in this case as follows. We can define an ``internal hom functor" $[0,\infty](-, -)$ between any two objects $r$ and $s$ in $[0, \infty]$ the distance $[0,\infty]$ as defined above, and the {\em co pre-sheaf} $[0,\infty](t,-)$ is {\em right adjoint} to $t + -$ for any $t \in [0, \infty]$. 

\begin{theorem}
For all $r, s$ and $t \in [0,\infty]$, 

\[ t + s \geq r \ \ \ \mbox{if and only if} \  \  \ s \geq [0,\infty](t,r)\]
\end{theorem}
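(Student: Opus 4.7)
The plan is to verify the biconditional directly from the definition of the internal hom given earlier in the section. Unpacking that definition, $[0,\infty](t,r)$ equals $0$ when $t \geq r$ and equals $r - t$ otherwise; equivalently, it may be written uniformly as $\max\{r - t,\, 0\}$ under the convention $\infty - t = \infty$ for finite $t$. The argument then splits into two cases according to the order of $t$ and $r$.

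First, suppose $t \geq r$. Then $[0,\infty](t,r) = 0$, so the inequality $s \geq [0,\infty](t,r)$ holds automatically for every $s \in [0,\infty]$. Simultaneously, $t + s \geq t \geq r$ since $s \geq 0$, so $t + s \geq r$ also holds automatically. Both sides of the biconditional are therefore true in this case, and the equivalence holds trivially.

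Second, suppose $t < r$. Then $[0,\infty](t,r) = r - t$, and I would derive the equivalence $t + s \geq r \iff s \geq r - t$ by adding or subtracting $t$ on either side. The only bookkeeping concerns extended-real arithmetic: if $r = \infty$ then $r - t = \infty$, so $s \geq r - t$ forces $s = \infty$, while $t + s \geq \infty$ with $t$ finite also forces $s = \infty$, so the two conditions agree; if $r < \infty$ then $t$ is automatically finite and ordinary subtraction applies without further care.

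The statement is essentially the hom side of the tensor--hom adjunction $t + (-) \dashv [0,\infty](t,-)$ in the closed monoidal poset $([0,\infty], \geq, +, 0)$, and the definition of the internal hom was engineered precisely to enforce this adjunction. Consequently there is no real obstacle; the only point requiring attention is the infinite-arithmetic case analysis sketched above, which is routine.
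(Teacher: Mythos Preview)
Your proof is correct. The paper states this theorem without proof, so there is no argument to compare against; your direct case analysis on whether $t \geq r$ or $t < r$, together with the routine extended-real bookkeeping, is exactly the natural verification of the tensor--hom adjunction in $([0,\infty],\geq,+,0)$ and goes through as written.
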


We will explain the significance of compact closed categories for reasoning about AI and ML systems in more detail later, but in particular, we note that reasoning about feedback requires using compact closed categories to represent ``dual" objects that are diagrammatically represented by arrows that run in the ``reverse" direction from right to left (in addition to the usual convention of information flowing from left to right from inputs to outputs in any process model). 

We can also define a category of generalized metric spaces, where each generalized metric space itself as an object, and for the morphism between generalized metric spaces $X$ and $Y$, we can choose a {\em non-expansive function} $f: X \rightarrow Y$ which has the {\em contraction property}, namely 

\[ Y(f(x), f(y)) \leq c \cdot X(x,y) \]

where $0 < c < 1$ is assumed to be some real number that lies in the unit interval. The category of generalized metric spaces will turn out to be of crucial importance in this paper as we will use a central result in category theory -- the Yoneda Lemma -- to give a new interpretation to distances. 

Finally, let us state a ``metric" version of the Yoneda Lemma specifically for the case of $[0,\infty]$-enriched categories in generalized metric spaces: 

\begin{theorem}\cite{BONSANGUE19981}
({\bf Yoneda Lemma for generalized metric spaces}): Let $X$ be a generalized metric space. For any $x \in X$, let 

\[ X(-, x): X^{\mbox{op}} \rightarrow [0, \infty], \ \ y \longmapsto X(y, x)\]
\end{theorem}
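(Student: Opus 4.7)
The displayed definition of $X(-,x)$ is the beginning of the Yoneda embedding $y : X \to \hat{X}$ into the presheaf space $\hat{X} := [X^{\mathrm{op}}, [0,\infty]]$ of non-expansive functors, and I expect the full conclusion to be either (i) the Yoneda identity $\hat{X}(X(-,x),\phi) = \phi(x)$ for every $\phi \in \hat{X}$, or the equivalent corollary (ii) that $y$ is an isometric embedding, i.e.\ $\hat{X}(X(-,x), X(-,x')) = X(x, x')$. My plan is to prove (i) directly and derive (ii) as an immediate consequence.

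First I would fix notation. The category $[0,\infty]$ has internal hom $[0,\infty](a,b) = \max(0, b-a)$, as already noted in the excerpt's adjunction $t+s \geq r \iff s \geq [0,\infty](t,r)$. The functor space $\hat{X}$ has as objects the non-expansive maps $\phi : X^{\mathrm{op}} \to [0,\infty]$, and the distance between two such functors is given by the end formula, which in this $[0,\infty]$-enriched setting reduces to the supremum
\[
\hat{X}(\phi, \psi) \;=\; \sup_{y \in X}\, [0,\infty]\bigl(\phi(y),\psi(y)\bigr) \;=\; \sup_{y \in X}\, \max\bigl(0,\, \psi(y) - \phi(y)\bigr).
\]
I would verify briefly that $X(-,x)$ itself is a non-expansive functor, which is exactly the triangle inequality $X(y',x) \leq X(y',y) + X(y,x)$ rewritten as $X(y',x) \ominus X(y,x) \leq X(y',y) = X^{\mathrm{op}}(y,y')$.

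The heart of the proof is the Yoneda equation $\hat{X}(X(-,x),\phi) = \phi(x)$, which I would establish by two inequalities. For the lower bound, instantiate the supremum defining $\hat{X}(X(-,x),\phi)$ at $y = x$; since $X(x,x)=0$, the term $\max(0, \phi(x) - X(x,x)) = \phi(x)$ already witnesses $\hat{X}(X(-,x),\phi) \geq \phi(x)$. For the upper bound I need $\phi(y) - X(y,x) \leq \phi(x)$ for every $y$, i.e.\ $\phi(y) \leq \phi(x) + X(y,x)$. This is precisely the non-expansiveness of $\phi$ viewed as a $[0,\infty]$-enriched functor on $X^{\mathrm{op}}$: the functoriality condition $X^{\mathrm{op}}(x,y) \geq [0,\infty](\phi(x),\phi(y))$ unwinds, since $X^{\mathrm{op}}(x,y)=X(y,x)$, to exactly the desired inequality via the closed-category adjunction recalled above. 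Applying this pointwise and taking the sup gives $\hat{X}(X(-,x),\phi) \leq \phi(x)$.

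From the Yoneda equation, the isometric embedding statement is immediate by taking $\phi = X(-,x')$: $\hat{X}(X(-,x), X(-,x')) = X(x,x')$, and symmetrically in the other direction, so $y$ preserves generalized distances exactly. I expect the only subtle point to be the correct handling of the asymmetry of the $[0,\infty]$-hom and the direction of $X^{\mathrm{op}}$ -- it is easy to confuse $X(y,x)$ with $X(x,y)$ and produce a triangle inequality in the wrong direction. Once this bookkeeping is fixed, both inequalities are one-line applications of the $\ominus$/$+$ adjunction, so the main obstacle is conceptual rather than technical: namely, recognizing that the ordinary metric triangle inequality \emph{is} enriched functoriality, and that non-expansive presheaves are exactly the Lipschitz-$1$ functions that the sup norm sees as bounded above by $\phi(x)$ uniformly in $y$.
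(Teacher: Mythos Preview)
Your proposal is correct, and in fact it supplies what the paper does not: the paper merely states this result with a citation to Bonsangue--van Breugel--Rutten and offers only informal commentary (``it follows from the general Yoneda Lemma that $\hat{X}(X(-,x),\phi)=\phi(x)$''), with no proof of its own. Your two-inequality argument---instantiating the supremum at $y=x$ for the lower bound, and invoking non-expansiveness of $\phi$ (i.e.\ enriched functoriality, which unpacks to $\phi(y)\le\phi(x)+X(y,x)$) for the upper bound---is exactly the standard direct proof, and your derivation of the isometry statement as the special case $\phi=X(-,x')$ matches the paper's subsequent theorem, which is likewise stated without proof.

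One small remark: the paper's surrounding text contains an inconsistency about the direction of arrows in $[0,\infty]$ (it says $r\to s$ iff $r\le s$ but then declares the categorical product to be $\max$), so be careful which convention you fix when writing this up formally; your argument implicitly uses the Lawvere convention ($r\to s$ iff $r\ge s$, so products/ends are suprema), which is the correct one for this result and is what Bonsangue et al.\ use.
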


Intuitively, what the generalized metric version of the Yoneda Lemma is stating is that it is possible to represent an element of a generalized metric space by its co-presheaf, exactly analogous to what we will see below in the next section for causal inference! If we use the notation

\[ \hat{X} = [0, \infty]^{X^{\mbox{op}}}\]

to indicate the set of all non-expansive functions from $X^{\mbox{op}}$ to $[0, \infty]$, then the Yoneda embedding defined by $y \longmapsto X(y, x)$ is in fact a non-expansive function, and itself an element of $\hat{X}$! Thus, it follows from the general Yoneda Lemma that for any other element $\phi$ in $\hat{X}$, 

\[ \hat{X}(X(-, x), \phi) = \phi(x) \]

Another fundamental result is that the Yoneda embedding for generalized metric spaces is an {\em isometry}. Again, this is exactly analogous to what we see below for causal inference, which we will denote as the causal reproducing property. 

\begin{theorem}
The Yoneda embedding $y: X \rightarrow \hat{X}$, defined for $x \in X$ by $y(x) = X(-, x)$ is {\em isometric}, that is, for all $x, x' \in X$, we have: 

\[ X(x, x') = \hat{X}(y(x), y(x')) = \hat{X}(X(-, x), X(-, x'))\]
\end{theorem}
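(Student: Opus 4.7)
The plan is to reduce the isometry claim to the previously stated metric Yoneda Lemma $\hat{X}(X(-,x),\phi) = \phi(x)$, with the mild preliminary task of checking that each $y(x)=X(-,x)$ actually lies in $\hat{X}$ (i.e., is non-expansive as a functor $X^{\mathrm{op}} \to [0,\infty]$). Once the Yoneda Lemma applies, we specialize $\phi := X(-,x')$ to obtain $\hat{X}(X(-,x),X(-,x')) = X(-,x')(x) = X(x,x')$, which is precisely the desired identity.

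First I would verify the embedding is well-defined. For fixed $x$, the map $z \mapsto X(z,x)$ is a functor from $X^{\mathrm{op}}$ to $[0,\infty]$ exactly when, for all $y,z \in X$, the internal-hom inequality $[0,\infty](X(z,x), X(y,x)) \leq X^{\mathrm{op}}(z,y) = X(y,z)$ holds. Unfolding $[0,\infty](r,s) = s \ominus r = \max(s-r,0)$, this becomes $X(y,x) \leq X(y,z) + X(z,x)$, which is exactly the triangle inequality guaranteed for generalized metric spaces. Hence $y(x) \in \hat{X}$ for every $x \in X$.

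Next I would invoke the Yoneda Lemma as stated in the paper. Taking $\phi := y(x') = X(-,x')$, which is in $\hat{X}$ by the preceding step, the Lemma yields
\[
\hat{X}\bigl(y(x),\, y(x')\bigr) \;=\; \hat{X}\bigl(X(-,x),\, X(-,x')\bigr) \;=\; X(-,x')(x) \;=\; X(x,x').
\]
This establishes the isometry in one line, and consequently the chain of equalities in the theorem statement.

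For readers who prefer a direct argument bypassing the Lemma, I would also sketch the unfolded version: the distance in the presheaf category is $\hat{X}(\phi,\psi) = \sup_{z \in X} (\psi(z) \ominus \phi(z))$, so $\hat{X}(y(x),y(x')) = \sup_z (X(z,x') \ominus X(z,x))$. Taking $z=x$ gives the lower bound $X(x,x') \ominus 0 = X(x,x')$, while the triangle inequality $X(z,x') \leq X(z,x) + X(x,x')$ gives the matching upper bound $X(z,x') \ominus X(z,x) \leq X(x,x')$ uniformly in $z$. The main subtlety, and the only point that could trip one up, is keeping track of the $X^{\mathrm{op}}$ variance: one must use the triangle inequality in the form $X(y,x) \leq X(y,z) + X(z,x)$ (with $z$ inserted in the middle slot) rather than a naive symmetric version, since the generalized metric is not assumed symmetric. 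Everything else is formal.
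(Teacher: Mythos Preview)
Your proof is correct. The paper itself does not actually prove this theorem; it merely states it (attributing the result to \cite{BONSANGUE19981}) and comments on its analogy with the ``Causal Representer Theorem,'' so there is no in-paper argument to compare against. Your reduction to the metric Yoneda Lemma $\hat{X}(X(-,x),\phi)=\phi(x)$, specialized to $\phi=X(-,x')$, is exactly the standard route and is also the argument given in the cited source; the preliminary check that $X(-,x)\in\hat X$ via the triangle inequality, and your caution about the asymmetric form $X(y,x)\leq X(y,z)+X(z,x)$, are both appropriate. The alternative direct computation using $\hat X(\phi,\psi)=\sup_z(\psi(z)\ominus\phi(z))$ is a welcome sanity check and would also stand on its own.
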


Once again, we will see a remarkable resemblance of this result to the Causal Representer Theorem below.  With the metric Yoneda Lemma in hand, we can now define a framework for solving static UIGs in generalized metric spaces. 

 \begin{definition}
Two objects $c$ and $d$ are isomorphic in a generalized metric space category $X$ if they are isometrically mapped into the category $\hat{X}$ by the Yoneda embedding $c \rightarrow X(-, c)$ and $d \rightarrow X(-, d)$ such that $X(c,d) = \hat{X}(X(-, c), X(-, d))$, where they can be defined isomorphically by a suitable pair of suitable natural transformations. 
 \end{definition}

\subsection{Adjoint Functors} 

Adjoint functors naturally arise in a number of contexts, among the most important being between ``free" and ``forgetful" functors. Let us consider a canonical example that  is of prime significance in many applications in AI and ML. 

\begin{figure}[h] 
\centering
\caption{Adjoint functors provide an elegant characterization of the relationship between the category of statistical generative AI models and that of causal generative AI models. Statistical models can be viewed as the result of applying a ``forgetful" functor to a causal model that drops the directional structure in a causal model, whereas causal models can be viewed as ``words" in a ``free" algebra that results from the left adjoint functor to the forgetful functor.  \label{causalstatistical}}
\vskip 0.1in
\begin{minipage}{0.7\textwidth}
\vskip 0.1in
\includegraphics[scale=0.45]{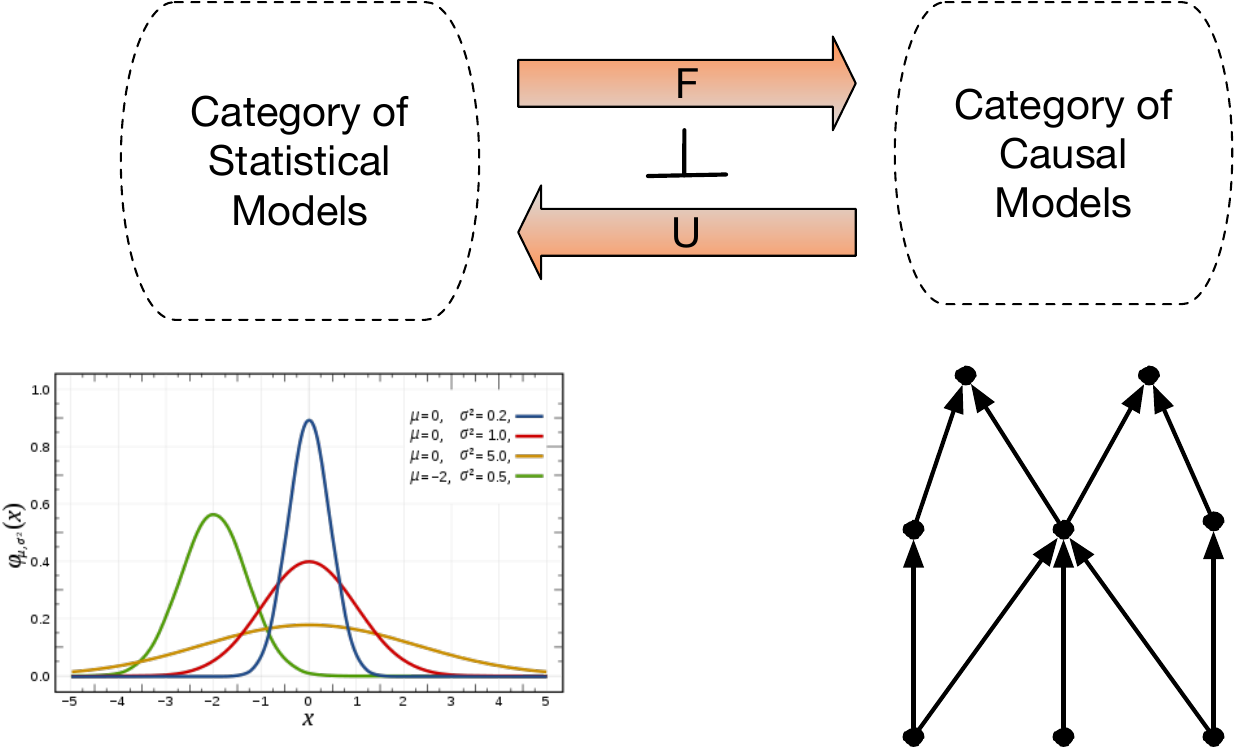}
\end{minipage}
\end{figure}

Figure~\ref{causalstatistical} provides a high level overview of the relationship between a category of statistical generative AI models and a category of causal generative AI models that can be seen as being related by a pair of adjoint ``forgetful-free" functors. A statistical model can be abstractly viewed in terms of its conditional independence properties. More concretely, the category of {\em separoids}, defined in Section 2, consists of objects called separoids $(S, \leq)$, which are semilattices with a preordering $\leq$ where the elements $x, y, z \in S$ denote entities  in a statistical model. We define a ternary relation $(\bullet \perp \bullet | \bullet) \subseteq S \times S \times S$, where $(x \perp y | z)$ is interpreted as the statement $x$ is conditionally independent of $y$ given $z$ to denote a relationship between triples that captures abstractly the property that occurs in many applications in AI and ML. For example, in statistical ML, a sufficient statistic $T(X)$ of some dataset $X$, treated as a random variable, is defined to be any function for which the conditional independence relationship $(X \perp  \theta| T(X))$, where $\theta \in \mathbb{R}^k$ denotes the parameter vector of some statistical model $P(X)$ that defines the true distribution of the data. Similarly, in causal inference, $(x \perp y | z)  \Rightarrow p(x, y, z) = p(x | z) p(y | z)$ denotes a statement about the probabilistic conditional independence of $x$ and $y$ given $z$. In causal inference, the goal is to recover a partial order defined as a directed acyclic graph (DAG) that ascribes causality among a set of random variables from a dataset specifying a sample of their joint distribution. It is well known that without non-random interventions, causality cannot be inferred uniquely, since because of Bayes rule, there is no way to distinguish causal generative AI models such as $x \rightarrow y \rightarrow z$ from the reverse relationship $z \rightarrow y \rightarrow x$. In both these models, $x \perp z | y$ and because of Bayes inversion, one model can be recovered from the other. We can define a ``free-forgetful" pair of adjoint functors between the category of conditional independence relationships, as defined by separoid objects, and the category of causal generative AI models parameterized by DAG models.

We first review some basic material relating to adjunctions defined by adjoint functors, before proceeding to describe the theory of monads, as the two are intimately related. Our presentation of adjunctions and monads is based on Riehl's excellent textbook on category theory \cite{riehl2017category} to which the reader is referred to for a more detailed explanation.
Adjunctions are defined by an opposing pair of functors $F: C \leftrightarrow D: G$ that can be defined more precisely as follows. 

\begin{definition}
    An {\bf adjunction} consists of a pair of functors $F: C \rightarrow D$ and $G: D \rightarrow C$, where $F$ is often referred to {\em left adjoint} and  $G$ is referred to as the {\em right adjoint}, that result in the following isomorphism relationship holding between their following sets of homomorphisms in categories $C$ and $D$: 

    \[ D(Fc, d) \simeq C(c, Gd) \]
\end{definition}

We can express the isomorphism condition more explicitly in the form of the following commutative diagram: 

\begin{center}
\begin{tikzcd}
  D(Fc, d) \arrow[r, "\simeq"] \arrow[d, "k_*"]
    & C(c, Gd) \arrow[d, "Gk_*" ] \\
  D(Fc, d') \arrow[r,  "\simeq"]
& C(c, Gd')
\end{tikzcd}
\end{center}

Here, $k: d \rightarrow d'$ is any morphism in $D$, and $k_*$ denotes the ``pullback" of $k$ with the mapping $f: Fc \rightarrow d$ to yield the composite mapping $k \circ f$. The adjunction condition holds that the transpose of this composite mapping is equal to the composite mapping $g: c \rightarrow Gd$ with $G k: Gd \rightarrow G d'$. We can express this dually as well, as follows:

\begin{center}
\begin{tikzcd}
  D(Fc, d) \arrow[r, "\simeq"] \arrow[d, "Fh^*"]
    & C(c, Gd) \arrow[d, "h^*" ] \\
  D(Fc', d) \arrow[r,  "\simeq"]
& C(c', Gd')
\end{tikzcd}
\end{center}

where now $h: c' \rightarrow c$ is a morphism in $C$, and $h^*$ denote the ``pushforward" of $h$. Once again, the adjunction condition is a statement that the transpose of the composite mapping $f \circ Fh: F c' \rightarrow d$ is identical to the composite of the mappings $h: c \rightarrow c'$ with $f: c \rightarrow Gd$.

It is common to denote adjoint functors in this turnstile notation, indicating that $F: C \rightarrow D$ is left adjoint to $G: D \rightarrow C$, or more simply as $F \vdash G$. 

\[
        \begin{tikzcd}
            \mathcal{D}\arrow[r, shift left=.75ex, "G"{name=G}] & \mathcal{C}\arrow[l, shift left=.75ex, "F"{name=F}] 
            \arrow[phantom, from=F, to=G, "\dashv" rotate=90].      
        \end{tikzcd}
    \]

We can use the concept of universal arrows introduced in Section 2 to give more insight into adjoint functors. The adjunction condition for a pair of adjoint functors $F \vdash G$ 

\[ D(Fc, d) \simeq C(c, Gd) \]

implies that for any object $c \in C$, the object $Fc \in D$ represents the functor $C(c, G -): D \rightarrow {\bf Set}$. Recall from the Yoneda Lemma that the natural isomorphism $D(Fc, -) \simeq C(c, G-)$ is determined by an element of $C(c,GFc)$, which can be viewed as the transpose of $1_{Fc}$. Denoting such elements as $\eta_c$, they can be assembled jointly into the natural transformation $\eta: 1_C \rightarrow GF$. Below we will see that this forms one of the conditions for an endofunctor to define a monad. 

\begin{theorem}
    The {\bf unit} $\eta: 1_C \rightarrow GF$ is a natural transformation defined by an adjunction $F \vdash G$, whose component $\eta_c: c \rightarrow GF c$ is defined to be the transpose of the identity morphism $1_{Fc}$. 
\end{theorem}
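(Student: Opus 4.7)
The plan is to verify naturality of the family $\{\eta_c\}_{c \in C}$ assembled from transposes of identities. Writing $\phi_{c,d} \colon D(Fc,d) \xrightarrow{\sim} C(c, Gd)$ for the adjunction isomorphism, we have by definition $\eta_c = \phi_{c,Fc}(1_{Fc})$, and the content of the theorem is that for every morphism $h \colon c \to c'$ in $C$ the square
\[
\begin{tikzcd}
c \arrow[r, "\eta_c"] \arrow[d, "h"'] & GFc \arrow[d, "GFh"] \\
c' \arrow[r, "\eta_{c'}"'] & GFc'
\end{tikzcd}
\]
commutes. The only tools needed are the two naturality squares for $\phi$ already displayed in the excerpt: naturality in the codomain variable $d$ (post-composition with a morphism $k$ in $D$ corresponds to post-composition with $Gk$ in $C$) and naturality in the domain variable $c$ (pre-composition with $Fh$ in $D$ corresponds to pre-composition with $h$ in $C$).

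The execution is a ``same arrow, two transposes'' argument: I would compute $\phi_{c, Fc'}(Fh)$ in two different ways. First, view $Fh \colon Fc \to Fc'$ as obtained from $1_{Fc}$ by post-composition with $Fh$; naturality of $\phi$ in the codomain then gives $\phi_{c, Fc'}(Fh) = GFh \circ \phi_{c,Fc}(1_{Fc}) = GFh \circ \eta_c$. Second, view $Fh$ as obtained from $1_{Fc'}$ by pre-composition with $Fh$; naturality of $\phi$ in the domain applied to $h \colon c \to c'$ gives $\phi_{c, Fc'}(Fh) = \phi_{c',Fc'}(1_{Fc'}) \circ h = \eta_{c'} \circ h$. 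Equating the two expressions yields the naturality square, completing the argument.

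The main obstacle is not mathematical depth but careful bookkeeping: one must keep straight which variable of $\phi$ is being varied, distinguish pre- from post-composition (since the two naturality squares act on opposite sides of the relevant morphism), and track the direction of the adjunction transpose so that each invocation of $\phi$ targets the correct hom-set. Once the two naturality squares of the hom-set bijection are applied in the correct slots, the proof collapses to the single identity $GFh \circ \eta_c = \eta_{c'} \circ h$, with no further input from the specific functors $F$ and $G$ beyond the adjunction data itself.
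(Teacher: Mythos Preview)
Your proposal is correct and follows essentially the same route as the paper: both arguments reduce the naturality square for $\eta$ to the trivially commuting square $Ff \circ 1_{Fc} = 1_{Fc'} \circ Ff$ in $D$ via the naturality of the adjunction bijection in each variable. The paper states this tersely by displaying the two squares and asserting that the first is the transpose of the second, whereas you make the intermediate step explicit by evaluating $\phi_{c,Fc'}(Fh)$ two ways.
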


{\bf Proof:} We need to show that for every $f: c \rightarrow c'$, the following diagram commutes, which follows from the definition of adjunction and the isomorphism condition that it imposes, as well as the obvious commutativity of the second transposed diagram below the first one. 

\begin{center}
\begin{tikzcd}
  c \arrow[r, "\eta_c"] \arrow[d, "f"]
    & GF c\arrow[d, "GF f" ] \\
  c' \arrow[r,  "\eta_{c'}"]
& GF c'
\end{tikzcd}
\end{center}

\begin{center}
\begin{tikzcd}
  Fc \arrow[r, "1_{Fc}"] \arrow[d, "Ff"]
    & Fc \arrow[d, "F f" ] \\
  Fc' \arrow[r,  "1_{Fc'}"]
& Fc'
\end{tikzcd}
\end{center}

The dual of the above theorem leads to the second major component  of an adjunction. 

\begin{theorem}
    The {\bf counit} $\epsilon: FG \Rightarrow 1_D$ is a natural transformation defined by an adjunction $F \vdash G$, whose components $\epsilon_c: F G d \rightarrow d$ at $d$ is  defined to be the transpose of the identity morphism $1_{Gd}$. 
\end{theorem}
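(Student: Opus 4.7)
The plan is to dualize the argument just given for the unit $\eta$. Starting from the adjunction isomorphism $D(Fc, d) \simeq C(c, Gd)$, I would specialize the left slot by setting $c := Gd$, which yields a natural bijection $D(FGd, d) \simeq C(Gd, Gd)$. The identity morphism $1_{Gd} \in C(Gd, Gd)$ then corresponds under this bijection to a uniquely determined morphism $FGd \to d$ in $D$, and I take this to be the definition of the component $\epsilon_d$.

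To establish naturality, I need to verify that for every morphism $g: d \to d'$ in $D$ the square with vertices $FGd, d, FGd', d'$, whose horizontal arrows are $\epsilon_d$ and $\epsilon_{d'}$ and whose vertical arrows are $FGg$ and $g$, commutes. The strategy is to transpose this square across the adjunction isomorphism and reduce it to a manifestly commutative square in $C$. Using the two naturality conditions for the hom-set bijection stated earlier in the excerpt (the $k_*$-square for post-composition in $D$, and the $h^*$-square for pre-composition in $C$), both composites $g \circ \epsilon_d$ and $\epsilon_{d'} \circ FGg: FGd \to d'$ transpose to the same morphism $Gd \to Gd'$, namely $Gg$ itself, because on the $C$-side the square becomes the trivially commutative diagram whose horizontal arrows are $1_{Gd}$ and $1_{Gd'}$ and whose vertical arrows are both $Gg$. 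Since the adjunction bijection is injective, equality of transposes forces equality upstairs, giving naturality.

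The main obstacle, modest as it is, is purely bookkeeping: one must deploy the two naturality conditions for the hom-set isomorphism in the correct variables and in the correct order, since the transpose of $g \circ \epsilon_d$ requires pushing forward along $g$ in the $D$-slot while the transpose of $\epsilon_{d'} \circ FGg$ requires a pushforward along $Gg$ in the $C$-slot. This mirrors exactly the argument for $\eta$, where commutativity upstairs reduced to a diagram of identities downstairs; here the dual downstairs square in $C$ is trivial because both composable paths collapse to $Gg$.
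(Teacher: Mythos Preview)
Your proposal is correct and matches the paper's approach: the paper does not write out a separate argument for the counit but simply states that it is the dual of the preceding theorem for the unit, which is precisely the dualization you carry out in detail. If anything, you have supplied more than the paper does, since the paper leaves the transposed naturality square implicit rather than spelling out that both paths transpose to $Gg$.
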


Adjoint functors interact with universal constructions, such as limits and colimits, in ways that turn out to be important for a variety of applications in AI and ML. We state the main results here, but refer the reader to \cite{riehl2017category} for detailed proofs. Before getting to the general case, it is illustrative to see the interaction of limits and colimits with adjoint functors for preorders. Recall from above that separoids are defined by a preorder $(S, \leq)$ on a join lattice of elements from a set $S$. Given two separoids $(S, \leq_S)$ and $(T, \leq_T)$, we can define the functors $F: S \rightarrow T$ and $G: T \rightarrow S$ to be order-preserving functions such that 

\[ Fa \leq_T b \ \ \ \mbox{if and only if} \ \ \ a \leq_S Gb \]

Such an adjunction between preorders is often called a {\em Galois connection}.  For preorders, the limit is defined by the {\em meet} of the preorder, and the colimit is defined by the {\em join} of the preorder. We can now state a useful result. For a fuller discussion of preorders and their applications from a category theory perspective, see \cite{fong2018seven}. 

\begin{theorem}
    {\bf Right adjoints preserve meets in a preorder}: Let $f: P \rightarrow Q$ be left adjoint to $g: Q \rightarrow P$, where $P, Q$ are both preorders, and $f$ and $g$ are monotone order-preserving functions. For any subset $A \subseteq Q$, let $g(A) = \{ g(a) | a \in Q \}$. If $A$ has a meet $\bigwedge A \in Q$, then $g(A)$ has a meet $\wedge g(A) \in P$, and we can see that $g(\wedge A) \simeq \bigwedge g(A) $, that is, right adjoints preserve meets. Similarly, left adjoints preserve meets, so that if $A \subset P$ such that $\bigvee A \in P$ then $f(A)$ has a join $\vee f(A) \in Q$ and we can set $f(\vee A) \simeq \bigvee f(A)$, so that left adjoints preserve joins. 
\end{theorem}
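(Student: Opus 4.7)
The plan is to exploit the defining bijection of the Galois connection, namely $f(p) \leq_Q q$ iff $p \leq_P g(q)$, to transfer lower-bound information between $P$ and $Q$. Since we are working in preorders, the symbol $\simeq$ means we must establish the two inequalities $g(\bigwedge A) \leq_P \bigwedge g(A)$ and $\bigwedge g(A) \leq_P g(\bigwedge A)$; equivalently, we must verify the universal property of the meet for the object $g(\bigwedge A)$ with respect to the family $g(A)$.

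First I would handle the ``easy'' direction: $g(\bigwedge A)$ is a lower bound of $g(A)$. For any $a \in A$ the definition of meet gives $\bigwedge A \leq_Q a$, and monotonicity of $g$ immediately yields $g(\bigwedge A) \leq_P g(a)$. This step uses only that $g$ is order-preserving; the adjunction is not yet needed.

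The core of the proof is the universality (greatest lower bound) step, and here the adjunction does the work. Let $p \in P$ be any lower bound of $g(A)$, i.e.\ $p \leq_P g(a)$ for every $a \in A$. By the adjunction $f \dashv g$, this is equivalent to $f(p) \leq_Q a$ for every $a \in A$, so $f(p)$ is a lower bound of $A$ in $Q$. The universal property of $\bigwedge A$ then forces $f(p) \leq_Q \bigwedge A$, and transposing back through the adjunction gives $p \leq_P g(\bigwedge A)$. Hence $g(\bigwedge A)$ dominates every lower bound of $g(A)$ and is therefore the meet, establishing $g(\bigwedge A) \simeq \bigwedge g(A)$. The join statement for the left adjoint $f$ is completely dual: one starts from $a \leq_P \bigvee A$, applies $f$ to get an upper bound of $f(A)$, and uses the adjunction in the opposite direction to confirm universality.

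The only real conceptual step, and the place where I would be most careful, is the double transposition across the adjunction in the universality argument; everything else is monotonicity and the definition of meet/join. I anticipate no genuine obstacle, since this is an instance of the general ``right adjoints preserve limits'' principle specialized from arbitrary categories to the one-arrow-between-any-two-objects setting of preorders, where limits collapse to meets and colimits to joins. One minor bookkeeping point worth flagging in the writeup is that the statement in the theorem as given contains a typographical slip (``left adjoints preserve meets'' in the last clause should read ``left adjoints preserve joins''), which the dual argument above in fact proves.
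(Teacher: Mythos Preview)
Your proposal is correct and follows essentially the same route as the paper: first use monotonicity of $g$ to get that $g(\bigwedge A)$ is a lower bound of $g(A)$, then take an arbitrary lower bound $b$ of $g(A)$, transpose across the adjunction to see $f(b)$ is a lower bound of $A$, conclude $f(b)\leq \bigwedge A$, and transpose back. The only cosmetic difference is that the paper phrases the transposition via the unit/counit inequalities $p\leq g(f(p))$ and $f(g(q))\leq q$ rather than directly via the defining bijection, and your observation about the typo in the statement is accurate.
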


{\bf Proof:} The proof is not difficult in this special case of the category being defined as a preorder. If $f: P \rightarrow Q$ and $g: Q \rightarrow P$ are monotone adjoint maps on preorders $P, Q$, and $A \subset Q$ is any subset such that its meet is $m = \wedge A$. Since $g$ is monotone, $g(m) \leq g(a), \ \forall a \in A$, hence it follows that $g(m) \leq g(A)$. To show that $g(m$ is the greatest lower bound, if we take any other lower bound $b \leq g(a), \ \forall a \in A$, then we want to show that $b \leq g(m)$. Since $f$ and $g$ are adjoint, for every $p \in  P, q \in Q$, we have

\[ p \leq g(f(p)) \ \ \ \mbox{and} \ \ \ f(g(q)) \leq q \]

Hence, $f(b) \leq a$ for all $a \in A$, which implies $f(b)$ is  a lower bound for $A$ on $Q$. Since the meet $m$ is the greatest lower bound, we have $f(b) \leq m$. Using the Galois connection, we see that $b \leq g(m)$, and hence showing that $g(m)$ is the greatest lower bound as required. An analogous proof follows to show that left adjoints preserve joins. $\qed$

We can now state the more general cases for any pair of adjoint functors, as follows. 

\begin{theorem}
    A category ${\cal C}$ admits all limits of diagrams indexed by a small category ${\cal J}$ if and only if the constant functor $\Delta: {\cal C} \rightarrow {\cal C}^{{\cal J}}$ admits a right adjoint, and admits all colimits of ${\cal J}$-indexed diagrams if and only if $\Delta$ admits a left adjoint. 
\end{theorem}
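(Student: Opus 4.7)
The plan is to prove both equivalences by unpacking the definition of a cone in terms of the constant functor $\Delta$, and then recognizing the universal property of (co)limits as precisely the hom-set isomorphism that defines an adjunction. The key observation, which drives everything, is that a cone over $F: \mathcal{J} \to \mathcal{C}$ with apex $c$ is literally the same data as a natural transformation $\Delta c \Rightarrow F$, so $\mathrm{Cone}(c, F) = \mathcal{C}^{\mathcal{J}}(\Delta c, F)$. With this identification in hand, the defining universal isomorphism of the limit given earlier in the excerpt, $\mathcal{C}(c, \lim F) \cong \mathrm{Cone}(c, F)$, becomes the statement $\mathcal{C}(c, \lim F) \cong \mathcal{C}^{\mathcal{J}}(\Delta c, F)$, which is exactly the adjunction hom-set isomorphism $\Delta \dashv \lim$.

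For the forward direction of the limit statement, I would assume every $\mathcal{J}$-indexed diagram in $\mathcal{C}$ has a limit and construct a candidate right adjoint $R: \mathcal{C}^{\mathcal{J}} \to \mathcal{C}$ by setting $RF := \lim F$ on objects. For a morphism $\alpha: F \Rightarrow G$ in $\mathcal{C}^{\mathcal{J}}$, post-composition of the limit cone of $F$ with $\alpha$ produces a cone over $G$ with apex $\lim F$; the universal property of $\lim G$ then yields a unique arrow $R\alpha: \lim F \to \lim G$. Functoriality of $R$ follows from uniqueness in the universal property, and naturality of the isomorphism $\mathcal{C}(c, RF) \cong \mathcal{C}^{\mathcal{J}}(\Delta c, F)$ in both $c$ and $F$ is just the naturality built into the cone functor. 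For the reverse direction, if $R$ is any right adjoint to $\Delta$, then for each $F$ the adjunction gives a natural isomorphism $\mathcal{C}(-, RF) \cong \mathcal{C}^{\mathcal{J}}(\Delta(-), F) = \mathrm{Cone}(-, F)$, so $RF$ represents the cone functor; the image of $1_{RF}$ under this isomorphism is the required universal cone, exhibiting $RF$ as $\lim F$.

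The colimit half is formally dual and follows by reversing all arrows. A cocone under $F$ with nadir $c$ is by definition a natural transformation $F \Rightarrow \Delta c$, so $\mathrm{Cone}(F, c) = \mathcal{C}^{\mathcal{J}}(F, \Delta c)$. The universal property of the colimit, $\mathcal{C}(\mathrm{colim}\, F, c) \cong \mathcal{C}^{\mathcal{J}}(F, \Delta c)$, is precisely the statement that $\mathrm{colim} \dashv \Delta$, and the same back-and-forth argument as above, with the roles of source and target of $\Delta$ swapped, establishes the equivalence.

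The main obstacle, such as it is, is purely bureaucratic: verifying that the assignment $F \mapsto \lim F$ is genuinely functorial and that the resulting bijection of hom-sets is natural in \emph{both} variables simultaneously, rather than only in $c$ (where it is obvious from the universal property) or only in $F$ (where it follows from functoriality of $\lim$). This naturality in both slots is what promotes a pointwise representability statement into an adjunction, and it is where the Yoneda Lemma, invoked earlier in the excerpt to characterize limits, carries the argument: once $R$ is defined on objects by representability and on morphisms by the induced map between representing objects, Yoneda forces the square expressing naturality in $F$ to commute automatically.
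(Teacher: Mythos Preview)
Your proposal is correct and takes essentially the same approach as the paper: both identify cones over $F$ with apex $c$ as morphisms $\Delta c \to F$ in $\mathcal{C}^{\mathcal{J}}$ and then read off the adjunction hom-set isomorphism $\mathcal{C}^{\mathcal{J}}(\Delta c, F) \simeq \mathcal{C}(c, \lim F)$ from the universal property of the limit. The paper's argument is a terse one-paragraph sketch that stops at this isomorphism, whereas you additionally spell out functoriality of $F \mapsto \lim F$, the converse direction, naturality in both variables, and the dual colimit case; so your version is a strict elaboration of the same idea rather than a different route.
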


By way of explanation, the constant functor $c: J \rightarrow C$ sends every object of $J$ to $c$ and every morphism of $J$ to the identity morphism  $1_c$. Here, the constant functor $\Delta$ sends every object $c$ of $C$ to the constant diagram $\Delta c$, namely the functor that maps each object $i$ of $J$ to the object $c$ and each morphism of $J$ to the identity $1_c$. The theorem follows from the definition of the universal properties of colimits and limits. Given any object $c \in C$, and any diagram (functor) $F \in {\cal C}^{{\cal J}}$, the set of morphisms ${\cal C}^{{\cal J}}(\Delta c, F)$ corresponds to the set of natural transformations from the constant ${\cal J}$-diagram at $c$ to the diagram $F$. These natural transformations precisely correspond to the cones over $F$ with summit $c$ in the definition given earlier in Section 2. It follows that there is an object $\lim F \in {\cal C}$ together with an isomorphism 

\[ {\cal C}^{{\cal J}}(\Delta c, F) \simeq {\cal C}(c, \lim F) \]

We can now state the more general result that we showed above for the special case of adjoint functors on preorders. 

\begin{theorem}
    Right adjoints preserve limits, whereas left adjoints preserve colimits. 
\end{theorem}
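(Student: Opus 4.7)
The plan is to reduce the claim to a chain of natural isomorphisms built from the adjunction $F \vdash G$ and the representable characterization of limits recalled earlier in the excerpt. Fix a small indexing category ${\cal J}$, a diagram $K: {\cal J} \rightarrow {\cal D}$ with limit cone $\lambda: \lim K \Rightarrow K$, and let $F: {\cal C} \rightarrow {\cal D}$ be left adjoint to $G: {\cal D} \rightarrow {\cal C}$. I want to show $G(\lim K)$ carries the universal cone over $GK: {\cal J} \rightarrow {\cal C}$, so that $G(\lim K) \simeq \lim GK$. By the definition of a limit as a representation, ${\cal D}(-, \lim K) \simeq \mbox{Cone}(-, K)$, and by the Yoneda lemma it suffices to exhibit a natural isomorphism ${\cal C}(-, G(\lim K)) \simeq \mbox{Cone}(-, GK)$ of functors ${\cal C}^{op} \rightarrow \mbox{Set}$; the induced comparison map will then be the image of $\lambda$ under $G$.

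Next I would compute, for each object $c \in {\cal C}$, the chain
$$
{\cal C}(c, G(\lim K)) \ \simeq\ {\cal D}(Fc, \lim K) \ \simeq\ \mbox{Cone}(Fc, K) \ \simeq\ \mbox{Cone}(c, GK).
$$
The first isomorphism is the adjunction, the second is the universal property of $\lim K$, and the third is obtained by transposing each leg of a cone: a cone $\mu: Fc \Rightarrow K$ with components $\mu_j: Fc \rightarrow Kj$ corresponds under $F \vdash G$ to a family $\tilde\mu_j: c \rightarrow GKj$, and compatibility with an arrow $\alpha: j \rightarrow j'$ in ${\cal J}$, expressed as $K\alpha \circ \mu_j = \mu_{j'}$, transposes to $GK\alpha \circ \tilde\mu_j = \tilde\mu_{j'}$, so $\tilde\mu$ is a cone over $GK$. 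Naturality in $c$ follows from naturality of the adjunction in its contravariant variable.

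The main obstacle will be the bookkeeping in step three: one must check that the transpose of a natural transformation $Fc \Rightarrow K$ under varying $c$ really is natural in $c$, not merely a pointwise bijection, and that the cone compatibility squares transport correctly under the adjunction unit and counit. Once that naturality is verified, the composite isomorphism shows $G$ preserves the limit of $K$, and since $K$ was arbitrary, $G$ preserves all limits that exist in ${\cal D}$. The dual statement that left adjoints preserve colimits is obtained essentially for free: passing to opposite categories turns $F \vdash G$ into $G^{op} \vdash F^{op}$ with $F^{op}$ now the right adjoint, and colimits in ${\cal C}, {\cal D}$ become limits in ${\cal C}^{op}, {\cal D}^{op}$, so the argument above applies verbatim to yield preservation of colimits by $F$.
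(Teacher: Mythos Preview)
Your argument is correct and is the standard proof via the chain of natural isomorphisms ${\cal C}(c, G(\lim K)) \simeq {\cal D}(Fc, \lim K) \simeq \mbox{Cone}(Fc, K) \simeq \mbox{Cone}(c, GK)$, together with duality for the colimit statement. Note, however, that the paper does not actually prove this general theorem: it proves only the special case for preorders (that right adjoints between preorders preserve meets and left adjoints preserve joins) and then merely \emph{states} the general result without proof, referring the reader to \cite{riehl2017category}. So there is no proof in the paper to compare yours against; your write-up supplies what the paper omits, and the one point you flag as needing care---naturality in $c$ of the transpose bijection on cones---follows directly from the naturality of the adjunction isomorphism $D(Fc,d) \simeq C(c,Gd)$ in its first variable.
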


\section{The Coend and End of GAIA: Integral Calculus for Generative AI}

\label{coend} 

In this section, we introduce a powerful abstract integral calculus for generative AI based  on the theory of coends and ends \cite{yoneda-end,loregian_2021}. 

We build on two foundational results in category theory: the metric Yoneda Lemma \cite{BONSANGUE19981} shows how to construct universal representations of generative AI models in generalized metric spaces where symmetry does not hold;  and a categorical integral calculus also introduced by Yoneda \cite{yoneda-end} based on (co)ends, (initial) final objects in a category of (co)wedges. \cite{loregian_2021} provides an excellent book-length treatment of Yoneda's categorical integral calculus of (co)ends.  We define two classes of generative AI  modes based on coends and ends. Coend generative AI models are defined by dinatural transformations between bifunctors $F: {\cal C}^{op} \times {\cal C} \rightarrow {\cal D}$ that combine a contravariant and covariant action. Here,  ${\cal C}$ represents a generic category of generative AI models, modeled as a {\em twisted arrow} category.   The co-domain category ${\cal D}$ is the category ${\bf Meas}$ of measurable spaces for generative AI models based on ends, and the category ${\bf Top} $ of topological spaces for the generative AI models based on coends. Recent theoretical results have shown that the traditional Transformer model is a universal approximator of sequences, despite the restriction of permutation equivariance, due to the use of absolute positional encoding of input tokens, which leads to poor generalization on long sequences. Modifications, such as relative positional encoding, impose limitations on the universal approximability of the traditional Transformer. We conjecture that coend generative AI models provide  a non-symmetric measure of distance, and furthermore, capture higher-order interactions between tokens using the structure of simplicial sets. 

\begin{figure}[h]
\centering
\includegraphics[scale=.4]{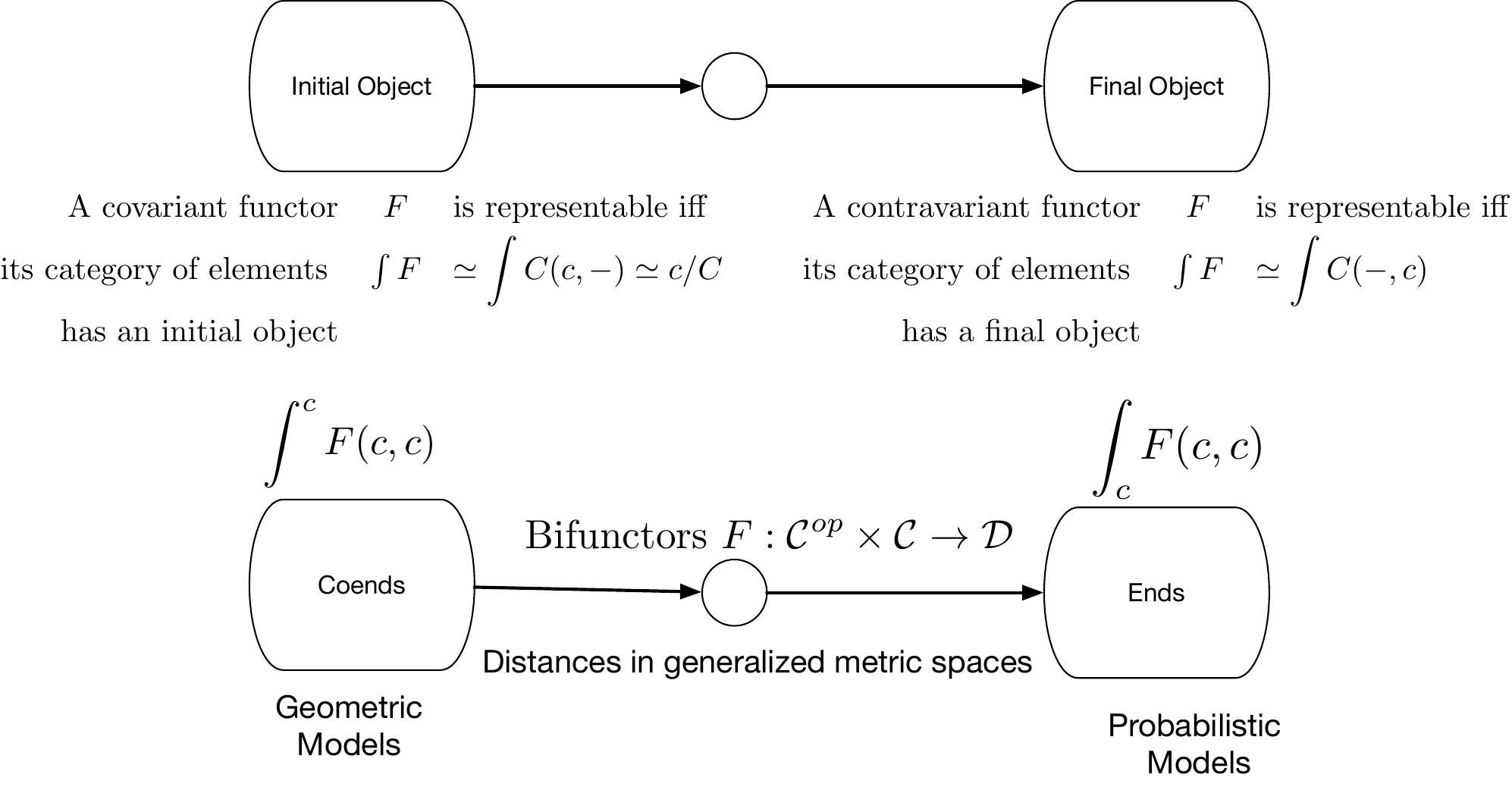}
\caption{The theoretical foundation of GAIA is based on two celebrated results of Yoneda. The first (top row) shows that Yoneda embeddings $\yo(x) = C(-, x)$ are universal representers of objects in a category. We use this result to define universal representers of generative AI models. The second (bottom row) is based on Yoneda's categorical ``integral calculus" using coends and ends \cite{yoneda-end}, which  defines two classes of generative AI models ranging from  probabilistic models to topological models.}  
\label{yonedathms} 
\end{figure}

Figure~\ref{yonedathms} illustrates the two fundamental insights developed by Yoneda that form the theoretical core of our GAIA framework. The celebrated Yoneda Lemma \cite{maclane:71} asserts that objects in a category ${\cal C}$ can be defined purely in terms of their interactions with other objects. This interaction is modeled by {\em contravariant} or {\em covariant} functors: 

\[ {\cal C}(-, x): {\cal C}^{op} \rightarrow {\bf Sets},  \ \ \  {\cal C}(x, -): {\cal C} \rightarrow {\bf Sets} \]

The {\em Yoneda embedding} $x \rightarrow {\cal C}(-, x)$ is sometimes denoted as $\yo(x)$ for the Japanese Hiragana symbol for {\tt yo}, serves as a {\em universal representer}, and generalizes many other similar ideas in machine learning, such as representers $K(-, x)$ in kernel methods \cite{kernelbook} and representers of causal information \cite{DBLP:journals/entropy/Mahadevan23}. There are many variants of the Yoneda Lemma, including versions that map the functors ${\cal C}(-, x)$ and ${\cal C}(x, -)$ into an {\em enriched} category. In particular, \cite{bradley:enriched-yoneda-llms} contains an extended discussion of the use of an enriched Yoneda Lemma to model natural language interactions that result from using a large language model. In particular, we build on the metric Yoneda Lemma \cite{BONSANGUE19981} that defines a universal representer in generalized metric spaces, where distances are non-symmetric. The second major insight from Yoneda \cite{yoneda-end} is based on a powerful concept of the {\em coend} and {\em end} of a {\em bifunctor} $F: {\cal C}^{op} \times {\cal C} \rightarrow {\cal D}$ that combines both a {\em contravariant} and a {\em covariant} action.  We build on the insight that probabilistic generative models, or using distances in some metric space, correspond to final or initial objects in a category of wedges, defined by  bifunctors, and the arrows are dinatural transformations. These initial or terminal objects correspond to coends and ends. Bifunctors $F: {\cal C}^{op} \times {\cal C} \rightarrow {\cal D} $ can be used to construct universal representers of distance functions in generalized metric spaces leading to a ``metric Yoneda Lemma" \cite{BONSANGUE19981}. 

Recent universal approximation results \cite{DBLP:conf/iclr/YunBRRK20} have shown that the category ${\cal C}_{T}$ of transformers is dense in the parent category of all permutation-equivariant functions on (compact) vector spaces ${\cal C}_{PE}$ defined by vectors $x \in \mathbb{R}^{n \times d}$ over arbitrary continuous permutation equivariant functions. We define a twisted arrow category ${\cal C}^{TW}_{PE}$, which has as its objects the equivariant maps of ${\cal C}_{PE}$, and commutative diagrams over pairs of equivariant maps $f,g$ in ${\cal C}_{PE}$ as its morphisms. To define the (co)ends of Transformer models, we define a category of wedges defined by bifunctors $F: ({\cal C}^{TW}_{PE})^{op} \times {\cal C}^{TW}_{PE} \rightarrow {\cal D}$ that contravariantly and covariantly map Transformer models into ${\cal D}$, the codomain category, which may be the category {\bf Meas} of measurable spaces, or the category of distances $[0, \infty]$ defined by $l_p$ norms over permutation equivariant functions. We use the metric Yoneda Lemma to construct a {\em universal representer} of Transformer models in a generalized metric space. Building on Yoneda's categorical calculus of (co)ends, we define the end $\int_c F(c,c)$ of Transformer models  as the final object in the category of wedges, whereas the coend $\int^c F(c,c)$ of Transformer modes are defined as the initial object in the category of cowedges, both defined over dinatural transformations between bifunctors over transformer models. Ends induce probabilistic generative models over sequences of tokens implemented as Transformer models, whereas coends lead to {\em Geometric Transformer Models} (GTMs),  a new class of generative sequence models defined by the  topological embedding of (fuzzy) simplicial sets.

\subsection{Ends and Coends} 

We will analyze generative AI models in  the category of {\em wedges}, which are defined by a collection of objects comprised of bifunctors $F: {\cal C}^{op} \times C \rightarrow {\cal D}$, and a collection of arrows between each pair of bifunctors $F, G$ called a {\em dinatural transformation} (as an abbreviation for diagonal natural transformation). We will see below that the initial and terminal objects in the category of wedges correspond to a beautiful idea first articulated by Yoneda called the {\em coend} or {\em end} \cite{yoneda-end}. \cite{loregian_2021} has an excellent treatment of coend calculus, which we will use below. 

\begin{definition}
    Given a pair of bifunctors $F, G: {\cal C}^{op} \times {\cal C} \rightarrow {\cal D}$, a {\bf dinatural transformation} is defined as follows: 

\[\begin{tikzcd}
	&& {F(c',c)} \\
	{F(c,c)} &&&& {F(c',c')} \\
	\\
	{G(c,c)} &&&& {G(c',c')} \\
	&& {G(c,c')}
	\arrow["{F(f,c)}", from=1-3, to=2-1]
	\arrow["{F(c',f)}"', from=1-3, to=2-5]
	\arrow[dashed, from=2-1, to=4-1]
	\arrow[dashed, from=2-5, to=4-5]
	\arrow["{G(c,f)}", from=4-1, to=5-3]
	\arrow["{G(f,c)}"', from=4-5, to=5-3]
\end{tikzcd}\]

\end{definition}

As \cite{loregian_2021} observes, just as a natural transformation interpolates between two regular functors $F$ and $G$ by filling in the gap between their action on a morphism $Ff$ and $Fg$ on the codomain category, a dinatural transformation ``fills in the gap" between the top of the hexagon above and the bottom of the hexagon. 

We can define a {\em constant bifunctor} $\Delta_d: {\cal C}^{op} \times {\cal C} \rightarrow {\cal D}$ by the object it maps everything to, namely the input pair of objects $(c, c') \rightarrow d$ are both mapped to the object $d \in {\cal D}$, and the two input morphisms $(f, f') \rightarrow {\bf 1}_d$ are both mapped to the identity morphism on $d$. We can now define {\em wedges} and {\em cowedges}. 

\begin{definition}
    A {\bf wedge} for a bifunctor $F: {\cal C}^{op} \times {\cal C} \Rightarrow {\cal D}$ is a dinatural transformation $\Delta_d \rightarrow F$ from the constant functor on the object $d \in {\cal D}$ to $F$. Dually, we can define a {\bf cowedge} for a bifunctor $F$ by the dinatural transformation $P \Rightarrow \Delta_d$. 
\end{definition}

We can now define a {\em category of wedges}, each of whose objects are wedges, and for arrows, we choose arrows in the co-domain category that makes the diagram below commute. 

\begin{definition}
    Given a fixed bifunctor $F: {\cal C}^{op} \times {\cal C} \rightarrow {\cal D}$, we define the {\bf category of wedges} ${\cal W}(F)$ where each object is a wedge $\Delta_d \Rightarrow F$ and given a pair of wedges $\Delta_d \Rightarrow F$ and $\Delta_d' \Rightarrow F$, we choose an arrow $f: d \rightarrow d'$ that makes the following diagram commute: 

\[\begin{tikzcd}
	d &&&& {d'} \\
	\\
	&& {F(c,c)}
	\arrow["f", from=1-1, to=1-5]
	\arrow["{\alpha_{cc}}"', from=1-1, to=3-3]
	\arrow["{\alpha'_{cc}}", from=1-5, to=3-3]
\end{tikzcd}\]
Analogously, we can define a {\bf category of cowedges} where each object is defined as a cowedge $F \Rightarrow \Delta_d$. 
\end{definition}

With these definitions in place, we can once again define the universal property in terms of initial and terminal objects. In the category of wedges and cowedges, these have special significance for formulating and solving UIGs, as we will see in the next section. 

\begin{definition}
    Given a bifunctor $F: {\cal C}^{op} \times {\cal C} \rightarrow {\cal D}$, the {\bf end} of $F$ consists of a terminal wedge $\omega: \underline{{\bf end}}(F) \Rightarrow F$. The object $\underline{{\bf end}}(F) \in D$ is itself called the end. Dually, the {\bf coend} of $F$ is the initial object in the category of cowedges $F \Rightarrow \underline{{\bf coend}}(F)$, where the object $\underline{{\bf coend}}(F) \in {\cal D}$ is itself called the coend of $F$.  
\end{definition}

Remarkably, probabilities can be formally shown to define ends of a category \cite{Avery_2016}, and topological embeddings of datasets, as implemented in popular dimensionality reduction methods like UMAP \cite{umap}, correspond to coends \cite{maclane:71}.  These connections suggest the canonical importance of the category of wedges and cowedges in formulating and solving UIGs. First, we introduce another universal construction, the Kan extension, which turns out to be the basis of every other concept in category theory.

\subsection{Sheaves and Topoi in GAIA}

\label{sheavestopoi} 

So far, we have assumed that the parameter spaces for generative AI are vector spaces $\mathbb{R}^n$, as is typically assumed in deep learning \cite{deeplearningreview-2009}. But there are excellent reasons to consider more abstract spaces, and in particular, we describe here an important category of sheaves and topoi \cite{maclane:sheaves} where some of the most interesting results in category theory, like the (metric) Yoneda Lemma, find their application. 

In this section, we define an important categorical structure defined by sheaves and topoi \cite{maclane:sheaves}.  Yoneda embeddings $\yo(x): {\cal C}^{op} \rightarrow {\bf Sets}$ define (pre)sheaves, which satisfy a number of crucial properties that make it remarkably similar to the category of {\bf Sets}. The sheaf condition plays an important role in many applications of machine learning, from dimensionality reduction \cite{umap} to causal inference \cite{DBLP:journals/entropy/Mahadevan23}. \cite{maclane:sheaves} provides an excellent overview of sheaves and topoi, and how remarkably they unify much of mathematics, from geometry to logic and topology. We will give only the briefest of overviews here, and apply in the main ideas to the study of UIGs. 

\begin{figure}[h] 
\centering
\caption{Two applications of sheaf theory in AI: (top) minimizing travel costs in weighted graphs satisfies the sheaf principle, one example of which is the Bellman optimality principle in dynamic programming \cite{DBLP:books/lib/Bertsekas05} and reinforcement learning \cite{bertsekas:rlbook,DBLP:books/lib/SuttonB98} (bottom): Approximating a function over a topological space must satisfy the sheaf condition. \label{sheaves}}
\vskip 0.1in
\begin{minipage}{0.7\textwidth}
\vskip 0.1in
\includegraphics[scale=0.35]{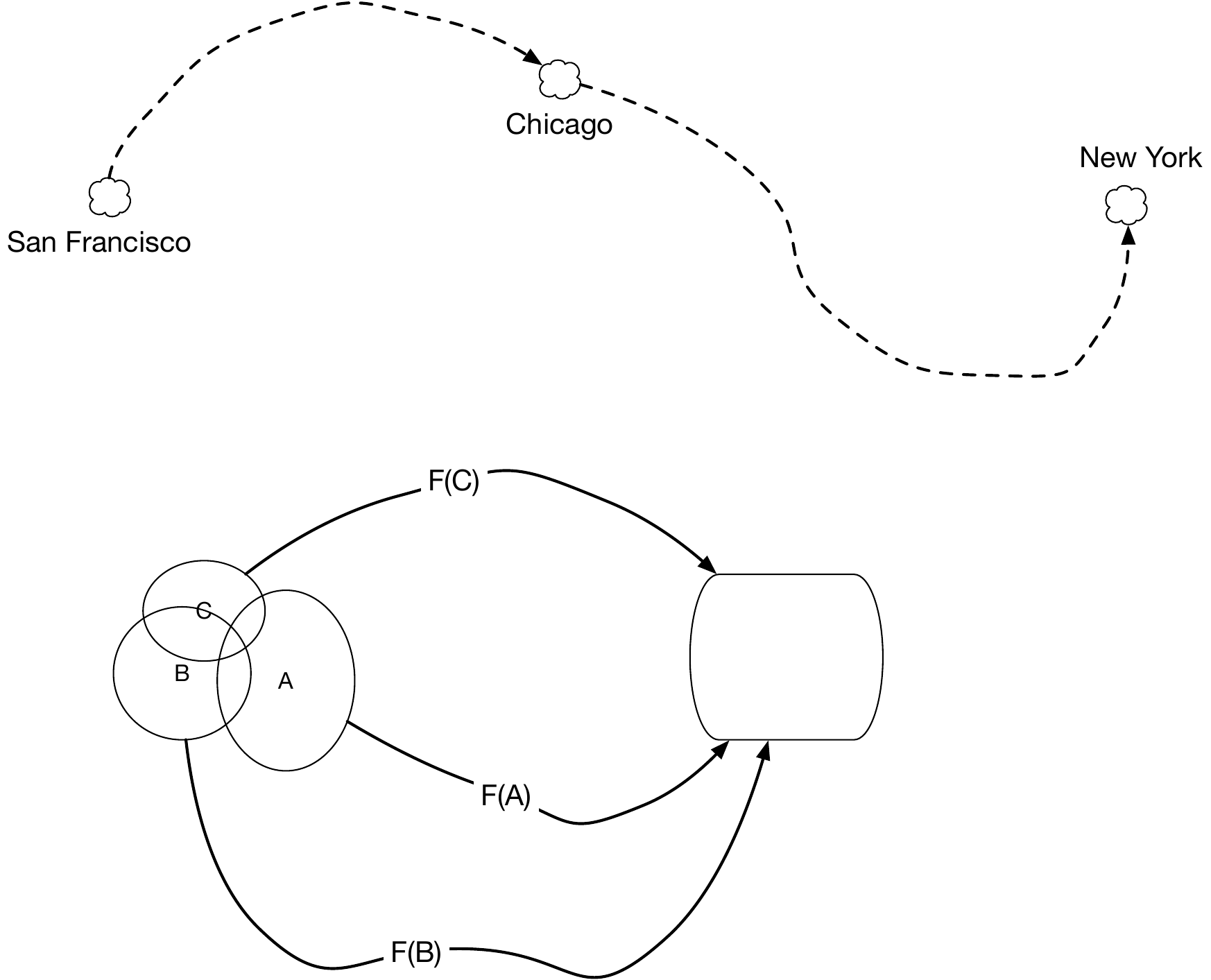}
\end{minipage}
\end{figure}

Figure~\ref{sheaves} gives two concrete examples of sheaves. In a minimum cost transportation problem, say using optimal transport \cite{ot} or reinforcement learning \cite{DBLP:books/lib/SuttonB98}, any optimal solution has the property that any restriction of the solution must also be optimal. In RL, this sheaf principle is codified by the Bellman equation, and leads to the fundamental principle of dynamic programming \cite{DBLP:books/lib/Bertsekas05}. Consider routing candy bars from San Francisco to New York city. If the cheapest way to route candy bars is through Chicago, then the restriction of the overall route to the (sub) route from Chicago to New York City must also be optimal, otherwise it is possible to find a shortest overall route by switching to a lower cost route. Similarly, in function approximation with real-valued functions $F: {\cal C} \rightarrow \mathbb{R}$, where ${\cal C}$ is the category of topological spaces, the (sub)functions $F(A), F(B)$ and $F(C)$ restricted to the open sets $A$, $B$ and $C$ must agree on the values they map the elements in the intersections $A \cap B$, $A \cap C$, $A \cap B \cap C$ and so on. Similarly, in causal inference, any probability distribution that is defined over a causal generative AI model must satisfy the sheaf condition in that any restriction of the causal model to a submodel must be consistent, so that two causal submodels that overlap in their domains must agree on the common elements. 

Sheaves can be defined over arbitrary categories, and we introduce the main idea by focusing on the category of sheaves over {\bf Sets}. 

\begin{definition}\cite{maclane:sheaves}
    A {\bf sheaf} of sets $F$ on a topological space $X$ is a functor $F: {\cal O}^{op} \rightarrow {\bf Sets} $ such that each open covering $U = \bigcup_i U_i, i \in I$ of an open set $O$ of $X$ yields an equalizer diagram
\[
\xymatrix{
FU\ar@{-->}[r]^e&} 
\begin{tikzcd}
 \prod_i FU_i \ar[r,shift left=.75ex,"p"]
  \ar[r,shift right=.75ex,swap,"q"]
&
\prod_{i,} F(U_i \cap U_j)
\end{tikzcd}
\]

The above definition succinctly captures what Figure~\ref{sheaves} shows for the example of approximating functions: the value of each subfunction must be consistent over the shared elements in the intersection of each open set. 

\begin{definition}
    The category $\mbox{Sh}(X)$ of sheaves over a space $X$ is a full subcategory of the functor category ${\bf Sets}^{{\cal O}(X)^{op}}$.
\end{definition}

\subsection*{Grothendieck Topologies}

We can generalize the notion of sheaves to arbitrary categories using the Yoneda embedding $\yo(x) = {\cal C}(-, x)$. We explain this generalization in the context of a more abstract topology on categories called the {\em Grothendieck topology} defined by {\em sieves}. A sieve can be viewed as a {\em subobject} $S \subseteq \yo(x)$ in the presheaf ${\bf Sets}^{{\cal C}^{op}}$, but we can define it more elegantly as a family of morphisms in ${\cal C}$, all with codomain $x$ such that

\[ f \in S \Longrightarrow f \circ g \in S \]

Figure~\ref{sieves} illustrates the idea of sieves. A simple way to think of a sieve is as a {\em right ideal}. We can define that more formally as follows: 

\begin{definition}
    If $S$ is a sieve on $x$, and $h: D \rightarrow x$ is any arrow in category ${\cal C}$, then 

    \[ h^* = \{g \ | \ \mbox{cod}(g) = D, hg \in S \}\]
\end{definition}

\begin{figure}[t] 
\centering
\caption{Sieves are subobjects of of $\yo(x)$ Yoneda embeddings of a category ${\cal C}$, which generalizes the concept of sheaves over sets in Figure~\ref{sheaves}. \label{sieves}}
\vskip 0.1in
\begin{minipage}{0.7\textwidth}
\vskip 0.1in
\includegraphics[scale=0.35]{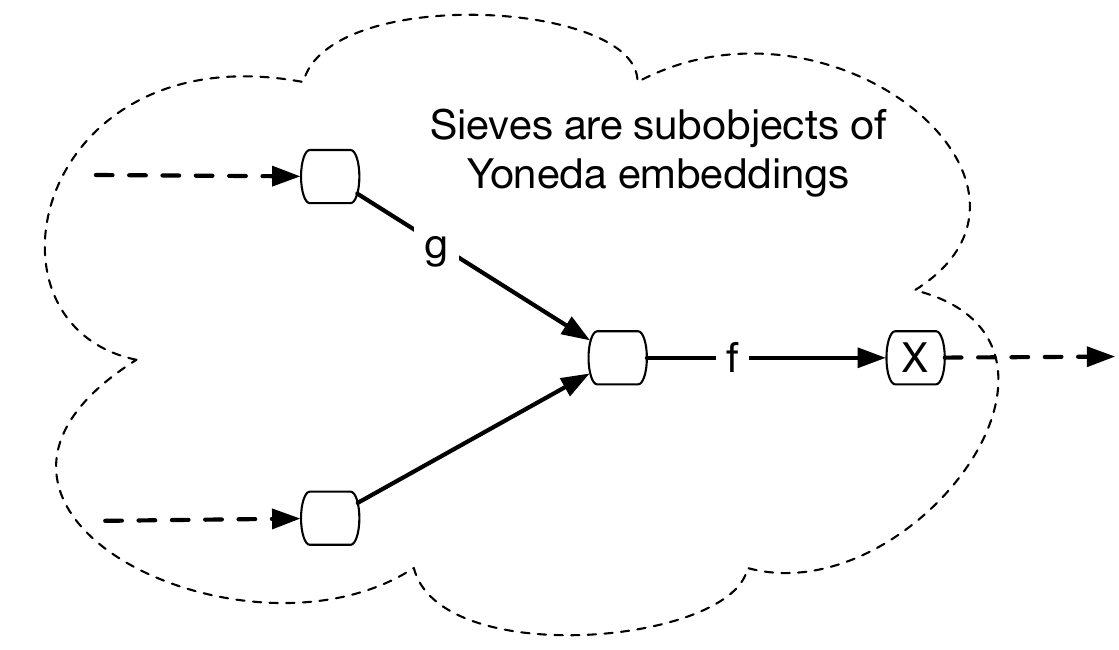}
\end{minipage}
\end{figure}

\begin{definition}\cite{maclane:sheaves}
A {\bf Grothendieck topology} on a category ${\cal C}$ is a function $J$ which assigns to each object $x$  of ${\cal C}$ a collection $J(x)$ of sieves on $x$ such that
\begin{enumerate}
    \item the maximum sieve $t_x = \{ f | \mbox{cod}(f) = x \}$ is in $J(x) $. 
    \item If $S \in J(x)$  then $h^*(S) \in J(D)$ for any arrow $h: D \rightarrow x$. 
    \item If $S \in J(x)$ and $R$ is any sieve on $x$, such that $h^*(R) \in J(D)$ for all $h: D \rightarrow x$, then $R \in J(C)$. 
\end{enumerate}
\end{definition}

We can now define categories with a given Grothendieck topology as {\em sites}. 

\begin{definition}
    A {\bf site} is defined as a pair $({\cal C}, J)$ consisting of a small category ${\cal C}$ and a Grothendieck topology $J$ on ${\cal C}$. 
\end{definition}

An intuitive way to interpret a site is as a generalization of the notion of a topology on a space $X$, which is defined as a set $X$ together with a collection of open sets ${\cal O}(X)$. The sieves on a category play the role of ``open sets". 

\end{definition}

\subsection*{Exponential Objects and Cartesian Closed Categories}

To define a topos, we need to understand the category of {\bf Sets} a bit more. Clearly, the single point set $\{ \bullet \}$ is a terminal object for {\bf Sets}, and the binary product of two sets $A \times B$ can always be defined. Furthermore, given two sets $A$ and $B$, we can define $B^A$ as the exponential object representing the set of all functions $f: A \rightarrow B$. We can define exponential objects in any category more generally as follows. 

\begin{definition}
    Given any category ${\cal C}$ with products, for a fixed object $x$ in ${\cal C}$, we can define the functor 

    \[ x \times - :  \rightarrow {\cal C}\]

    If this functor has a right adjoint, which can be denoted as 

    \[ (-)^x: {\cal C} \rightarrow {\cal C} \]

    then we say $x$ is an {\bf exponentiable} object of ${\cal C}$. 
\end{definition}

\begin{definition}
    A category ${\cal C}$ is {\bf Cartesian closed} if it has finite products (which is equivalent to saying it has a terminal object and binary products) and if all objects in ${\cal C}$ are {\em exponentiable}. 
\end{definition}

A result that is of foundational importance to this paper is that the category defined by Yoneda embeddings is Cartesian closed. 

\begin{theorem}\cite{maclane:sheaves}
    For any small category ${\cal C}$, the functor category ${\bf Sets}^{{\cal C}^{op}}$ is Cartesian closed
\end{theorem}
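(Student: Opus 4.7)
The plan is to verify the three defining conditions of a Cartesian closed category for $\widehat{\cal C} := \mathbf{Sets}^{{\cal C}^{op}}$: existence of a terminal object, existence of binary products, and existence of exponential objects. The first two are routine pointwise constructions; the genuinely interesting step is the construction of exponentials, which will be forced on us by the Yoneda Lemma.

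First I would exhibit the terminal object as the constant presheaf $\mathbf{1}$ sending each $c \in {\cal C}$ to a chosen singleton $\{\ast\}$ and each morphism to its identity. Uniqueness of maps $F \to \mathbf{1}$ for any presheaf $F$ is immediate from the uniqueness of maps into $\{\ast\}$ in $\mathbf{Sets}$, together with the observation that the resulting components assemble into a natural transformation automatically. Next, for presheaves $F, G : {\cal C}^{op} \to \mathbf{Sets}$, I would define the binary product pointwise by $(F \times G)(c) = F(c) \times G(c)$, with transition maps $(F \times G)(f) = F(f) \times G(f)$, and projection natural transformations induced by the projections in $\mathbf{Sets}$. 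The universal property in $\widehat{\cal C}$ follows by checking it componentwise and using that $\mathbf{Sets}$ has products; naturality of the mediating map is automatic from naturality on each side.

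The main step is exponentials. For presheaves $F$ and $G$, I would define
\[
G^F(c) := \mathrm{Nat}(\yo(c) \times F,\ G),
\]
made functorial in $c$ by the assignment $f : c \to c'$ sending $\alpha : \yo(c') \times F \Rightarrow G$ to $\alpha \circ (\yo(f) \times \mathrm{id}_F)$, where $\yo(f) : \yo(c) \Rightarrow \yo(c')$ is the Yoneda embedding of $f$. I would then construct the desired natural bijection
\[
\Phi_H : \mathrm{Nat}(H \times F,\ G) \;\xrightarrow{\ \cong\ }\; \mathrm{Nat}(H,\ G^F)
\]
by first verifying it when $H = \yo(c)$ is representable: unwinding the right-hand side using the Yoneda Lemma gives $\mathrm{Nat}(\yo(c), G^F) \cong G^F(c) = \mathrm{Nat}(\yo(c) \times F, G)$, which is precisely the left-hand side, so $\Phi_{\yo(c)}$ is the identity up to these canonical identifications. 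To promote this to arbitrary $H$, I would invoke theorem~\ref{presheaf-theorem}, which expresses every presheaf canonically as a colimit $H \cong \mathrm{colim}_{\yo\downarrow H}\, \yo(c_i)$ of representables. Both functors $\mathrm{Nat}(-\times F, G)$ and $\mathrm{Nat}(-, G^F)$ are contravariant and send colimits in their first argument to limits in $\mathbf{Sets}$, so the isomorphism on representables extends uniquely to a natural isomorphism on all of $\widehat{\cal C}$.

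The hard part, and where I would spend the most care, is checking the coherence of the bijection $\Phi_H$: specifically, that the componentwise identifications given by Yoneda glue together naturally as $H$ varies, and that the extension from representables to arbitrary presheaves is well-defined independently of the chosen colimit decomposition. The cleanest way to handle this is to give an explicit description of the adjunction unit and counit. The counit $\mathrm{ev} : G^F \times F \Rightarrow G$ at stage $c$ takes a pair $(\alpha, x) \in \mathrm{Nat}(\yo(c) \times F, G) \times F(c)$ to $\alpha_c(\mathrm{id}_c, x) \in G(c)$; the unit $\eta : H \Rightarrow (H \times F)^F$ sends $h \in H(c)$ to the natural transformation whose $c'$-component takes $(f : c' \to c,\ y \in F(c'))$ to $(H(f)(h),\ y)$. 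Once these are written down, verifying the triangle identities is a direct but careful diagram chase, and because $\widehat{\cal C}$ has a set-theoretic pointwise structure each verification reduces to an equation in $\mathbf{Sets}$. Concluding, every presheaf $F$ is exponentiable and $\widehat{\cal C}$ is Cartesian closed. $\qed$
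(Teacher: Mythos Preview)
Your proof is correct and follows the standard argument. The paper itself does not supply a proof of this theorem: immediately after stating it, the paper writes ``For a detailed proof, the reader is referred to \cite{maclane:sheaves}.'' Your construction---pointwise terminal object and products, exponential $G^F(c) = \mathrm{Nat}(\yo(c)\times F, G)$ forced by Yoneda, and extension from representables to arbitrary presheaves via the density theorem (which is exactly the paper's Theorem~\ref{presheaf-theorem})---is precisely the proof one finds in Mac Lane--Moerdijk, so you have filled in what the paper chose to omit.
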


For a detailed proof, the reader is referred to \cite{maclane:sheaves}. A further result of significance is the {\em density theorem}, which can be seen as the generalization of the simple result that any set $S$ can be defined as the union of single point sets $\bigcup_{x \in S} \{ x \}$. 

\begin{theorem}\cite{maclane:sheaves}
    In a functor category ${\bf Sets}^{{\cal C}^{op}}$, any object $x$ is the colimit of a diagram of representable objects in a canonical way. 
\end{theorem}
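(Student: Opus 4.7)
The plan is to prove the density theorem by exhibiting the category of elements $\int P$ of a presheaf $P \in {\bf Sets}^{{\cal C}^{op}}$ as the indexing category for a canonical diagram of representables whose colimit is $P$. The Yoneda Lemma will supply both the cocone structure and the universal property. This is essentially a structural argument where the main content is the Yoneda Lemma applied twice (once to build the cocone, once to verify uniqueness).

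First I would set up the indexing category. Given $P \colon {\cal C}^{op} \to {\bf Sets}$, form the category of elements $\int P$ whose objects are pairs $(c, x)$ with $c \in {\cal C}$ and $x \in P(c)$, and whose morphisms $(c, x) \to (c', x')$ are arrows $f \colon c \to c'$ in ${\cal C}$ with $P(f)(x') = x$. Let $\pi_P \colon \int P \to {\cal C}$ be the forgetful projection $(c,x) \mapsto c$. The diagram of representables is then the composite $D := \yo \circ \pi_P \colon \int P \to {\bf Sets}^{{\cal C}^{op}}$, sending $(c,x) \mapsto \yo(c) = {\cal C}(-, c)$. This is what is meant by ``canonical'': $\int P$ and $D$ are manufactured directly from $P$ with no choices.

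Next I would build the cocone $\lambda \colon D \Rightarrow P$. By the Yoneda Lemma, each element $x \in P(c)$ corresponds bijectively to a natural transformation $\lambda_{(c,x)} \colon \yo(c) \to P$, whose component at $c'$ sends $g \colon c' \to c$ to $P(g)(x)$. To check this assembles into a cocone, for any morphism $f \colon (c,x) \to (c', x')$ in $\int P$, I must verify $\lambda_{(c',x')} \circ \yo(f) = \lambda_{(c,x)}$; evaluating at $g \colon c'' \to c$ reduces to $P(fg)(x') = P(g)(P(f)(x')) = P(g)(x)$, which holds by the defining condition on morphisms in $\int P$.

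Finally I would establish universality. Given any other cocone $\tau \colon D \Rightarrow Q$, define $\phi \colon P \to Q$ by $\phi_c(x) := \tau_{(c,x)}(\mathrm{id}_c) \in Q(c)$ (again using the Yoneda correspondence). Naturality of $\phi$ follows from the cocone condition applied to morphisms $(c',P(f)(x)) \to (c,x)$ over $f \colon c' \to c$. For uniqueness, any $\phi' \colon P \to Q$ factoring the cocone must satisfy $\phi'_c(x) = (\phi' \circ \lambda_{(c,x)})_c(\mathrm{id}_c) = \tau_{(c,x)}(\mathrm{id}_c)$, forcing $\phi' = \phi$. And $\phi$ really does factor $\tau$ through $\lambda$: evaluating $\phi \circ \lambda_{(c,x)}$ at $g \colon c' \to c$ gives $\phi_{c'}(P(g)(x)) = \tau_{(c', P(g)(x))}(\mathrm{id}_{c'})$, which by the cocone condition applied to the morphism $g \colon (c', P(g)(x)) \to (c,x)$ in $\int P$ equals $\tau_{(c,x)}(g) = \tau_{(c,x)} \circ \yo(g)(\mathrm{id}_{c'})$.

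The main obstacle is less conceptual than notational: every step is a disciplined application of the Yoneda Lemma, but one must be careful to distinguish between an element $x \in P(c)$ and its associated natural transformation $\yo(c) \to P$, and to track how the cocone condition in $\int P$ mirrors the functoriality of $P$. Once the bookkeeping is organized by the category of elements, the ``canonical way'' in the theorem's statement becomes manifest: no choices were made beyond $P$ itself.
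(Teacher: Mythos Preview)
Your proof is correct and follows the standard route via the category of elements and the Yoneda Lemma. The paper does not actually supply its own proof of this statement; it is stated with a citation to MacLane--Moerdijk, and an earlier occurrence (Theorem~\ref{presheaf-theorem}, ``Universality of Diagrams'') is likewise stated without proof. That said, the surrounding discussion in the paper sets up precisely the ingredients you use---the Grothendieck category of elements $\int P$ with its projection $\pi_P$, comma categories, and the Yoneda embedding---so your argument is exactly the one the paper's exposition is pointing toward.
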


Recall that an object is representable if it is isomorphic to a Yoneda embedding $\yo(x)$. This result has numerous applications to AI and ML, among them to causal inference \cite{DBLP:journals/entropy/Mahadevan23} and universal decision models \cite{sm:udm}. 

\subsection*{Subobject Classifiers} 

A topos builds on the property of subobject classifiers in {\bf Sets}. Given any subset $S \subset X$, we can define $S$ as the monic arrow $S \hookrightarrow X$ defined by the inclusion of $S$ in $X$, or as the characteristic function $\phi_S$ that is equal to $1$ for all elements $x \in X$ that belong to $S$, and takes the value $0$ otherwise. We can define the set ${\bf 2} = \{0, 1 \}$ and treat {\bf true} as the inclusion $\{1 \}$ in ${\bf 2}$. The characteristic function $\phi_S$ can then be defined as the pullback of {\bf true} along $\phi_S$. 

\[\begin{tikzcd}
	S &&& {{\bf 1}} \\
	\\
	X &&& {{\bf 2}}
	\arrow["m", tail, from=1-1, to=3-1]
	\arrow[from=1-1, to=1-4]
	\arrow["{{\bf true}}"{description}, tail, from=1-4, to=3-4]
	\arrow["{\phi_S}"{description}, dashed, from=3-1, to=3-4]
\end{tikzcd}\]

We can now define subobject classifiers in a category ${\cal C}$ as follows. 

\begin{definition}
    In a category ${\cal C}$ with finite limits, a {\bf subobject classifier} is a {\em monic} arrow ${\bf true}: {\bf 1} \rightarrow \Omega$, such that to every other monic arrow $S \hookrightarrow X$ in ${\cal C}$, there is a unique arrow $\phi$ that forms the following pullback square: 

\[\begin{tikzcd}
	S &&& {{\bf 1}} \\
	\\
	X &&& \Omega
	\arrow["m", tail, from=1-1, to=3-1]
	\arrow[from=1-1, to=1-4]
	\arrow["{{\bf true}}"{description}, tail, from=1-4, to=3-4]
	\arrow["{\phi}"{description}, dashed, from=3-1, to=3-4]
\end{tikzcd}\]
    
\end{definition}

This definition can be rephrased as saying that the subobject functor is representable. In other words, a subobject of an object $x$ in a category ${\cal C}$ is an equivalence class of monic arrows $m: S \hookrightarrow  x$. 

\cite{maclane:sheaves} provide many examples of subobject classifiers.  \cite{vigna2003guided} gives a detailed description of the topos of graphs.  

\subsection*{Heyting Algebras} 

A truly remarkable finding is that the logic of topoi is not classical Boolean logic, but intuitionistic logic defined by {\em Heyting algebras}. 

\begin{definition}
    A {\bf Heyting algebra} is a poset with all finite products and coproducts, which is Cartesian closed. That is, a Heyting algebra is a lattice with ${\bf 0}$ and ${\bf 1}$ which has to each pair of elements $x$ and $y$ an exponential $y^x$. The exponential is written $x \Rightarrow y$, and defined as the adjunction 

    \[ z \leq (x \Rightarrow y) \ \ \mbox{if and only if} \ \ z \wedge x \leq y\]
\end{definition}

Alternatively, $x \Rightarrow y$ is a least upper bound for all those elements $z$ with $z \wedge x \leq y$. Therefore, for the particular case of $y$, we get that $y \leq (x \Rightarrow y)$. In the figure below, the arrows show the partial ordering relationship. As a concrete example, for a topological space $X$ the set of open setx ${\cal O}(X)$ is a Heyting algebra. The binary intersections and unions of open sets yield open sets. The empty set $\emptyset$ represents ${\bf 0}$ and the complete set $X$ represents ${\bf 1}$. Given any two open sets $U$ and $V$, the exponential object $U \Rightarrow W$ is defined as the union $\bigcup_i W_i$ of all open sets $W_i$ for which $W \cap U \subset V$. 

\[\begin{tikzcd}
	&&& {x \Rightarrow y} \\
	x && y \\
	& {x \wedge y}
	\arrow[from=3-2, to=2-3]
	\arrow[from=2-3, to=1-4]
	\arrow[from=3-2, to=2-1]
\end{tikzcd}\]

Note that in a Boolean algebra, we define implication as the relationship 

\[ (x \Rightarrow y) \equiv \neg x \vee y \]

This property, which is sometimes referred to as the ``law of the excluded middle" (because if $x = y$, then this translates to $\neg x \vee x = {\bf true}$), does not hold in a Heyting algebra. For example, on a real line $\mathbb{R}$, if we define the open sets by the open intervals $(a, b), a, b \in \mathbb{R}$, the complement of an open set need not be open. 

We can now state what is a truly remarkable result about the subobjects of a (pre)sheaf. 

\begin{theorem}\cite{maclane:sheaves}
For any functor category $\hat{C} = {\bf Sets}^{{\cal C}^{op}}$ of a small category ${\cal C}$, the partially ordered set $\mbox{Sub}_{\hat{C}}(x)$ of subobjects of $x$, for any object $x$ of $\hat{C}$ is a Heyting algebra. 
\end{theorem}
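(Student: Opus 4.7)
The plan is to identify subobjects of a presheaf $x \in \hat{C}$ with subpresheaves, show the lattice operations are computed pointwise, and then construct the Heyting implication via a ``stable under restriction'' modification of the pointwise implication.

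First, I would reduce the problem to a concrete description of $\mbox{Sub}_{\hat{C}}(x)$. Since monics in $\hat{C} = {\bf Sets}^{{\cal C}^{op}}$ are computed pointwise (limits and colimits in presheaf categories are pointwise), a subobject of $x$ is represented by a subpresheaf $S \subseteq x$, namely a choice $S(c) \subseteq x(c)$ for each object $c$ of ${\cal C}$ such that for every $f: d \to c$ in ${\cal C}$, the restriction $x(f): x(c) \to x(d)$ sends $S(c)$ into $S(d)$. Two monics represent the same subobject iff they determine the same subpresheaf, so $\mbox{Sub}_{\hat{C}}(x)$ is (up to isomorphism) the poset of subpresheaves ordered by pointwise inclusion.

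Next, I would verify the lattice structure. Define $(S \wedge T)(c) \colonequals S(c) \cap T(c)$ and $(S \vee T)(c) \colonequals S(c) \cup T(c)$; both are stable under the restriction maps of $x$, so both define subpresheaves, and they clearly serve as meets and joins in $\mbox{Sub}_{\hat{C}}(x)$. The initial subpresheaf $\mathbf{0}$ with $\mathbf{0}(c) = \emptyset$ and the terminal subpresheaf $x$ itself provide $\mathbf{0}$ and $\mathbf{1}$. Thus $\mbox{Sub}_{\hat{C}}(x)$ is a bounded lattice with all finite products and coproducts.

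The main obstacle is constructing the exponential $S \Rightarrow T$ inside $\mbox{Sub}_{\hat{C}}(x)$, because the naive pointwise rule $\{y \in x(c) : y \in S(c) \Rightarrow y \in T(c)\}$ need not be stable under restriction. The fix is to build in the stability by quantifying over all morphisms into $c$:
\[
(S \Rightarrow T)(c) \colonequals \bigl\{\, y \in x(c) \;\big|\; \forall f: d \to c,\ x(f)(y) \in S(d) \implies x(f)(y) \in T(d)\,\bigr\}.
\]
One then checks, using that the condition is preserved under precomposition with further morphisms $g: e \to d$, that $S \Rightarrow T$ is indeed a subpresheaf of $x$. This is the only genuinely non-obvious step.

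Finally, I would verify the Heyting adjunction $R \wedge S \leq T$ iff $R \leq (S \Rightarrow T)$ in $\mbox{Sub}_{\hat{C}}(x)$. For the forward direction, assume $R \wedge S \leq T$ pointwise and fix $y \in R(c)$; for any $f: d \to c$ with $x(f)(y) \in S(d)$, stability of $R$ under restriction gives $x(f)(y) \in R(d) \cap S(d) \subseteq T(d)$, so $y \in (S \Rightarrow T)(c)$. Conversely, if $R \leq (S \Rightarrow T)$ and $y \in R(c) \cap S(c)$, take $f = \mathrm{id}_c$ in the defining condition to obtain $y \in T(c)$. Combined with the previously established finite products and coproducts, this shows $\mbox{Sub}_{\hat{C}}(x)$ is Cartesian closed as a poset, hence a Heyting algebra, completing the proof. \qed
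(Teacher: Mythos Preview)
Your proposal is correct and is essentially the standard argument from the cited reference \cite{maclane:sheaves}. The paper itself does not supply a proof of this theorem; it simply states the result with the citation and then discusses its implications for AI and ML, so there is nothing in the paper to compare against beyond noting that your argument matches the one in Mac Lane--Moerdijk that the paper is invoking.
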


This result has deep implications for a lot of applications in AI and ML that are based modeling presheaves, including causal inference and decision making. It implies that the proper logic to employ in these settings is intuitionistic logic, not classical logic as is often used in AI \cite{pearl-book,fagin,halpern:ac}. 

Finally, we can now define the category of topoi. 

\begin{definition}
    A {\bf topos} is a category ${\cal E}$ with 
    \begin{enumerate}
        \item A pullback for every diagram $X \rightarrow B \leftarrow Y$. 

        \item A terminal object ${\bf 1}$. 

        \item An object $\Omega$ and a monic arrow ${\bf true}: 1 \rightarrow \Omega$ such that any monic $m: S \hookrightarrow B$, there is a unique arrow $\phi: B \rightarrow \Omega$ in ${\cal E}$ for which the following square is a pullback: 
        
        \[\begin{tikzcd}
	S &&& {{\bf 1}} \\
	\\
	X &&& \Omega
	\arrow["m", tail, from=1-1, to=3-1]
	\arrow[from=1-1, to=1-4]
	\arrow["{{\bf true}}"{description}, tail, from=1-4, to=3-4]
	\arrow["{\phi}"{description}, dashed, from=3-1, to=3-4]
\end{tikzcd}\]

\item To each object $x$ an object $P x$ and an arrow $\epsilon_x: x \times P x \rightarrow \Omega$ such that for every arrow $f: x \times y \rightarrow \Omega$, there is a unique arrow $g: y \rightarrow P x$ for which the following diagrams commute: 

\[\begin{tikzcd}
	y && {x \times y} &&& \Omega \\
	\\
	Px && {x \times P x} &&& \Omega
	\arrow["g", dashed, from=1-1, to=3-1]
	\arrow["f", from=1-3, to=1-6]
	\arrow["{\epsilon_x}", from=3-3, to=3-6]
	\arrow["{1 \times g}"{description}, dashed, from=1-3, to=3-3]
\end{tikzcd}\]

    \end{enumerate}
\end{definition}

\subsection{Topological Embedding of Simplicial Sets}

Simplicial sets can be embedded in a topological space using coends \cite{maclane:71}, which is the basis for a popular machine learning method for reducing the dimensionality of data called UMAP (Uniform Manifold Approximation and Projection) \cite{umap}. 

\begin{definition}
 The {\bf geometric realization} $|X|$ of a simplicial set $X$ is defined as the topological space 

 \[ |X| = \bigsqcup_{n \geq 0} X_n \times \Delta^n / ~\sim \]

 where the $n$-simplex $X_n$ is assumed to have a {\em discrete} topology (i.e., all subsets of $X_n$ are open sets), and $\Delta^n$ denotes the {\em topological} $n$-simplex 

 \[ \Delta^n = \{(p_0, \ldots, p_n) \in \mathbb{R}^{n+1} \ | \ 0 \leq p_i \leq 1, \sum_i p_i = 1 \]

 The spaces $\Delta^n, n \geq 0$ can be viewed as {\em cosimplicial} topological spaces with the following degeneracy and face maps: 

 \[ \delta_i(t_0, \ldots, t_n) = (t_0, \ldots, t_{i-1}, 0, t_i, \ldots, t_n)  \ \mbox{for} \ 0 \leq i \leq n\]

 \[ \sigma_j(t_0, \ldots, t_n) = (t_0, \ldots, t_{j} + t_{j+1}, \ldots, t_n) \ \mbox{for} \ 0 \leq i \leq n\]

 Note that $\delta_i: \mathbb{R}^n \rightarrow \mathbb{R}^{n+1}$, whereas $\sigma_j: \mathbb{R}^n \rightarrow \mathbb{R}^{n-1}$. 

 The equivalence relation $\sim$ above that defines the quotient space is  given as: 

 \[ (d_i(x), (t_0, \ldots, t_n)) \sim (x, \delta_i(t_0, \ldots, t_n) )\]

 \[ (s_j(x), (t_0, \ldots, t_n)) \sim (x, \sigma_j (t_0, \ldots, t_n)) \]
\end{definition}

\subsection*{Topological Embeddings as Coends}

We now bring in the perspective that topological embeddings can be interpreted as coends as well. Consider the functor 

\[ F: \Delta^o \times \Delta \rightarrow \mbox{Top} \] 

where 

\[ F([n], [m]) = X_n \times \Delta^m \]

where $F$ acts {\em contravariantly} as a functor from $\Delta$ to ${\bf Sets}$ mapping $[n] \mapsto X_n$,  and {\em covariantly} mapping $[m] \mapsto \Delta^m$ as a functor from $\Delta$ to the category $\mbox{Top}$ of topological spaces. 

\subsection{The Geometric Transformer Model}

In this section, we define the Geometric Transformer Model (GTM), which arises as a coend object defined by the topological embedding of a simplicial set defined over $n$-length sequences of tokens of dimension $d$. Given the restrictions on space, we can only give a very brief explanation, and a more detailed analysis is the topic of a future paper. 

Given a category of generative AI models, such as Transformers defined as permutation equivariant functions over $\mathbb{R}^{d \times n}$, it is possible to construct simplicial sets by constructing the {\em nerve} of the category. So, for example, the nerve of the category ${\cal C}_T$ of Transformers is a simplicial set $\mbox{Transformer}_\bullet$, comprised of a sequence of composable morphisms of length $n \geq 0$, each defining a Transformer block. Given this simplicial set, we can now construct a topological realization of it as a coend object 

\[ \int^n (\mbox{Transformer}_\bullet n) \cdot \Delta n \] 

where $\mbox{Transformer}_\bullet: \Delta^{op} \rightarrow {\cal C}_T$ is a contravariant functor from the simplicial category $\Delta$ into the category of Transformers, and $\Delta: |\Delta | \rightarrow {\bf Top}$ is a functor from the topological $n$-simplex realization of the simplicial category $\Delta$ into topological spaces ${\bf Top}$. As \cite{maclane:71} explains it picturesquely, the ``coend formula describes the geometric realization in one gulp". The formula says essentially to take the disjoint union of affine $n$-simplices, one for each $t \in \mbox{Transformers}_\bullet n$, and glue them together using the face and degeneracy operations defined as arrows of the simplicial category $\Delta$. In more concrete terms, this coend formula is essentially what the UMAP method implements for point cloud data in Euclidean space. Here, we are generalizing this application to construct topological realization of generative AI models, such as Transformers.

\subsection{The End of GAIA: Monads and Categorical Probability} 

We now turn to discuss the ends of generative AI models, where we first show that categorically speaking, probabilities are defined as end objects \cite{Avery_2016}. This notion requires  defining particular tyoes of functors called monads more formally, and relate them to adjoint functors. Categorically speaking, probabilities are essentially  monads \cite{Avery_2016}. Like the case with coalgebras, which we discussed extensively in previous Sections, monads also are defined by an endofunctor on a category, but one that has some special properties. These additional properties make monads possess algebraic structure, which leads to many interesting properties. Monads provide a categorical foundation for probability, based on the property that the set of all distributions on a measurable space is itself a measurable space.  The well-known {\em Giry} monad  been also shown to arise as the {\em codensity monad}  of a forgetful functor from the category of convex sets with affine maps to the category of measurable spaces \cite{Avery_2016}. Our goal in this paper is to apply monads to shed light into causal inference.   We first review the basic definitions of monads, and then discuss monad algebras, which provide ways of characterizing categories. 

Consider the pair of adjoint free and forgetful functors between graphs and categories. Here, the domain category is {\bf Cat}, the category of all categories whose objects are categories and whose morphisms are functors. The co-domain category is the category {\bf Graph} of all graphs, whose objects are directed graphs, and whose morphisms are graph homomorphisms. Here, a monad $T = U \circ F$ is induced by composing the ``free" functor $F$ that maps a graph into its associated ``free" category, and the ``forgetful" functor $U$ that maps a category into its associated graph. The monad $T$  in effect takes a directed graph $G$ and computes its transitive closure $G_{tc}$. More precisely,  for every (directed) graph $G$, there is a universal arrow from $G$ to the ``forgetful" functor $U$ mapping the category {\bf Cat} of all categories to {\bf Graph}, the category of all (directed) graphs, where for any category $C$, its associated graph is defined by $U(C)$. 

To understand this functor, simply consider a directed graph $U(C)$ as a category $C$ forgetting the rule for composition. That is, from the category $C$, which associates to each pair of composable arrows $f$ and $g$, the composed arrow $g \circ f$, we derive the underlying graph $U(G)$ simply by forgetting which edges correspond to elementary functions, such as $f$ or $g$, and which are composites. The universal arrow from a graph $G$ to the forgetful functor $U$  is defined as a pair $\langle G, u: G \rightarrow U(C) \rangle$, where $u$ is a a graph homomorphism. This arrow possesses the following {\em universal property}: for every other pair $\langle D, v: G \rightarrow H \rangle$, where $D$ is a category, and $v$ is an arbitrary graph homomorphism, there is a functor  $f': C \rightarrow D$, which is an arrow in the category {\bf Cat} of all categories, such that {\em every} graph homomorphism $\phi: G \rightarrow H$ uniquely factors through the universal graph homomorphism $u: G \rightarrow U(C)$  as the solution to the equation $\phi = U(f') \circ u$, where $U(f'): U(C) \rightarrow H$ (that is, $H = U(D)$).  Namely, the dotted arrow defines a graph homomorphism $U(f')$ that makes the triangle diagram ``commute", and the associated ``extension" problem of finding this new graph homomorphism $U(f')$ is solved by ``lifting" the associated category arrow $f': C \rightarrow D$. In causal inference using graph-based models, the transitive closure graph is quite important in a number of situations. It can be the initial target of a causal discovery algorithm that uses conditional independence oracles. It is also common in graph-based causal inference \cite{pearl-book} to model causal effects through a directed acyclic graph (DAG) $G$, which specifies its algebraic structure, and through a set of probability distributions on $G$ that specifies its semantics $P(G)$. Often, reasoning about causality in a DAG requires examining paths that lead from some vertex $x$, representing a causal variable, to some other vertex $y$. The process of constructing the transitive closure of a DAG provides a simple example of a causal monad. 

\begin{definition}
    A {\bf monad} on a category $C$ consists of

    \begin{itemize} 
    \item An endofunctor $T: C \rightarrow C$
    \item A {\bf unit} natural transformation $\eta: 1_C \Rightarrow T$ 
    \item A {\bf multiplication} natural transformation $\mu: T^2 \rightarrow T$
    \end{itemize} 
    such that the following commutative diagram in the category $C^C$ commutes (notice the arrows in this diagram are natural transformations as each object in the diagram is a functor). 
\end{definition}

\begin{center}

\[\begin{tikzcd}
	{T^3} &&&& {T^2} \\
	\\
	\\
	{T^2} &&&& T
	\arrow["{T \mu}", from=1-1, to=1-5]
	\arrow["{\mu T}"', from=1-1, to=4-1]
	\arrow["\mu"', from=4-1, to=4-5]
	\arrow["\mu", from=1-5, to=4-5]
\end{tikzcd}\]

\[\begin{tikzcd}
	T &&& {T^2} &&& T \\
	\\
	\\
	&&& T
	\arrow["{T \eta}"', from=1-7, to=1-4]
	\arrow["\mu", from=1-4, to=4-4]
	\arrow["{\eta T}", from=1-1, to=1-4]
	\arrow["{1_T}"', from=1-1, to=4-4]
	\arrow["{1_T}"', from=1-7, to=4-4]
\end{tikzcd}\]

\end{center}

It is useful to think of monads as the ``shadow" cast by an adjunction on the category corresponding to the co-domain of the right adjoint $G$. Consider the following pair of adjoint functors $F \vdash G$. 

\[
        \begin{tikzcd}
            \mathcal{C}\arrow[r, shift left=.75ex, "F"{name=F}] & \mathcal{D}\arrow[l, shift left=.75ex, "G"{name=G}] 
            \arrow[phantom, from=F, to=G, "\dashv" rotate=270].      
        \end{tikzcd}  \ \ \ \ \eta: 1_C \Rightarrow UF, \ \ \ \epsilon: FU \Rightarrow 1_D
    \]

In the language of ML, if we treat category $C$ as representing ``labeled training data" where we have full information, and category $D$ as representing a new domain for which we have no labels, what can we conclude about category $D$ from the information we have from the adjunction? The endofunctor $UF$ on $C$ is of course available to us, as is the natural transformation $\eta: 1_C \Rightarrow UF$. The map $\epsilon_A: FG A \rightarrow A$ for any object $A \in D$ is an endofunctor on $D$, about which we have no information. However, the augmented natural transformation $U \epsilon F A: G F G F A \rightarrow G F A$ can be studied in category $C$. From this data, what can we conclude about the objects in category $D$? In response to the natural question of whether every monad can be defined by a pair of adjoint functors, two solutions arose that came about from two different pairs of adjoint functors. These are referred to as the {\em Eilenberg-Moore} category and the {\em Kleisli} category \cite{maclane:71}. 

\subsection*{Codensity Monads and Probability} 

A striking recent finding is that categorical probability structures, such as Giry monads, are in essence {\em codensity monads} that result from extending a certain functor along itself \cite{Avery_2016}. 

\begin{definition}

A {\bf {codensity monad}} $T^{\cal F}$ of a functor ${\cal F}$ is the right Kan extension of ${\cal F}$ along itself (if it exists). The codensity monad inherits the university property from the Kan extension.  

\begin{center}
\begin{tikzcd}[row sep=huge, column sep=huge] 
 \mathcal{C} \arrow[dr, "\mathcal{F}"'{name=F}] 
 \arrow[rr, "\mathcal{F}", ""{name=H, below}] && \mathcal{E} \\ 
 & |[alias=D]| \mathcal{E} \arrow[ur, swap, dashed,
 "\operatorname{T}^{\mathcal{F}}"] 
 \arrow[Rightarrow, to=H, from=D, "\eta",shorten >=1em,shorten <=1em] 
\end{tikzcd} 
\end{center} 

\end{definition} 

Codensity monads can also be written using Yoneda's abstract integral calculus as ends: 

\[ T^{\cal F} e = \int_{c \in C} [{\cal E}(e, {\cal F}c), {\cal F}c]\]

Here, the notation $[A, m]$, where $A$ is any set, and $m$ is any object of a category ${\cal M}$, denotes the product in ${\cal M}$ of $A$ copies of $m$. 

\begin{definition}
    A {\bf convex set} $c$ is a convex subset of a real vector space, where for all $x, y \in c$, and for all $r \in [0,1]$, the convex combination $r x + (1 - r) y \in c$. An {\bf affine} map $h: c \rightarrow c'$ is a function such that $h(x +_r y) = h(x) +_r h(y)$ where $x +_r y = r x + (1 - r) y, r \in [0,1]$. 
\end{definition}

To define categorical probability as codensity monads, we need to  define forgetful functors from the category ${\cal C}'$ of compact convex subsets of $\mathbb{R}{^n}$ with affine maps to the category ${\bf Meas}$ of measurable spaces and measurable functions. In addition, let ${\cal D}'$ be the category ${\cal C}'$ with the object $d_0$ adjoined, where $d_0$ is the convex set of convergent sequences in the unit interval $I = [0,1]$. 

\begin{theorem}\cite{Avery_2016}
    The ${\cal C}'$ be the category of compact convex subsets of $\mathbb{R}^n$ for varying $n$ with affine maps between them, and let ${\cal D}'$ be the same with the object $d_0$ adjoined. Then, the codensity monads of the forgetful functors $U': {\cal C}' \rightarrow {\bf Meas}$ and $V': {\cal D}' \rightarrow {\bf Meas}$ are the finitely additive Giry monad and the Giry monad respectively. 
\end{theorem}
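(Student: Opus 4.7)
The plan is to compute the codensity monad explicitly using the end formula and then identify the resulting object with the (finitely additive) Giry construction. Recall that the codensity monad of a functor $U: \mathcal{C} \to \mathcal{M}$ is given by the end
\[ T^U(X) \;=\; \int_{c \in \mathcal{C}} \bigl[\mathcal{M}(X, Uc),\, Uc \bigr], \]
where $[A, m]$ denotes the $A$-fold power of $m$. So first I would unfold this formula for $U': \mathcal{C}' \to \mathbf{Meas}$ with $X$ a measurable space. An element of $T^{U'}(X)$ is then a family $\{\alpha_c : \mathbf{Meas}(X, U'c) \to U'c\}_{c \in \mathcal{C}'}$, one affine-indexed component per compact convex subset, dinatural in $c$ along affine maps of $\mathcal{C}'$.

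Next I would establish the central bijection: such a dinatural family corresponds precisely to a finitely additive probability measure $\mu$ on $X$, with the correspondence given by integration, $\alpha_c(f) = \int_X f\,d\mu$ for every measurable $f: X \to c$. The forward direction is straightforward: integration against any finitely additive probability yields affine, dinatural assignments because affine maps commute with finite convex combinations, and compactness ensures the integral lies in $c$. The reverse direction is the analytic core of the argument: given $\alpha$, one recovers $\mu$ by restricting to characteristic-function-like measurable maps $X \to [0,1]$ and using that $\mathcal{C}'$ contains enough convex test objects (in particular, finite-dimensional simplices $\Delta^n$) to pin down a finitely additive probability. This step is essentially a Riesz-style representation using that the codomains are compact convex subsets of $\mathbb{R}^n$ with sufficiently many affine functionals.

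Having identified the object assignment, I would next verify the monad structure. The unit $\eta_X : X \to T^{U'}X$ defined by the codensity construction (mapping each $x$ to the dinatural family $f \mapsto f(x)$) corresponds under the bijection to the Dirac measure $\delta_x$, matching the Giry unit. The multiplication is computed by applying the end formula twice and using Fubini-style reasoning on dinatural families: $\mu_X : T^{U'}T^{U'}X \to T^{U'}X$ corresponds to the expectation-of-expectation operation $M \mapsto (B \mapsto \int m(B)\,dM(m))$, which is precisely the Giry multiplication. Checking the monad axioms then reduces to the usual properties of iterated integration against finitely additive probabilities.

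For the second half, the object $d_0$ of convergent sequences in $[0,1]$ is what upgrades finite to countable additivity. The hard part, and the main obstacle, is showing this upgrade rigorously: I would extract, from the dinaturality of $\alpha$ against the affine maps $d_0 \to [0,1]$ sending a sequence to its limit or to any of its terms, a continuity condition on the associated set function. Concretely, for a decreasing measurable sequence $B_n \downarrow \emptyset$, the characteristic functions $\chi_{B_n}$ assemble into a measurable map $X \to d_0$, and dinaturality with the ``limit'' affine map forces $\mu(B_n) \to 0$; this is exactly countable additivity. Once that is established, the same bijection-plus-monad-structure argument from the finitely additive case upgrades $T^{V'}$ to the full Giry monad. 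The delicate step throughout is controlling the size and cardinality of the end so that the integral representation is well-defined, and verifying that $\mathcal{C}'$ (respectively $\mathcal{D}'$) is small enough to admit the right Kan extension while rich enough to separate measures.
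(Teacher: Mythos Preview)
The paper does not supply a proof of this theorem: it is stated with attribution to \cite{Avery_2016} and then immediately used, with no argument given in the text. There is therefore nothing in the paper itself to compare your proposal against.

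For what it is worth, your outline is essentially the strategy of the cited source: compute the codensity monad via the end formula, identify a dinatural family indexed by compact convex sets with a set function on $X$ by evaluating at characteristic maps into $[0,1]$, extract finite additivity from dinaturality along affine maps between simplices, and then use the adjoined object $d_0$ together with the affine limit and projection maps to force continuity from above at $\emptyset$, hence countable additivity. One point to tighten: in the reverse direction you do not need all finite-dimensional simplices to recover $\mu$; the single object $[0,1]$ already suffices to define $\mu(B)=\alpha_{[0,1]}(\chi_B)$, and the higher-dimensional test objects are what deliver additivity. Also, the phrase ``integration against a finitely additive probability'' for a map $f:X\to c\subset\mathbb{R}^n$ hides a small step---you integrate coordinatewise and must then invoke compactness and convexity of $c$ to guarantee the resulting point lies in $c$. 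These are refinements rather than gaps; the skeleton of your argument is sound and matches the original reference.
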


The well-known Giry monad  defines probabilities in both the discrete case and the continuous case (over Polish spaces) in terms of endofunctor on the category of measurable spaces. We can view Transformers as essentially defining a Giri monad over the space of all sequences of tokens representing strings of words in natural language. In effect, Transformers are an end, and there is much more to be described here than we have space in this paper. The complete analysis of the ends of GAIA models is the topic of a subsequent paper.

\section{Homotopy and Classifying Spaces of Generative AI Models}

\label{homotopy}

In this section, we introduce the concept of a {\em classifying space} of a generative AI model, such as a Transformer network or a stable diffusion step or a structured state space sequence model. Each of these define composable morphisms. The sequence of such composable morphisms defines a simplicial set through the nerve functor, and the classifying space corresponds to the topological realization of the simplicial set. This construction of a classifying space is an example of homotopy theory in categories \cite{richter2020categories}, which gives us ways to abstractly compare generative AI models. 

\subsection{Homotopy in Categories} 

To motivate the need to consider {\em homotopical equivalence}, we consider the following problem: a generative AI system can be used to construct summaries of documents, which raises the question of how to decide if a document summary reflects the actual document. If we view a document as an object in a category, then the question becomes one of deciding object equivalence in a looser sense of homotopy, namely is there an invertible transformation between the original document and its summary?  We discuss how to construct the topological embedding of an arbitrary category by embedding it into a simplicial set by constructing its nerve, and then finding the topological embedding of the nerve using the {\em homotopy colimit} \cite{richter2020categories}. First, we discuss the topological embedding of a simplicial set, and formulate it in terms of computing a coend.  As another example, causal generative AI models can only be determined up to some equivalence class from data, and while many causal discovery algorithms assume arbitrary interventions can be carried out to  discover the unique structure, such interventions are generally impossible to do in practical applications. The concept of {\em {essential graph}} \cite{anderson-annals} is based on defining a ``quotient space'' of graphs, but similar issues arise more generally for non-graph based models as well. Thus, it is useful to understand how to formulate the notion of equivalent classes of causal generative AI models in an arbitrary category. For example, given the conditional independence structure $A \CI B | C$, there are at least three different symmetric monoidal categorical representations that all satisfy this conditional independence \cite{fong:ms,string-diagram-surgery,fritz:jmlr}, and we need to define the quotient space over all such equivalent categories. 

In our previous work on causal homotopy \cite{sm:homotopy}, we exploited the connection between causal DAG graphical models and finite topological spaces. In particular, for a DAG model $G = (V, E)$, it is possible to define a finite space topology ${\cal T} = (V, {\cal O})$, whose open sets ${\cal O}$ are subsets of the vertices $V$ such that each vertex $x$ is associated with an open set $U_x$ defined as the intersection of all open sets that contain $x$. This structure is referred to an {\em Alexandroff} topology, which can be shown to emerge from universal representers defined by Yoneda embeddings in generalized metric spaces.  An intuitive way to construct an Alexandroff topology is to define the open set for each variable $x$ by the set of its ancestors $A_x$, or by the set of its descendants $D_x$. This approach transcribes a DAG graph into a finite topological space, upon which the mathematical tools of algebraic topology can be applied to construct homotopies among equivalent causal generative AI models. Our approach below generalizes this construction to simplicial objects, as well as general categories. 

\subsection{The Category of Fractions: Localizing Invertible Morphisms in a Generative AI Category}

One way to pose the question of homotopy is to ask whether a category can be reduced in some way such that all invertible morphisms can be ``localized" in some way. The problem of defining a category with a given subclass of invertible morphisms, called the category of fractions \citep{gabriel1967calculus}, is another concrete illustration of the close relationships between categories and graphs. \citet{borceux_1994} has a detailed discussion of the ``calculus of fractions'', namely how to define a category where a subclass of morphisms are to be treated as isomorphisms. The formal definition is as follows: 

\begin{definition}
Consider a category ${\cal C}$ and a class $\Sigma$ of arrows of ${\cal C}$. The {\bf {category of fractions}} ${\cal C}(\Sigma^{-1})$ is said to exist when a category ${\cal C}(\Sigma^{-1})$ and a functor $\phi: {\cal C} \rightarrow {\cal C}(\Sigma^{-1})$ can be found with the following properties: 

\begin{enumerate}
    \item $\forall f, \phi(f)$ is an isomorphism. 
    \item If ${\cal D}$ is a category, and $F: {\cal C} \rightarrow {\cal D}$ is a functor such that for all morphisms $f \in \Sigma$, $F(f)$ is an isomorphism, then there exists a unique functor $G: {\cal C}(\Sigma^{-1}) \rightarrow {\cal D}$ such that $G \circ \phi = F$. 
\end{enumerate}
\end{definition}

A detailed construction of the category of fractions is given in \cite{borceux_1994}, which uses the underlying directed graph skeleton associated with the category.

\subsection{Homotopy of Simplicial Generative AI Objects}

We will discuss homotopy in categories more generally now.  This notion of homotopy generalizes the notion of homotopy in topology, which defines why an object like a coffee cup is topologically homotopic to a doughnut (they have the same number of ``holes''). 

 \begin{definition}
 Let $C$ and $C'$ be a pair of objects in a category ${\cal C}$. We say $C$ is {\bf {a retract}} of $C'$ if there exists maps $i: C \rightarrow C'$ and $r: C' \rightarrow C$ such that $r \circ i = \mbox{id}_{\cal C}$. 
 \end{definition}
 
 \begin{definition}
 Let ${\cal C}$ be a category. We say a morphism $f: C \rightarrow D$ is a {\bf {retract of another morphism}} $f': C \rightarrow D$ if it is a retract of $f'$ when viewed as an object of the functor category {\bf {Hom}}$([1], {\cal C})$. A collection of morphisms $T$ of ${\cal C}$ is {\bf {closed under retracts}} if for every pair of morphisms $f, f'$ of ${\cal C}$, if $f$ is a retract of $f'$, and $f'$  is in $T$, then $f$ is also in $T$. 
 \end{definition}

 \begin{definition}
  Let X and Y be simplicial sets, and suppose we are given a pair of morphisms $f_0, f_1: X \rightarrow Y$. A {\bf {homotopy}} from $f_0$ to $f_1$ is a morphism $h: \Delta^1 \times X \rightarrow Y$ satisfying $f_0 = h |_{{0} \times X}$ and $f_1 = h_{ 1 \times X}$. 
 \end{definition}

 \subsection*{Classifying Spaces and Homotopy Colimits of Generative AI Models}

Building on the intuition proposed above, we now introduce a construction of a topological space associated with the nerve of a category. As we saw above, the nerve of a category is a full and faithful embedding of a category as a simplicial object. 

\begin{definition}
The {\bf {classifying space}} of a category ${\cal C}$ is the topological space associated with the nerve of the category $|N_\bullet {\cal C}|$
\end{definition}

To understand the classifying space $|N_\bullet {\cal C}|$ of a category ${\cal C}$, let us go over some simple examples to gain some insight. 

\begin{example}
For any set $X$, which can be defined as a discrete category ${\cal C}_X$ with no non-trivial morphisms, the classifying space $|N_\bullet {\cal C}_X|$ is just the discrete topology over $X$ (where the open sets are all possible subsets of $X$). 
\end{example}

\begin{example} 
If we take  a partially ordered set $[n]$, with its usual order-preserving morphisms, then the nerve of $[n]$ is isomorphic to the representable functor $\delta(-, [n])$, as shown by the Yoneda Lemma, and in that case, the classifying space is just the topological space $\Delta_n$ defined above. 
\end{example}

\begin{definition}
The {\bf {homotopy colimit}} of the nerve of the category of elements associated with the set-valued functor $\delta: {\cal C} \rightarrow$ {\bf {Set}} mapping the  category ${\cal C}$ into the category of {\bf Sets}, namely $N_\bullet \left(\int \delta \right)$. 
\end{definition}

 We can extend the above definition straightforwardly to these cases using an appropriate functor ${\cal T}$: {\bf {Set}} $\rightarrow$ {\bf {Top}}, or alternatively ${\cal M}$: {\bf {Set}} $\rightarrow$ {\bf {Meas}}. These augmented constructions can then be defined with respect to a more general notion called the {\em {homotopy colimit}} \cite{richter2020categories} of a generative AI model. 

\begin{definition}
The {\bf  {topological homotopy colimit}} $\mbox{hocolim}_{{\cal T} \circ \delta}$ of a category ${\cal C}$, along with its associated category of elements associated with  a set-valued functor $\delta: {\cal C} \rightarrow$ {\bf {Set}}, and a topological functor ${\cal T}$: {\bf {Set}} $\rightarrow$ {\bf {Top}} is isomorphic to topological space associated with the nerve of the category of elements, that is  $\mbox{hocolim}_{{\cal T} \circ \delta} \simeq|N_\bullet \left(\int \delta \right) |$. 
\end{definition}

\subsection{The Singular Homology of a Generative AI Model} 

Our goal is to define an abstract notion of an object in terms of its underlying classifying space as a category, and show how it can be useful in defining homotopy. We will also clarify how it relates to determining equivalences among objects, namely homotopical invariance, and also how it sheds light on UIGs. We build on the topological realization of $n$-simplices defined above.  Define the set of all morphisms $\mbox{Sing}_n(X) = {\bf Hom}_{\bf Top}(\Delta_n, |{\cal N}_\bullet({\cal C})|)$ as the set of singular $n$-simplices of $|{\cal N}_\bullet({\cal C})|$. 

\begin{definition}
For any topological space defined  by  $|{\cal N}_\bullet({\cal C})|$,  the {\bf {singular homology groups}} {$H_*(|{\cal N}_\bullet({\cal C})|; {\bf Z})$} are defined as the homology groups of a chain complex 
\[ \ldots \xrightarrow[]{\partial} {\bf Z}(\mbox{Sing}_2(|{\cal N}_\bullet({\cal C})|)) \xrightarrow[]{\partial} {\bf Z}(\mbox{Sing}_1(|{\cal N}_\bullet({\cal C})|)) \xrightarrow[]{\partial} {\bf Z}(\mbox{Sing}_0(|{\cal N}_\bullet({\cal C})|)) \] 
where {${\bf Z}(\mbox{Sing}_n(|{\cal N}_\bullet({\cal C})|))$} denotes the free Abelian group generated by the set $\mbox{Sing}_n(|{\cal N}_\bullet({\cal C})|)$ and the differential $\partial$ is defined on the generators by the formula 
\[ \partial (\sigma) = \sum_{i=0}^n (-1)^i d_i \sigma \] 
\end{definition}

Intuitively, a chain complex builds a sequence of vector spaces that can be used to construct an algebraic invariant of a generative AI model from its classifying space  by choosing the left {\bf {k}} module {${\bf Z}$} to be a vector space. Each differential $\partial$ then becomes a linear transformation whose representation is constructed by modeling its effect on the basis elements in \mbox{each {${\bf Z}(\mbox{Sing}_n(X))$.}} 

\begin{example}
Let us illustrate the singular homology groups defined by an integer-valued multiset~\cite{studeny2010probabilistic} used to model conditional independence. Imsets over a DAG of three variables $N = \{a, b, c \} $ can be viewed as a finite discrete topological space. For this topological space $X$, the singular homology groups $H_*(X; {\bf Z})$ are defined as the homology groups of a \mbox{chain complex} 
\[  {\bf Z}(\mbox{Sing}_3(X)) \xrightarrow[]{\partial}  {\bf Z}(\mbox{Sing}_2(X)) \xrightarrow[]{\partial} {\bf Z}(\mbox{Sing}_1(X)) \xrightarrow[]{\partial} {\bf Z}(\mbox{Sing}_0(X)) \] 
where {${\bf Z}(\mbox{Sing}_i(X))$} denotes the free Abelian group generated by the set $\mbox{Sing}_i(X)$ and the differential $\partial$ is defined on the generators by the formula 
\[ \partial (\sigma) = \sum_{i=0}^4 (-1)^i d_i \sigma \] 

The set $\mbox{Sing}_n(X)$ is the set of all morphisms {${\bf Hom}_{Top}(|\Delta_n|, X)$}. For an imset over the three variables $N = \{a, b, c \}$, we can define the singular $n$-simplex $\sigma$ as: 
\[ \sigma: |\Delta^4| \rightarrow X \ \ \mbox{where} \ \ |\Delta^n | = \{t_0, t_1, t_2, t_3 \in [0,1]^4 : t_0 + t_1 + t_2 + t_3 = 1 \} \] 

The $n$-simplex $\sigma$ has a collection of faces denoted as $d_0 \sigma, d_1 \sigma, d_2 \sigma$ and $ d_3 \sigma$.  If we pick the $k$-left module {${\bf Z}$} as the vector space over real numbers $\mathbb{R}$, then the above chain complex represents a sequence of vector spaces that can be used to construct an algebraic invariant of a topological space defined by the integer-valued multiset.  Each differential $\partial$ then becomes a linear transformation whose representation is constructed by modeling its effect on the basis elements in each {${\bf Z}(\mbox{Sing}_n(X))$}. An alternate approach to constructing a chain homology for an integer-valued multiset is to use M\"obius inversion to define the chain complex in terms of the nerve of a category (see our recent work on categoroids \citep{categoroids} for details). 
\end{example}

\section{Summary and Future Work} 

In this paper, we proposed a theoretical blueprint for a ``next-generation"  Generative AI Architecture (GAIA) that potentially lie beyond the scope of what is achievable with compositional learning methods such as backpropagation, the longstanding algorithmic workhorse of deep learning. Backpropagation can be conceptualized as a sequence of modules, where each module updates its parameters based on information it receives from downstream modules, and in turn, transmits information back to upstream modules to guide their updates. GAIA is based on a fundamentally different {\em hierarchical model}. Modules in GAIA are organized into a simplicial complex, much like business units in a company.  Each $n$-simplicial complex acts like a manager: it receives updates from its superiors and transmits information back to its $n+1$ subsimplicial complexes that are its subordinates. To ensure this simplicial generative AI organization behaves coherently, GAIA builds on the  mathematics of the higher-order category theory of simplicial sets and objects. Computations in GAIA, from query answering to foundation model building, are posed in terms of lifting diagrams over simplicial objects.  The problem of machine learning in GAIA is modeled as ``horn" extensions of simplicial sets: each sub-simplicial complex tries to update its parameters in such a way that a lifting diagram is solved. Traditional approaches used in generative AI using backpropagation can be used to solve ``inner" horn extension problems, but addressing ``outer horn" extensions requires a more elaborate framework.

At the top level, GAIA uses the simplicial category of ordinal numbers with objects defined as $[n], n \geq 0$ and arrows defined as weakly order-preserving mappings $f: [n] \rightarrow [m]$, where $f(i) \leq f(j), i \leq j$. This top-level structure can be viewed as a combinatorial ``factory" for constructing, manipulating, and destructing complex objects that can be built out of modular components defined over categories. The second layer of GAIA defines the building blocks of generative AI models as universal coalgebras over categories that can be defined using current generative AI approaches, including Transformers that define a category of permutation-equivariant functions on vector spaces, structured state-space models that define a category over linear dynamical systems, or image diffusion models that define a probabilistic coalgebra over ordinary differential equations. The third layer in GAIA is a category of elements over a (relational) database that defines the data over which foundation models are built. GAIA formulates the machine learning problem of building foundation models as extending functors over categories, rather than interpolating functions on sets or spaces, which yields  canonical solutions called left and right Kan extensions.  GAIA uses the metric Yoneda Lemma to construct universal representers of objects in non-symmetric generalized metric spaces. GAIA uses a categorical integral calculus of (co)ends to define two families of  generative AI systems. GAIA models based on coends correspond to topological generative AI systems, whereas GAIA systems based on ends correspond to probabilistic generative AI systems. 

Much of this paper has been devoted to a theoretical study of the GAIA framework for generative AI. Of course, the actual implementation and testing of GAIA is ultimately the only proof of its practical utility as a computing framework for generative AI. We anticipate the problem of designing GAIA systems is a multi-year (possibly multi-decade!) project, since it requires harnessing many sophisticated mathematical ideas into practical algorithms and hardware implementations. What makes this challenge feasible is the existence of algorithms that solve very restricted types of machine learning  problems that are already using similar technology such as used in GAIA. UMAP \cite{umap} is an elegant dimensionality reduction that constructs a simplicial set from high-dimensional image or textual data, and then constructs functors that map the data into a topological space. In effect, \cite{umap} construct a coend, although that is not the way the paper describes it. But  as \cite{maclane:71} makes clear, topological realizations are in fact coend objects. The algorithm in UMAP works on data in $\mathbb{R}^n$ and it transparently extends to sequence data used by Transformer models, which lie  in $\mathbb{R}^{d \times n}$. There are interesting questions in how to extend UMAP into a full GAIA model by using simplicial learning, which is a topic of a future paper. 

\newpage

\end{document}